\def\compactify{\itemsep=0pt \topsep=0pt \partopsep=0pt \parsep=0pt}
\let\latexusecounter=\usecounter
\newcommand{\Exp}{\mathbb{E}}
\newcommand{\tl}{\textlatin}
\def\poly{\mathrm{poly}}
\def\eps{\varepsilon}
\def\Prob{\mathbb{P}}
\def\Exp{\mathbb{E}}
\newcommand{\reals}{\mathbb{R}}
\def\norm#1{\left\|#1\right\|}
\def\normm#1{\left\|#1\right\|_{\Sigma}}
\def\abs#1{\left|#1\right|}
\newtheorem{theorem}{Theorem}
\newtheorem*{inftheorem}{Informal Theorem}
\newtheorem{proposition}{Proposition}
\newtheorem{remark}{Remark}
\newtheorem{lemma}{Lemma}
\newtheorem*{lemma25}{Lemma 25 of Daskalakis et al. 2016}
\newtheorem*{lemma2.1}{Lemma 2.1 of Wainright 2015}
\renewcommand{\vec}{\bm}
\renewcommand{\l}{\vec{\lambda}}
\renewcommand{\t}{\vec{\tau}}
\newcommand{\m}{\vec{\mu}}
\newcommand{\vx}{\vec{x}}
\newcommand{\vy}{\vec{y}}
\newcommand{\vc}{\vec{c}}
\renewcommand{\v}{\vec{v}}
\newcommand{\vd}{\vec{\delta}}
\definecolor{vergreen}{RGB}{0,85,2}
\definecolor{myvergreen}{RGB}{0,140,3}
\definecolor{provorange}{RGB}{85,34,0}
\definecolor{inputblue}{RGB}{5,13,111}
\definecolor{noapred}{RGB}{116,3,3}
\definecolor{classesblue}{RGB}{9,49,146}
\definecolor{secinhead}{RGB}{249,196,95}
\definecolor{lgray}{gray}{0.8}
\newcommand{\extendedA}[2]{#1}
\begin{document}
\newcounter{tempcounter}

\begin{titlepage}
\title{Ten Steps of EM Suffice for Mixtures of Two Gaussians}
\date{}
\author{
Constantinos Daskalakis \\ EECS and CSAIL, MIT \\ \href{mailto:costis@mit.edu}{costis@mit.edu}
\and
  Christos Tzamos \\ EECS and CSAIL, MIT \\ \href{mailto:tzamos@mit.edu}{tzamos@mit.edu}
	\and Manolis Zampetakis \\ EECS and CSAIL, MIT \\ \href{mailto:mzampet@mit.edu}{mzampet@mit.edu} 
	}
\clearpage
\maketitle
\thispagestyle{empty}
\begin{abstract}
The Expectation-Maximization (EM) algorithm is a widely used method for maximum likelihood estimation in models with latent variables. For estimating mixtures of Gaussians, its iteration can be viewed as a soft version of the k-means clustering algorithm. Despite its wide use and applications, there are essentially no known convergence guarantees for this method. We provide global convergence guarantees for  mixtures of two Gaussians with known covariance matrices. We show that the population version of EM, where the algorithm is given access to infinitely many samples from the mixture, converges geometrically to the correct mean vectors, and provide simple, closed-form expressions for the convergence rate. As a simple illustration, we show that, in one dimension, ten steps of the EM algorithm initialized at infinity result in less than 1\% error estimation of the means. In the finite sample regime, we show that, under a random initialization, $\tilde{O}(d/\epsilon^2)$ samples suffice to compute the unknown vectors to within $\epsilon$ in Mahalanobis distance, where $d$ is the dimension. In particular, the error rate of the EM based estimator is $\tilde{O}\left(\sqrt{d \over n}\right)$ where $n$ is the number of samples, which is optimal up to logarithmic factors.
\end{abstract}
\end{titlepage}

  \section{Introduction}
\label{sec:intro}

The {\em Expectation-Maximization (EM) algorithm}~\cite{DempsterLR77,Wu83,RednerW84} is one of the most widely used heuristics for maximizing likelihood in statistical models with latent variables. Consider a probability distribution $p_{\vec{\lambda}}$ sampling $(\vec{X}, \vec{Z})$, where $\vec{X}$ is a vector of observable random variables, $\vec{Z}$  a vector of non-observable random variables and $\vec{\lambda} \in \vec{\Lambda}$  a vector of parameters. Given independent samples $\vec{x}_1,\ldots,\vec{x}_n$ of the observed random variables, the goal of maximum likelihood estimation is to select $\vec{\lambda} \in \vec{\Lambda}$ maximizing the log-likelihood of the samples, namely $\sum_i \log p_{\vec{\lambda}}(\vec{x}_i)$. Unfortunately, computing $p_{\vec{\lambda}}(\vec{x}_i)$ involves  summing $p_{\vec{\lambda}}(\vec{x}_i,\vec{z}_i)$ over all possible values of $\vec{z}_i$, which commonly results in a log-likelihood function that is non-convex with respect to $\vec{\lambda}$ and therefore hard to optimize. In this context, the EM algorithm proposes the following heuristic:
\begin{itemize}
\item Start with an initial guess $\vec{\lambda}^{(0)}$ of the parameters. 
\item For all $t \ge 0$, until convergence:
\begin{itemize}
\item (E-Step) For each sample $i$, compute the posterior $Q^{(t)}_i(\vec{z}):=p_{\vec{\lambda^{(t)}}}(\vec{Z}=\vec{z}| \vec{X}=\vec{x}_i)$.

\item (M-Step) Set $\vec{\lambda^{(t+1)}} := \arg \max_{\vec{\lambda}} \sum_i \sum_{\vec{z}} Q_i^{(t)}(\vec{z}) \log {p_{\vec{\lambda}}(\vec{x}_i, \vec{z}) \over Q_i^{(t)}(\vec{z})}$.
\end{itemize}
\end{itemize}

Intuitively, the E-step of the algorithm uses the current guess of the parameters, $\vec{\lambda}^{(t)}$, to form beliefs, $Q^{(t)}_i$,  about the state of the (non-observable) $\vec{Z}$ variables for each sample $i$. Then the M-step uses the new beliefs about the state of $\vec{Z}$ for each sample to maximize with respect to $\vec{\lambda}$ a lower bound on $\sum_i \log p_{\vec{\lambda}}(\vec{x}_i)$. Indeed, by the concavity of the $\log$ function, the objective function used in the M-step of the algorithm is a lower bound on the true log-likelihood for all values of $\vec{\lambda}$, and it equals the true log-likelihood for $\vec{\lambda}=\vec{\lambda}^{(t)}$. From these observations, it follows that the above alternating procedure improves the true log-likelihood until convergence. 

Despite its wide use and practical significance, little is known about whether and under what conditions EM converges to the true maximum likelihood estimator. A few works establish local convergence of the algorithm to stationary points of the log-likelihood function~\cite{Wu83,Tseng04,ChretienH08}, and even fewer local convergence to the MLE~\cite{RednerW84,BalakrishnanWW14}. Besides local convergence, it is also known that badly initialized EM may settle far from the MLE both in parameter and in likelihood distance~\cite{Wu83}. The lack of theoretical understanding of the convergence properties of EM is intimately related to the non-convex nature of the optimization it performs. 

{ Our paper aims to illuminate why EM works well in practice and develop techniques for understanding its behavior. We do so by analyzing one of the  most basic and natural, yet still challenging, statistical models EM may be applied to, namely balanced mixtures of two multi-dimensional Gaussians with equal and known covariance matrices.} In particular, we study the convergence of EM when applied to the following family of parametrized density functions:
$$p_{\vec{\mu}_1,\vec{\mu}_2}(\vec{x}) = 0.5 \cdot {\cal N}(\vec{x} ; \vec{\mu}_1, \Sigma) + 0.5 \cdot {\cal N}(\vec{x} ; \vec{\mu}_2, \Sigma),$$
 where $\Sigma$ is a known covariance matrix, $(\vec{\mu}_1,\vec{\mu}_2)$ are unknown (vector) parameters, and $\mathcal{N}(\vec{\mu}, \Sigma; \vec{x})$ represents the Gaussian density with mean $\vec{\mu}$ and covariance matrix $\Sigma$, i.e. 
\[ \mathcal{N}(\vec{x};\vec{\mu}, \Sigma) = \frac{1}{\sqrt{2 \pi \det{\Sigma}}} \exp{\left( - 0.5 (\vec{x} - \vec{\mu})^T \Sigma^{-1} (\vec{x} - \vec{\mu}) \right)}.\]

Our main contribution is to  provide global convergence guarantees for EM applied to the above family of distributions. We establish our result for both the  ``population version'' of the algorithm, and the finite-sample version, as described below. 

\paragraph{Analysis of Population EM for Mixtures of Two Gaussians.} To elucidate the optimization features of the algorithm and avoid analytical distractions arising due to sampling error, it has been standard practice in the literature of theoretical analyses of EM to consider the ``population version'' of the algorithm, where the EM iterations are performed assuming access to infinitely many samples from a distribution $p_{\vec{\mu}_1,\vec{\mu}_2}$ as above. With infinitely many samples, we can identify the mean, ${\vec{\mu}_1 +\vec{\mu}_2 \over 2}$, of $p_{\vec{\mu}_1,\vec{\mu}_2}$, and re-parametrize the density around the mean as follows:
\begin{align}
p_{\vec{\mu}}(\vec{x}) = 0.5 \cdot {\cal N}(\vec{x} ; \vec{\mu}, \Sigma) + 0.5 \cdot {\cal N}(\vec{x} ; -\vec{\mu}, \Sigma). \label{eq:symmetrized density}
\end{align}

We first study the convergence of EM when we perform iterations with respect to the parameter $\vec{\mu}$ of $p_{\vec{\mu}}(\vec{x})$ in~\eqref{eq:symmetrized density}. Starting with an initial guess $\l^{(0)}$ for the unknown mean vector $\vec{\mu}$, the $t$-th iteration of EM amounts to the following update:
\begin{equation}
\vec{\lambda}^{(t + 1)} = M(\vec{\lambda}^{(t)}, \vec{\mu}) \triangleq \frac{\Exp_{\vec{x} \sim p_{\m}}\left[ \frac{0.5 \mathcal{N}(\vec{x};\l^{(t)}, \Sigma )}{p_{\l^{(t)}}(\vec{x})} \vec{x} \right]}{\Exp_{\vec{x} \sim p_{\m}}\left[ \frac{0.5\mathcal{N}(\vec{x};\l^{(t)}, \Sigma)}{p_{\l^{(t)}}(\vec{x})} \right]}, \label{eq:EM update raw}
\end{equation}
where we have compacted both the E- and M-step of EM into one update. 

The intuition behind the EM update formula is as follows. First, we take expectations with respect to $\vec{x} \sim p_{\m}$ because we are studying the population version of EM, hence we assume access to infinitely many samples from $p_{\m}$. For each sample $\vec{x}$, the ratio $\frac{0.5 \mathcal{N}(\vec{x};\l^{(t)}, \Sigma )}{p_{\l^{(t)}}(\vec{x})}$ is our belief, at step $t$, that $\vec{x}$ was sampled from the first Gaussian component of $p_{\m}$, namely the one for which our current estimate of its mean vector is $\l^{(t)}$. (The complementary probability is our present belief that $\vec{x}$ was sampled from the other Gaussian component.) Given these beliefs for all vectors $\vec{x}$, the update~\eqref{eq:EM update raw} is the result of the M-step of EM. Intuitively, our next guess $\l^{(t+1)}$ for the mean vector of the first Gaussian component is a weighted combination over all samples $\vec{x} \sim p_{\m}$ where the weight of every $\vec{x}$ is our belief that it came from the first Gaussian component. 

Our main result for population-EM is the following:

\begin{inftheorem}[Population EM Analysis] \label{infthm:population} Whenever the initial guess $\l^{(0)}$ is not equidistant to $\m$ and $-\m$, EM converges geometrically to either $\m$ or $-\m$, with convergence rate that improves as $t \rightarrow \infty$. We provide a simple, closed form expression of the convergence rate as a function of $\l^{(t)}$ and $\m$. If the initial guess $\l^{(0)}$ is equidistant to $\m$ and $-\m$, EM converges to the unstable fixed point $\bf 0$.
\end{inftheorem}

\extendedA{A formal statement is provided as Theorem~\ref{thm:multid} in Section~\ref{sec:multid}. We start with the proof of the single-dimensional version, presented as Theorem~\ref{thm:singled} in Section~\ref{sec:singled}. As a simple illustration of our result, we show in Section~\ref{sec:quan} that, in one dimension, when our original guess $\lambda^{(0)} = +\infty$ and the signal-to-noise ratio $\mu/\sigma=1$, $10$ steps of the EM algorithm result in $1\%$ error.}{A formal statement of Informal Theorem \ref{infthm:population} is provided in the full version of the paper. As a simple illustration of our result, we show that, in one dimension, when our original guess $\lambda^{(0)} = +\infty$ and the signal-to-noise ratio $\mu/\sigma=1$, $10$ steps of the EM algorithm result in $1\%$ error.}

Despite the simplicity of the case we consider, no global convergence results were known prior to our work, even for the population EM. \cite{BalakrishnanWW14} studied the same setting proving only local convergence, i.e. convergence only when the initial guess is close to the true parameters. They argue that the population EM update is contracting close to the true parameters. Unfortunately, the EM update is non-contracting outside a small neighborhood of the true parameters so this argument cannot be used for a global convergence guarantee.

 In this work, we study the problem under arbitrary starting points and completely characterize the fixed points of EM. We show that other than a measure-zero subset of the space (namely points that are equidistant from the centers of the two Gaussians), any initialization of the EM algorithm converges  to the true centers of the Gaussians, providing explicit bounds for the convergence rate. To achieve this, we follow an orthogonal approach to \cite{BalakrishnanWW14}: Instead of trying to directly compute the number of steps required to reach convergence {\em for a specific instance of the problem}, we study {\em the sensitivity of the EM iteration as the instance varies}. The intuition is that if the EM update is sensitive to updating the instance, then changing the instance should also attract the update towards the changing instance; see Figure~\ref{fig:sensitivity}. We can use this, in turn, to argue that keeping the instance fixed, one EM update makes progress towards the true parameters. In particular, we gain a handle on the convergence rate of EM on  all instances at once.\extendedA{ This is quantified by Eq.~\eqref{eq: sensitivity wrt instance}.}{}

\begin{figure}[!h]
	\centering
		\includegraphics[scale=0.43]{./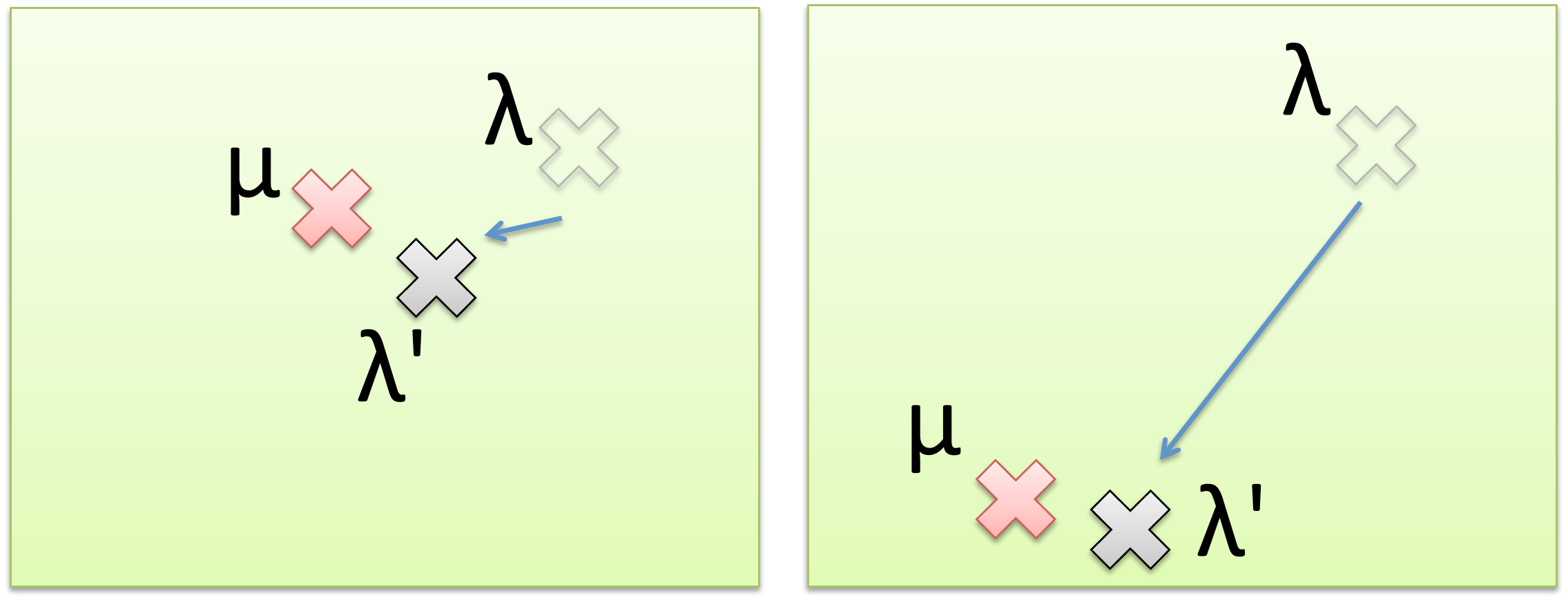}
	\caption{Sensitivity of the EM update when changing the true parameters. Large sensitivity implies large progress towards the true parameters.\extendedA{ See Eq.~\eqref{eq: sensitivity wrt instance} for a quantification.}{}}
	\label{fig:sensitivity}
\end{figure}

\paragraph{Analysis of Finite-Sample EM for Mixtures of Two Gaussians.}  The finite sample analysis proceeds in three steps. First, in the finite sample regime we do not know the average of the two mean vectors, $(\vec{\mu}_1 + \vec{\mu}_2)/2$, exactly. We show that, with $\tilde{O}(d/\epsilon^2)$ samples, we can approximate the average to within Mahalanobis distance $\epsilon$. We then chain two coupling arguments. The first compares the progress towards the true mean made by the correctly centered population EM update to that of the incorrectly centered population EM update. The second compares the progress towards the true mean made by the incorrectly centered population EM update with the progress made by the incorrectly centered finite sample EM update. \extendedA{See Figure~\ref{fig:graph2} and Theorem~\ref{thm:sampleBased}}{See Figure~\ref{fig:graph2}}. 
Given the error incurred in the approximation of the center $(\vec{\mu}_1 + \vec{\mu}_2)/2$, we propose to stabilize the sample-based EM iteration by including in the sample for each sampled point $\vx_i$ its symmetric point $-\vx_i$. This is the sample based version that we analyze, although our analysis goes through without this stabilization. \extendedA{Our result is the following, formally given as Theorem~\ref{thm:sampleBased} in Section~\ref{sec:sample}}{Our result is the following, formally given in the full version}.

\begin{inftheorem}[Finite Sample EM Analysis] Whenever $\epsilon < SNR$, $\tilde{O}(d/\epsilon^2 \cdot {\rm poly}(1/SNR))$ samples suffice to approximate $\m_1$ and $\m_2$ to within Mahalanobis distance $\epsilon$ using the EM algorithm. In particular, the error rate of the EM based estimator is $\tilde{O}\left(\sqrt{d \over n}\right)$ where $n$ is the number of samples, which is optimal up to logarithmic factors.\footnote{Note that even if $SNR$ is arbitrarily large (so that the two Gaussian components are ``perfectly separated'') the problem degenerates to finding the mean of one Gaussian whose optimal rate is $\Omega\left(\sqrt{d \over n}\right)$.}
\end{inftheorem}

\paragraph{Bootstrapping EM for Faster Convergence.} We note that, in multiple dimensions, care must be taken in initializing the EM algorithm, even in the infinite sample regime, as the convergence guarantee depends on the angle between the current iterate and the true mean vector. While a randomly chosen unit vector will have projection of $\Theta(1/\sqrt{d})$ in the direction of $\m$, we argue that we can boostrap EM to turn this projection larger than a constant. This allows us to work with similar convergence rates as in the single-dimensional case, namely only SNR (and not dimension) dependent.\extendedA{ Our initialization procedure is described in Section~\ref{sec:initialization}.}{}

\begin{inftheorem}[EM Initialization] EM can be boostrapped so that the number of iterations required to approximate $\m_1$ and $\m_2$ to within Mahalanobis distance $\epsilon$ depends logarithmically in the dimension.
\end{inftheorem}

\begin{figure}
	\centering
  \subfigure[]{\includegraphics[scale=0.4]{./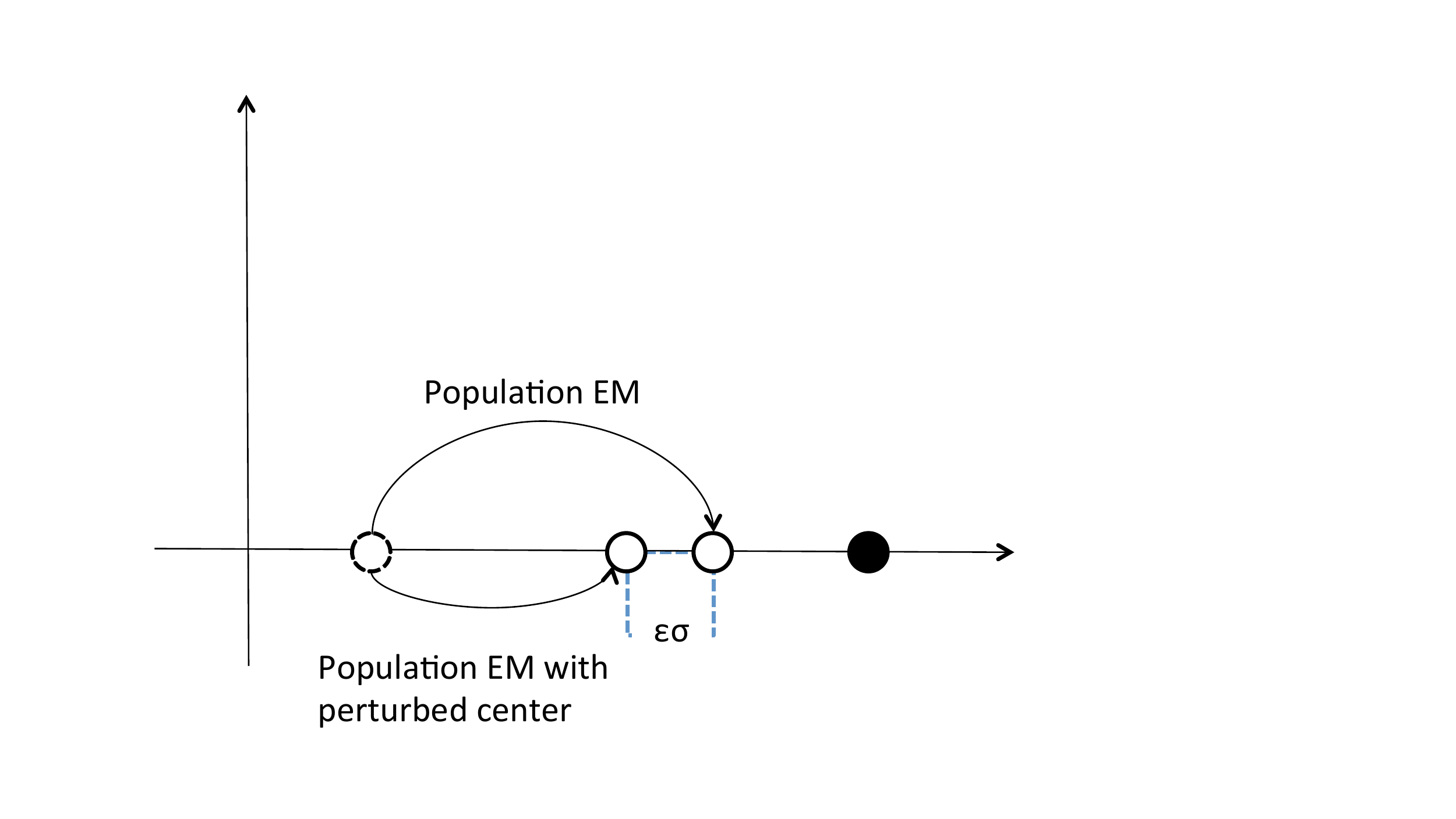}} ~~~
  \subfigure[]{\includegraphics[scale=0.4]{./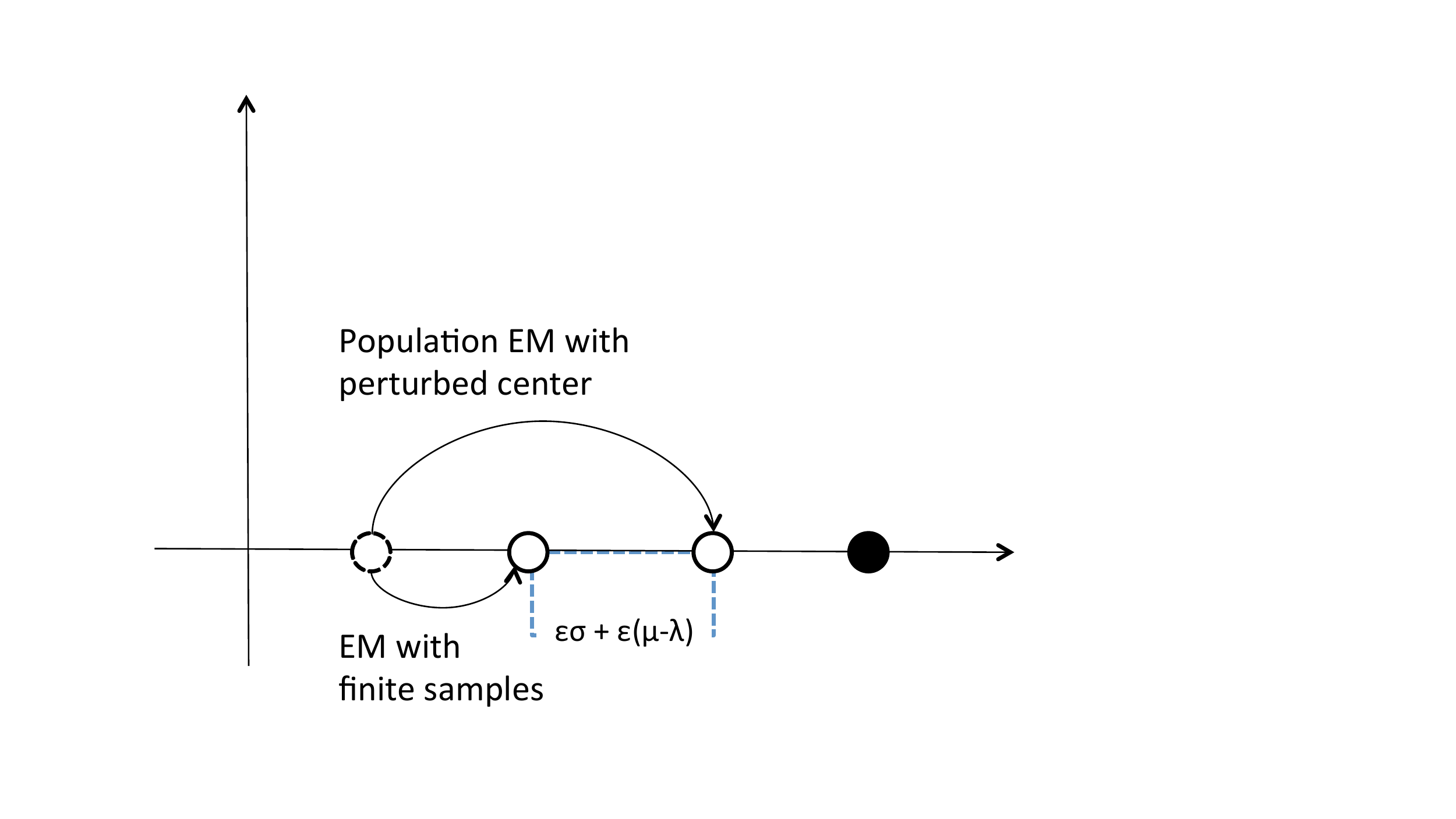}}

  \caption{(a) Coupling  correctly and incorrectly centered population EM updates. We show that, starting from the same iterate, the correctly and incorrectly centered population EM updates will land to close-by points.\extendedA{ This quantified by Eq.~\eqref{eq:SBEMpopulContr3}.}{} (b) Coupling  incorrectly centered population EM and finite sample EM updates. We show that, starting from the same iterate, the incorrectly centered population EM update and the finite sample update land to close-by points.\extendedA{ This quantified by Lemma~\ref{lem:basicConcentration}.}{}}
  \label{fig:graph2}
\end{figure}

\paragraph{Related Work on Learning Mixtures of Gaussians.}

We have already outlined the literature on the Expectation-Maximization algorithm. Several results study its local convergence properties and there are known cases where badly initialized EM fails to converge. See above.

There is also a large body of literature on learning mixtures of Gaussians.  A long line of work initiated by Dasgupta~\cite{dasgupta1999learning, sanjeev2001learning, vempala2004spectral, achlioptas2005spectral,kannan2005spectral,DasguptaS07,chaudhuri2008learning,brubaker2008isotropic,chaudhuri2009learning} provides rigorous guarantees on recovering the parameters of Gaussians in a mixture under separability assumptions, while later work~\cite{kalai2010efficiently, moitra2010settling, belkin2010polynomial} has established guarantees under minimal information theoretic assumptions. More recent work~\cite{hardt2015tight} provides tight bounds on the number of samples necessary to recover the parameters of the Gaussians as well as improved algorithms, while another strand of the literature studies proper learning with improved running times and sample sizes~\cite{suresh2014near,daskalakis2014faster}. Finally, there has been work on methods exploiting general position assumptions or performing smoothed analysis~\cite{HsuK13,GeHK15}.

In practice, the most common algorithm for learning mixtures of Gaussians is the Expectation-Maximization algorithm, with the practical experience that it performs well in a broad range of scenarios despite the lack of theoretical guarantees. Recently, Balakrishnan, Wainwright and Yu~\cite{BalakrishnanWW14} studied the convergence of EM in the case of an equal-weight mixture of two Gaussians with the same and known covariance matrix, showing local convergence guarantees. In particular, they show that when EM is initialized close to the actual parameters, then it converges. In this work, we revisit the same setting considered by~\cite{BalakrishnanWW14} but establish {\em global convergence guarantees}. We show that, for any initialization of the parameters, the EM algorithm converges geometrically to the true parameters. We also provide a simple and explicit formula for the rate of convergence. 

Concurrent and independent work by Xu, Hsu and Maleki~\cite{xu2016global} has also provided global and geometric convergence guarantees for the same setting, as well as a slightly more general setting where the mean of the mixture is unknown, but they do not provide explicit convergence rates. They also do not provide an analysis of the finite-sample regime. 
  \section{Preliminary Observations}
\label{sec:model}

In this section we illustrate some simple properties of the EM update~\eqref{eq:EM update raw} and simplify the formula. First, it is easy to see that plugging in the values $\l \in \{ -\m, \vec{0}, \m\}$ into $M(\l, \m)$ results into
\begin{equation}
    \label{eq:values}
    M(-\m,\m)=-\m \quad; \quad M(\vec{0},\m)=\vec{0} \quad; \quad M(\m,\m) = \m.
\end{equation}
In particular, for all $\m$, these values are certainly fixed points of the EM iteration. Next, we  rewrite $M(\l,\m)$ as follows:
\[ M(\l, \m) = 
\frac {
  \frac{1}{2} \Exp_{\vec{x} \sim \mathcal{N}(\m, \Sigma)}\left[ \frac{0.5 \mathcal{N}(\vec{x};\l, \Sigma)}{p_{\l}(\vec{x})} \vec{x} \right] +
  \frac{1}{2} \Exp_{\vec{x} \sim \mathcal{N}(-\m, \Sigma)}\left[ \frac{0.5 \mathcal{N}(\vec{x};\l, \Sigma)}{p_{\l}(\vec{x})} \vec{x} \right]
} {
  \frac{1}{2} \Exp_{\vec{x} \sim \mathcal{N}(\m, \Sigma)}\left[ \frac{0.5 \mathcal{N}(\vec{x};\l, \Sigma)}{p_{\l}(\vec{x})} \right] +
  \frac{1}{2} \Exp_{\vec{x} \sim \mathcal{N}(-\m, \Sigma)}\left[ \frac{0.5 \mathcal{N}(\vec{x};\l, \Sigma)}{p_{\l}(\vec{x})} \right]
}. 
\]
It is easy to observe that by symmetry this simplifies to
\[ M(\l, \m) = 
\frac {
  \frac{1}{2} \Exp_{\vec{x} \sim \mathcal{N}(\m, \Sigma)}\left[ 
    \frac {
      \frac{1}{2} \mathcal{N}(\vec{x};\l, \Sigma) - \frac{1}{2} \mathcal{N}(\vec{x};-\l, \Sigma)
    }{
      \frac{1}{2} \mathcal{N}(\vec{x};\l, \Sigma) + \frac{1}{2} \mathcal{N}(\vec{x};-\l, \Sigma)
    } \vec{x} 
  \right]
} {
  \frac{1}{2} \Exp_{\vec{x} \sim \mathcal{N}(\m, \Sigma)}\left[ 
    \frac {
      \frac{1}{2} \mathcal{N}(\vec{x};\l, \Sigma) + \frac{1}{2} \mathcal{N}(\vec{x};-\l, \Sigma)
    }{
      \frac{1}{2} \mathcal{N}(\vec{x};\l, \Sigma) + \frac{1}{2} \mathcal{N}(\vec{x};-\l, \Sigma)
    }
  \right]
} =
  \Exp_{\vec{x} \sim \mathcal{N}(\m, \Sigma)}\left[ 
    \frac {
      \mathcal{N}(\vec{x};\l, \Sigma) - \mathcal{N}(\vec{x};-\l, \Sigma)
    }{
      \mathcal{N}(\vec{x};\l, \Sigma) + \mathcal{N}(\vec{x};-\l, \Sigma)
    } \vec{x} 
  \right].
\]
Simplifying common terms in the density functions $\mathcal{N}(\vec{x};\l, \Sigma)$, we get that
\[ M(\l, \m) = \Exp_{\vec{x} \sim \mathcal{N}(\m, \Sigma)}\left[ 
\frac{
  \exp{\left( \l^T \Sigma^{-1} \vec{x} \right)} - \exp{ \left( - \l^T \Sigma^{-1} \vec{x} \right)}
}{
  \exp{\left( \l^T \Sigma^{-1} \vec{x} \right)} + \exp{\left( - \l^T \Sigma^{-1} \vec{x} \right)}
} \vec{x} \right]. \]
We thus get the following expression for the EM iteration
\begin{equation}\label{eq:EMiteration}
M(\vec{\lambda}, \vec{\mu}) = \Exp_{\vec{x} \sim \mathcal{N}(\m, \Sigma)}\left[ \tanh( \l^T \Sigma^{-1} \vec{x} ) \vec{x} \right].
\end{equation}

  \section{Single-dimensional Convergence}
\label{sec:singled}

  In the single dimensional case the EM algorithm takes the following form according to (\ref{eq:EMiteration}).
\begin{equation}
\label{eq:EMiteration1D}
\lambda^{(t + 1)} = M(\lambda^{(t)}, \mu) = \Exp_{x \sim \mathcal{N}(\mu, \sigma^2)}\left[ \tanh\left( \frac{\lambda^{(t)} x}{\sigma^2} \right) x \right]
\end{equation}

  Observe that the function $M(\lambda, \mu)$ is increasing with respect to $\lambda$. Indeed the partial derivative of $M$ with respect to $\lambda$ is
  \[ \frac{\partial M(\lambda, \mu)}{\partial \lambda} = \Exp_{x \sim \mathcal{N}(\mu, \sigma^2)}\left[ \tanh'\left( \frac{\lambda^{(t)} x}{\sigma^2} \right) \frac{x^2}{\sigma^2} \right] \]
\noindent which is strictly greater than zero since the $\tanh'$ function is strictly positive. 

We will show next that the fixed points we identified at (\ref{eq:values}) are the only fixed points of $M(\cdot, \mu)$. When initialized with $\lambda^{(0)} > 0$ (resp. $\lambda^{(0)} < 0$), the EM algorithm converges to $\mu > 0$ 
(resp. to $-\mu < 0$). The point $\lambda = 0$ is an unstable fixed point.

\begin{theorem}
\label{thm:singled}
  In the single dimensional case, when $\lambda^{(0)}, \mu > 0$, the parameters $\lambda^{(t)}$ satisfy
  \[ \abs{\lambda^{(t + 1)} - \mu} \le \kappa^{(t)} \abs{\lambda^{(t)} - \mu} 
\quad \text{ where }
 \kappa^{(t)} = \exp\left( - \frac{ \min(\lambda^{(t)}, \mu)^2 }{2 \sigma^2} \right) \]
Moreover $\kappa^{(t)}$ is a decreasing function of $t$.
\end{theorem}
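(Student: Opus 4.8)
The plan is to use the sensitivity of the update to the \emph{instance} $\mu$, rather than to the iterate $\lambda$ (cf.\ Figure~\ref{fig:sensitivity}). The starting point is that \eqref{eq:values} holds for every value of the mean, so $M(s,s)=s$ for all $s>0$; in particular $M(\lambda,\lambda)=\lambda$. Fixing the first argument at $\lambda$ and integrating the second argument from $\lambda$ to $\mu$ by the fundamental theorem of calculus gives
\begin{equation*}
M(\lambda,\mu)-\mu = \big(M(\lambda,\lambda)-\mu\big) + \int_{\lambda}^{\mu}\frac{\partial M}{\partial \mu}(\lambda,s)\,ds = (\lambda-\mu)+\int_{\lambda}^{\mu}\frac{\partial M}{\partial \mu}(\lambda,s)\,ds = -\int_{\lambda}^{\mu}\Big(1-\frac{\partial M}{\partial \mu}(\lambda,s)\Big)\,ds .
\end{equation*}
Thus the whole theorem reduces to controlling the single quantity $1-\frac{\partial M}{\partial\mu}$, which measures exactly the sensitivity of the EM map to perturbing the instance.

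Next I would put this quantity in a usable closed form. In \eqref{eq:EMiteration1D} the only $\mu$-dependence sits in the Gaussian density, whose derivative is $\frac{\partial}{\partial\mu}\mathcal N(x;\mu,\sigma^2)=\frac{x-\mu}{\sigma^2}\mathcal N(x;\mu,\sigma^2)$. Differentiating under the integral sign and using the elementary identity $\frac{1}{\sigma^2}\Exp_{x\sim\mathcal N(\nu,\sigma^2)}[x(x-\nu)]=1$ to absorb the leading $1$, I would rewrite, for any second argument $\nu>0$,
\begin{equation*}
1-\frac{\partial M}{\partial \mu}(\lambda,\nu) = \frac{1}{\sigma^2}\,\Exp_{x\sim\mathcal N(\nu,\sigma^2)}\!\left[\,x(x-\nu)\Big(1-\tanh\big(\tfrac{\lambda x}{\sigma^2}\big)\Big)\right].
\end{equation*}

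The crux — and the step I expect to be the main obstacle — is the two-sided estimate
\begin{equation*}
0 \le 1-\frac{\partial M}{\partial \mu}(\lambda,\nu) \le \exp\!\Big(-\tfrac{\Min(\lambda,\nu)^2}{2\sigma^2}\Big),\qquad \lambda,\nu>0 .
\end{equation*}
After normalizing $\sigma=1$, the factor $1-\tanh(\lambda x)=2/(1+e^{2\lambda x})$ is bounded by $2$ and decays like $2e^{-2\lambda x}$ as $x\to+\infty$, so splitting the expectation at $x=0$ isolates two sources of smallness: on $x<0$ the shifted Gaussian contributes only tail mass, controlled by $\Prob_{\mathcal N(\nu,1)}(x<0)\le\tfrac12 e^{-\nu^2/2}$, while on $x>0$ the $\tanh$-suppression offsets the polynomial growth of $x(x-\nu)$; balancing the two regimes is exactly what produces the $\Min(\lambda,\nu)$ in the exponent (as a sanity check, letting $\lambda\to\infty$ sends the bracket to $2\Prob(x<0)\le e^{-\nu^2/2}$, and letting $\lambda\to 0$ sends both sides to $1$). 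The delicate half is the \emph{nonnegativity}: the integrand is not pointwise signed, and indeed $u\mapsto \tanh u+u\,\mathrm{sech}^2 u$ exceeds $1$ for moderate $u$, so the inequality is genuinely an averaging statement. I would attack it by pairing $x$ with its reflection $2\nu-x$ (which preserves $\mathcal N(\nu,\sigma^2)$), reducing the claim to a scalar inequality in the paired variable that can be checked by elementary analysis; verifying that the negative contributions are always dominated is where the real work lies.

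Granting the estimate, the theorem follows quickly. For every $s$ between $\lambda$ and $\mu$ one has $\Min(\lambda,s)\ge\Min(\lambda,\mu)$, hence $1-\frac{\partial M}{\partial\mu}(\lambda,s)\le \exp\!\big(-\Min(\lambda,\mu)^2/2\sigma^2\big)=\kappa^{(t)}$ with $\lambda=\lambda^{(t)}$; substituting the nonnegative, $\kappa^{(t)}$-bounded integrand into the identity above yields $\abs{M(\lambda^{(t)},\mu)-\mu}\le \kappa^{(t)}\abs{\lambda^{(t)}-\mu}$, which is the claimed contraction. Finally, nonnegativity shows the integrand never changes sign, so $\lambda^{(t+1)}$ cannot overshoot $\mu$: combined with the monotonicity of $M$ in $\lambda$ noted before the statement, if $\lambda^{(t)}<\mu$ then $\lambda^{(t)}\le\lambda^{(t+1)}\le\mu$, and if $\lambda^{(t)}>\mu$ then $\lambda^{(t+1)}\ge\mu$. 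In either case $\Min(\lambda^{(t)},\mu)$ is nondecreasing in $t$, so $\kappa^{(t)}=\exp\!\big(-\Min(\lambda^{(t)},\mu)^2/2\sigma^2\big)$ is nonincreasing, establishing the last assertion.
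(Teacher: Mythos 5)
Your skeleton coincides with the paper's: exploit the fact that $M(s,s)=s$ for every instance $s$, integrate the sensitivity $\partial M/\partial\mu$ of the update with respect to the instance over $[\lambda,\mu]$ (you use the fundamental theorem of calculus where the paper uses the Mean Value Theorem --- equivalent), and reduce the theorem to bounding $1-\frac{\partial M}{\partial\mu}$. Your closed form $1-\frac{\partial M}{\partial\mu}(\lambda,\nu)=\frac{1}{\sigma^2}\Exp_{x\sim\mathcal N(\nu,\sigma^2)}[x(x-\nu)(1-\tanh(\lambda x/\sigma^2))]$ is correct and is the integration-by-parts twin of the paper's expression $\Exp[\frac{\lambda}{\sigma^2}\tanh'(\cdot)(X+\xi)+\tanh(\cdot)]$. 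The gap is that the crux --- the two-sided estimate $0\le 1-\frac{\partial M}{\partial\mu}(\lambda,\nu)\le\exp(-\min(\lambda,\nu)^2/2\sigma^2)$ --- is only sketched, and you yourself flag both halves as the remaining work. Splitting at $x=0$ does not painlessly produce the clean, constant-free exponential: the $x<0$ piece involves $\Exp[x(x-\nu)\mathbf 1_{x<0}]$, whose naive bound carries polynomial prefactors in $\nu$ and $\lambda$; and the nonnegativity half is, as you concede, a non-pointwise averaging statement (the integrand $\tanh u+u\,\mathrm{sech}^2 u$ does exceed $1$) whose reflection-pairing proof you do not carry out. As written, the central inequality of the theorem is asserted, not proved.

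It is worth seeing how the paper closes exactly these two gaps more cheaply. For the upper bound, instead of $x(x-\nu)(1-\tanh)$ it keeps the split $\frac{\partial M}{\partial y}=\Exp[\frac{\lambda}{\sigma^2}\tanh'(\cdot)(X+\xi)]+\Exp[\tanh(\cdot)]$: the first term is nonnegative by a one-line reflection argument (Lemma~\ref{lem:tanhdotlemma}), and the second is at least $1-\exp(-\min(\xi,\lambda)\xi/2\sigma^2)$ by dominating $1-\tanh(u)=\frac{2}{e^{2u}+1}$ with $e^{-u}$ and evaluating a Gaussian moment generating function (Lemma~\ref{lem:tanhlemma}) --- no case analysis, no tail-versus-growth balancing. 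For the nonnegativity, the paper never proves $\frac{\partial M}{\partial\mu}\le 1$ at all: the only thing that claim buys you is that $\lambda'$ does not overshoot $\mu$, and that follows immediately from the monotonicity of $M$ in its \emph{first} argument (noted just before the theorem) together with $M(\mu,\mu)=\mu$, since $\lambda'=M(\lambda,\mu)\le M(\mu,\mu)=\mu$ when $\lambda\le\mu$. Combined with the one-sided lower bound on $\frac{\partial M}{\partial\mu}$, this already gives both the contraction and the monotonicity of $\kappa^{(t)}$. In short, the hardest step in your plan is a detour you can delete, and the step you do need should be proved via the paper's two lemmas rather than the sketched splitting argument.
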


\begin{proof}
  For simplicity we will use $\lambda$ for $\lambda^{(t)}$, $\lambda'$ for $\lambda^{(t + 1)}$ and we will assume that $X \sim \mathcal{N}(0, \sigma^2)$.

  By a simple change of variables we can see that  
\[ M(\lambda, \mu) = \Exp\left[ \tanh\left( \frac{\lambda (X + \mu)}{\sigma^2} \right) (X + \mu) \right] \]

The main idea is to use the Mean Value Theorem with respect to the second coordinate of the function $M$ on the interval $[\lambda, \mu]$.
\[ \frac{M(\lambda, \mu) - M(\lambda, \lambda)}{\mu - \lambda} = \left. \frac{\partial M(\lambda, y)}{\partial y} \right|_{y = \xi} \text{ with } \xi \in (\lambda, \mu) \]

But we know that $M(\lambda, \lambda) = \lambda$ and $M(\lambda, \mu) = \lambda'$ and therefore we get
\[ \lambda' - \lambda \ge \left( \min_{\xi \in [\lambda, \mu]} \left. \frac{\partial M(\lambda, y)}{\partial y} \right|_{y = \xi} \right) (\mu - \lambda) \]
which is equivalent to
\begin{align} \abs{\lambda' - \mu} \le \left( 1 - \min_{\xi \in [\lambda, \mu]} \left. \frac{\partial M(\lambda, y)}{\partial y} \right|_{y = \xi} \right) \abs{\lambda - \mu} \label{eq: sensitivity wrt instance} 
\end{align}
where we have used the fact that $\lambda' < \mu$ which is comes from the fact that $M(\lambda, \mu)$ is increasing with respect to $\lambda$ and that $M(\mu, \mu) = \mu$.

The only thing that remains to complete our proof is to prove a lower bound of the partial derivative of $M$ with respect to $\mu$.
\[ \left. \frac{\partial M(\lambda, y)}{\partial y} \right|_{y = \xi} = \Exp \left[ \frac{\lambda}{\sigma^2} \tanh'\left( \frac{\lambda (X + \xi)}{\sigma^2} \right) (X + \xi) + \tanh \left( \frac{\lambda (X + \xi)}{\sigma^2} \right) \right] \]

The first term is non-negative, Lemma \ref{lem:tanhdotlemma}. The second term is at least $1 - \exp \left[ - \frac{ \min(\xi, \lambda) \cdot \xi }{ 2 \sigma^2} \right]$, Lemma \ref{lem:tanhlemma} and the theorem follows.
\end{proof}

\begin{lemma}
  \label{lem:tanhdotlemma}
  Let $\alpha, \beta > 0$ and $X \sim \mathcal{N}(\alpha , \sigma^2)$ then $\Exp \left[ \tanh ' \left( {\beta X / \sigma^2} \right) X \right] \ge 0$.
\end{lemma}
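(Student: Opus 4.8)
The plan is to exploit two elementary structural facts about the integrand and reduce everything to a sign comparison of two Gaussian densities. First, recall that $\tanh'(u) = 1 - \tanh^2(u) = \operatorname{sech}^2(u)$, which is a strictly positive and \emph{even} function of $u$. Hence, writing $g(x) \triangleq \tanh'(\beta x / \sigma^2)$, we have $g(x) \ge 0$ and $g(-x) = g(x)$ for all $x$ (here we use $\beta > 0$ only to know the argument scales monotonically, but evenness is what matters). Let $\phi_\alpha$ denote the density of $\mathcal{N}(\alpha, \sigma^2)$; the quantity to control is
\[
\Exp\left[ \tanh'\!\left( \beta X / \sigma^2 \right) X \right] = \int_{-\infty}^{\infty} g(x)\, x\, \phi_\alpha(x)\, dx.
\]

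The key step is to fold the integral onto the positive half-line. Splitting at $0$ and substituting $x \mapsto -x$ in the integral over $(-\infty, 0]$, and then using $g(-x) = g(x)$, I would rewrite the expression as
\[
\int_{0}^{\infty} g(x)\, x\, \bigl[\, \phi_\alpha(x) - \phi_\alpha(-x)\, \bigr]\, dx.
\]
At this point the integrand factors into three pieces on $x > 0$: the nonnegative factor $g(x) \ge 0$, the nonnegative factor $x > 0$, and the density difference $\phi_\alpha(x) - \phi_\alpha(-x)$.

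It therefore remains to check that $\phi_\alpha(x) \ge \phi_\alpha(-x)$ for every $x > 0$ when $\alpha > 0$. This is a direct Gaussian computation: $\phi_\alpha(x) - \phi_\alpha(-x)$ has the sign of $(x+\alpha)^2 - (x-\alpha)^2 = 4\alpha x$, which is positive for $x, \alpha > 0$. Consequently the integrand is nonnegative pointwise on $(0,\infty)$, so the integral — and hence the original expectation — is nonnegative, as claimed. I expect no real obstacle here; the only thing to get right is the folding manipulation and the observation that the evenness of $\tanh'$ is exactly what makes the cross terms combine into a single density difference whose sign is controlled by $\alpha > 0$. (One could alternatively argue via the FKG/Chebyshev correlation inequality, noting that $x$ is increasing while $g(x)$ pairs a symmetric weight against a distribution shifted to the right, but the explicit folding argument above is the most transparent and self-contained.)
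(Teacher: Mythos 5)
Your proof is correct and is essentially identical to the paper's: both fold the integral onto the positive half-line using the evenness of $\tanh'$ and then observe that $\phi_\alpha(x) \ge \phi_\alpha(-x)$ for $x, \alpha > 0$. No issues.
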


\begin{proof}
\[ \Exp \left[ \tanh'\left( \frac{\beta X}{\sigma^2} \right) X \right] = \frac{1}{\sqrt{2 \pi} \sigma} \int_{- \infty}^{\infty} \tanh'\left( \frac{\beta y}{\sigma^2} \right) y \exp{\left( - \frac{(y - \alpha)^2}{2 \sigma^2} \right)} dy \]
But now we can see that since $\tanh'$ is an even function and since for any $y > 0$ we have $\exp{\left( - \frac{(y - \alpha)^2}{2 \sigma^2} \right)} \ge \exp{\left( - \frac{(- y - \alpha)^2}{2 \sigma^2} \right)}$ then
\[ - \frac{1}{\sqrt{2 \pi} \sigma} \int_{- \infty}^{0} \tanh'\left( \frac{\beta y}{\sigma^2} \right) y \exp{\left( - \frac{(y - \alpha)^2}{2 \sigma^2} \right)} dy \le \frac{1}{\sqrt{2 \pi} \sigma} \int_{0}^{\infty} \tanh'\left( \frac{\beta y}{\sigma^2} \right) y \exp{\left( - \frac{(y - \alpha)^2}{2 \sigma^2} \right)} dy \]
which means that
$ \Exp \left[ \tanh ' \left( {\beta X / \sigma^2} \right) X \right] \ge 0 $.
\end{proof}

\begin{lemma}
  \label{lem:tanhlemma}
  Let $\alpha, \beta > 0$ and $X \sim \mathcal{N}(\alpha , \sigma^2)$ then $\Exp \left[ \tanh \left( {\beta X / \sigma^2} \right) \right] \ge 1 - \exp \left[ - \frac{ \min(\alpha,\beta) \cdot \alpha }{ 2 \sigma^2} \right]$.
\end{lemma}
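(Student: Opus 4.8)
The plan is to bound the complementary quantity $\Exp\left[1 - \tanh(\beta X/\sigma^2)\right]$ from above and show it is at most $\exp\!\left(-\min(\alpha,\beta)\,\alpha/(2\sigma^2)\right)$. The whole argument rests on a single clean pointwise inequality: for every real $u$,
\[ 1 - \tanh(u) = \frac{2}{e^{2u}+1} \le e^{-u}, \]
which is equivalent to $e^{u}+e^{-u}\ge 2$, i.e. $\cosh(u)\ge 1$, and hence holds unconditionally. I would apply this with $u = \beta X/\sigma^2$ and take expectations, which collapses the problem to the moment generating function of a Gaussian. Since $X\sim\mathcal{N}(\alpha,\sigma^2)$,
\[ \Exp\left[1 - \tanh(\beta X/\sigma^2)\right] \le \Exp\left[e^{-\beta X/\sigma^2}\right] = \exp\!\left(-\frac{\alpha\beta}{\sigma^2} + \frac{\beta^2}{2\sigma^2}\right) = \exp\!\left(-\frac{\beta(2\alpha-\beta)}{2\sigma^2}\right). \]

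The first regime is $\beta \le \alpha$, where $\min(\alpha,\beta)=\beta$. Here I would simply note that $\beta(2\alpha-\beta) = \alpha\beta + \beta(\alpha-\beta) \ge \alpha\beta$, so the displayed bound gives $\Exp\left[1-\tanh(\beta X/\sigma^2)\right] \le \exp\!\left(-\alpha\beta/(2\sigma^2)\right)$, which is exactly the claim in this range. For the complementary regime $\beta > \alpha$, the moment generating function estimate degrades (its exponent $-\beta(2\alpha-\beta)/(2\sigma^2)$ turns positive once $\beta > 2\alpha$ and is therefore useless), so I would instead invoke monotonicity. By Lemma~\ref{lem:tanhdotlemma}, differentiating under the integral gives $\partial_\beta \Exp\left[\tanh(\beta X/\sigma^2)\right] = \sigma^{-2}\,\Exp\left[\tanh'(\beta X/\sigma^2)\,X\right] \ge 0$, so $\Exp\left[\tanh(\beta X/\sigma^2)\right]$ is nondecreasing in $\beta$. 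Consequently, for $\beta>\alpha$ it is at least its value at $\beta=\alpha$, which the first regime already bounds below by $1 - \exp\!\left(-\alpha^2/(2\sigma^2)\right)$; since $\min(\alpha,\beta)=\alpha$ here, this matches the target.

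The main obstacle is obtaining the constant-free exponent rather than the correct decay rate. The naive estimate $1-\tanh(u)\le 2e^{-u}$ is immediate but carries a stray factor of $2$ that is fatal for small $\beta$, where the claimed right-hand side is close to $0$; the tighter inequality $1-\tanh(u)\le e^{-u}$ (equality at $u=0$, via $\cosh(u)\ge 1$) is precisely what is needed to make the clean bound attainable. The second necessary ingredient is the case split at $\beta=\alpha$, which the monotonicity of Lemma~\ref{lem:tanhdotlemma} supplies for free; without it the $\beta>\alpha$ range is out of reach of the moment generating function argument alone.
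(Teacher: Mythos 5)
Your proposal is correct and follows essentially the same route as the paper: the pointwise bound $1-\tanh(u)=\frac{2}{e^{2u}+1}\le e^{-u}$, the Gaussian moment generating function computation giving $\exp\left(\frac{(\alpha-\beta)^2-\alpha^2}{2\sigma^2}\right)\le\exp\left(-\frac{\alpha\beta}{2\sigma^2}\right)$ for $\beta\le\alpha$, and the reduction of the case $\beta>\alpha$ to $\beta=\alpha$ via the monotonicity in $\beta$ supplied by Lemma~\ref{lem:tanhdotlemma}. The only difference is cosmetic ordering: the paper invokes monotonicity first and then restricts to $\beta\le\alpha$, whereas you do the MGF bound first and patch the remaining range afterwards.
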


\begin{proof}
Note that $\Exp \left[ \tanh \left( {\beta X / \sigma^2} \right) \right]$ is increasing as a function of $\beta$ as its derivative with respect to $\beta$ is positive by Lemma~\ref{lem:tanhdotlemma}. It thus suffices to show that $\Exp \left[ \tanh \left( {\beta X / \sigma^2} \right) \right] \ge 1 - \exp \left[ - \frac{ \alpha \beta  }{ 2 \sigma^2} \right]$ when $\beta \le \alpha$. We have that
\begin{align*}
  \Exp &\left[ 1 - \tanh \left( {\beta X / \sigma^2} \right) \right] = \Exp \left[ \frac { 2 } { \exp( {2 \beta X / \sigma^2} ) + 1 } \right] \le \Exp \left[ \frac { 1 } { \exp( { \beta X / \sigma^2} )  } \right]  \\
  &= \frac{1}{\sqrt{2 \pi} \sigma}
\int_{-\infty}^\infty \frac{ \exp \left( - \frac{(x - \alpha)^2}{2 \sigma^2} \right) } {\exp( { \beta x / \sigma^2)}} dx = \frac{\exp \left( \frac{(\alpha - \beta)^2 - \alpha^2}{2 \sigma^2} \right)}{\sqrt{2 \pi} \sigma}
\int_{-\infty}^\infty \exp \left( - \frac{(x - \alpha + \beta)^2}{2 \sigma^2} \right) dx \\
  &= \exp \left( \frac{(\alpha - \beta)^2 - \alpha^2}{2 \sigma^2} \right) \le \exp \left( - \frac{\alpha \beta}{2 \sigma^2} \right)
\end{align*}
which completes the proof.
\end{proof}

  \section{Multi-dimensional Convergence}
\label{sec:multid}

  In the multidimensional case, the EM algorithm takes the form of (\ref{eq:EMiteration}). In this case, we will quantify our approximation guarantees using the {\em Mahalanobis distance} $\norm{\cdot}_{\Sigma}$ between vectors with respect to matrix $\Sigma$, defined as follows:
  \[ \norm{\vec{x} - \vec{y}}_{\Sigma} = \sqrt{(\vec{x} - \vec{y})^T \Sigma^{-1} (\vec{x} - \vec{y})}. \]

  We will show that the fixed points identified in (\ref{eq:values}) are the only fixed points of $M(\cdot, \m)$. When initialized with $\l^{(0)}$ such that $\normm{\l^{(0)} - \m} < \normm{\l^{(0)} + \m}$ 
(resp. $\normm{\l^{(0)} - \m} > \normm{\l^{(0)} + \m}$), the EM algorithm converges to $\m$ (resp. to $-\m$). The algorithm converges to $\l = \vec{0}$ when initialized with $\normm{\l^{(0)} - \m} = \normm{\l^{(0)} + \m}$. In particular,

\begin{theorem}\label{thm:multid}
  Whenever $\normm{\l^{(0)} - \m} < \normm{\l^{(0)} + \m}$, i.e. the initial guess is closer to $\m$ than $-\m$, the estimates $\l^{(t)}$ of the EM algorithm satisfy 
  \[ \norm{\l^{(t + 1)} - \m}_{\Sigma} \le \kappa^{(t)} \norm{\l^{(t)} - \m}_{\Sigma}, \quad
\text{ where }
 \kappa^{(t)} = \exp\left( - \frac { \min\left(\l^{(t),T} \Sigma^{-1} \l^{(t)}, \m^T \Sigma^{-1} \l^{(t)}\right)^2 } { 2 \l^{(t),T} \Sigma^{-1} \l^{(t)}} \right).  \]
Moreover, $\kappa^{(t)}$ is a decreasing function of $t$. The symmetric things hold when $\normm{\l^{(0)} - \m} > \normm{\l^{(0)} + \m}$. When the initial guess is equidistant to $\m$ and $-\m$, then $\l^{(t)}=\vec{0}$ for all $t>0$.
\end{theorem}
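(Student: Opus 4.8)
The plan is to reduce the multidimensional iteration~\eqref{eq:EMiteration} to the one-dimensional analysis of Theorem~\ref{thm:singled} by first \emph{whitening} the covariance and then \emph{decomposing} each update into its component along the current iterate and its component orthogonal to that direction. First I would apply the change of variables $\vec{y} = \Sigma^{-1/2}\vec{x}$, and set $\tilde{\l} = \Sigma^{-1/2}\l$ and $\tilde{\m} = \Sigma^{-1/2}\m$. Then $\vec{x}\sim\mathcal N(\m,\Sigma)$ becomes $\vec{y}\sim\mathcal N(\tilde{\m}, I)$, the scalar $\l^T\Sigma^{-1}\vec{x}$ becomes $\tilde{\l}^T\vec{y}$, and the update transforms covariantly as $\tilde{M}(\tilde{\l},\tilde{\m}) := \Sigma^{-1/2}M(\l,\m) = \Exp_{\vec{y}\sim\mathcal N(\tilde{\m},I)}[\tanh(\tilde{\l}^T\vec{y})\,\vec{y}]$. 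Since $\normm{\cdot}$ is precisely the Euclidean norm of the whitened vectors, $\normm{M(\l,\m)-\m} = \norm{\tilde{M}(\tilde{\l},\tilde{\m})-\tilde{m}}$ with $\tilde m = \tilde{\m}$, so it suffices to prove the contraction in whitened coordinates, i.e. for $\Sigma = I$. Writing $a = \norm{\tilde{\l}}$, $\hat{\tilde{\l}} = \tilde{\l}/a$, and $b = \hat{\tilde{\l}}^T\tilde{\m} = \m^T\Sigma^{-1}\l/a$, a direct computation shows the stated rate simplifies to $\kappa^{(t)} = \exp(-\min(a,b)^2/2)$, and the initialization hypothesis $\normm{\l-\m}<\normm{\l+\m}$ is exactly $\m^T\Sigma^{-1}\l>0$, i.e. $b>0$.

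The crux is the decomposition. I would write $\vec{y} = s\,\hat{\tilde{\l}} + \vec{y}_\perp$ with $s = \hat{\tilde{\l}}^T\vec{y}\sim\mathcal N(b,1)$ and $\vec{y}_\perp$ the orthogonal complement. Because $\vec{y}$ has identity covariance, $s$ and $\vec{y}_\perp$ are independent, and $\Exp[\vec{y}_\perp] = \m_\perp := \tilde{\m} - b\hat{\tilde{\l}}$. Since $\tanh(a s)$ depends only on $s$, the expectation factorizes: the component of $\tilde{M}$ along $\hat{\tilde{\l}}$ is $M_\parallel := \Exp_s[\tanh(a s)\,s]$, which is \emph{exactly} the one-dimensional update~\eqref{eq:EMiteration1D} with $\lambda = a$, $\mu = b$, $\sigma^2 = 1$, while the orthogonal component is $\Exp_s[\tanh(a s)]\,\m_\perp$. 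Hence $\tilde{M}-\tilde{\m}$ splits orthogonally into $(M_\parallel - b)\hat{\tilde{\l}}$ and $(\Exp_s[\tanh(a s)]-1)\m_\perp$.

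I would then bound the two pieces separately and recombine by the Pythagorean identity. For the parallel piece, Theorem~\ref{thm:singled} (applicable since $a,b>0$) gives $\abs{M_\parallel - b}\le \kappa^{(t)}\abs{a-b}$. For the orthogonal piece, Lemma~\ref{lem:tanhlemma} gives $0\le 1-\Exp_s[\tanh(a s)]\le \exp(-\min(a,b)\,b/2)\le \kappa^{(t)}$, where the last inequality uses $b\ge\min(a,b)$. Since $\norm{\tilde{\l}-\tilde{\m}}^2 = (a-b)^2+\norm{\m_\perp}^2$, adding the squared bounds yields $\norm{\tilde{M}-\tilde{\m}}^2 \le (\kappa^{(t)})^2\norm{\tilde{\l}-\tilde{\m}}^2$, which is the claimed contraction. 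To iterate legitimately I must check that $b>0$ is preserved: since $M_\parallel>0$ and $\Exp_s[\tanh(a s)]>0$ whenever $b>0$, one gets $\tilde{\l}^{(t+1),T}\tilde{\m} = M_\parallel\, b + \Exp_s[\tanh(a s)]\norm{\m_\perp}^2>0$, so the iterate stays on the $\m$-side. The case $\normm{\l^{(0)}-\m}>\normm{\l^{(0)}+\m}$ is symmetric under $\m\mapsto-\m$, and in the equidistant case $b=0$ the factor $\Exp_s[\tanh(a s)]$ vanishes by symmetry, collapsing the dynamics onto the one-dimensional map at $\mu=0$, whose iterates are driven to the fixed point $\vec{0}$ recorded in~\eqref{eq:values}.

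The main obstacle I anticipate is the monotonicity claim that $\kappa^{(t)}$ decreases, equivalently that $\min(a_t,b_t)$ is non-decreasing. Unlike the one-dimensional setting, the direction $\hat{\tilde{\l}}^{(t)}$ rotates from step to step, so both $a_t=\norm{\tilde{\l}^{(t)}}$ and the projection $b_t$ evolve in a coupled manner rather than through a single monotone scalar as in Theorem~\ref{thm:singled}. Controlling their minimum will require separately tracking how the along-$\m$ projection grows and how the norm behaves, and confirming that the contraction never carries the iterate across the equidistant hyperplane; I expect this bookkeeping to be the delicate part of the argument, with everything else following cleanly from the reduction above.
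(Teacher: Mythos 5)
Your whitening step, the orthogonal decomposition of the update into the component along $\hat{\l}$ (which is exactly the one-dimensional iteration, so Theorem~\ref{thm:singled} applies) and the component along $\m_\perp$ (controlled by Lemma~\ref{lem:tanhlemma}), and the Pythagorean recombination are precisely the paper's argument; your check that $b>0$ is preserved across iterations is also sound. The genuine gap is the item you flag at the end but do not carry out: the assertion that $\kappa^{(t)}$ is decreasing, equivalently that $\min(a_t,b_t)=\min\left(\norm{\l^{(t)}},\langle\hat{\l}^{(t)},\m\rangle\right)$ is non-decreasing, is part of the theorem statement, and anticipating that ``the bookkeeping will be delicate'' does not discharge it.

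The paper closes this with two short lemmas (Lemmas~\ref{lem:multidProgress1} and~\ref{lem:multidProgress2}) whose inputs are exactly the quantities your decomposition already produces. Writing $\l'=\alpha\hat{\l}+\beta\hat{\l}^{\bot}$, your bounds give that $\alpha$ lies between $\min(a,b)$ and $\max(a,b)$ (from Theorem~\ref{thm:singled} together with monotonicity of the one-dimensional map in $\lambda$) and that $0\le\beta\le\langle\hat{\l}^{\bot},\m\rangle$ (from Lemma~\ref{lem:tanhlemma}). If $a\ge b$, then $\norm{\l'}\ge\alpha\ge b$, and since $\beta/\alpha\le\langle\hat{\l}^{\bot},\m\rangle/\langle\hat{\l},\m\rangle$ one gets $\langle\hat{\l'},\m\rangle=(\alpha b+\beta\langle\hat{\l}^{\bot},\m\rangle)/\norm{\l'}\ge b$. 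If $a\le b$, then $\langle\l',\m\rangle=\alpha b+\beta\langle\hat{\l}^{\bot},\m\rangle\ge\alpha^2+\beta^2=\norm{\l'}^2\ge a^2$, hence $a\le\norm{\l'}\le\langle\hat{\l'},\m\rangle$. In either case $\min(a_{t+1},b_{t+1})\ge\min(a_t,b_t)$, which is what you need. (A separate small point in your favor: your description of the equidistant case as convergence to $\vec 0$ within the hyperplane $\l^T\Sigma^{-1}\m=0$ is more accurate than the theorem's literal claim that $\l^{(t)}=\vec 0$ for all $t>0$, since the component $\Exp_{s\sim\mathcal{N}(0,1)}[\tanh(a s)\,s]$ along $\hat{\l}$ is strictly positive for $\l\neq\vec 0$.)
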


\begin{proof}
  For simplicity we will use $\l$ for $\l^{(t)}$, $\l'$ for $\l^{(t + 1)}$. 
  
  By applying the following change of variables $\l \leftarrow \Sigma^{-1/2} \l$ and $\m \leftarrow \Sigma^{-1/2} \m$ we may assume that $\Sigma = I$ where $I$ is the identity matrix. Therefore the iteration of EM becomes
  \[ M(\vec{\lambda}, \vec{\mu}) = \Exp_{\vec{x} \sim \mathcal{N}(\m, I)}\left[ \tanh( \langle \l, \vec{x} \rangle ) \vec{x} \right] = \Exp_{\vec{x} \sim \mathcal{N}(\vec{0}, I)}\left[ \tanh( \langle \l, \vec{x} \rangle + \langle \l, \m \rangle ) (\vec{x} + \m) \right] \]

  Let $\hat{\l}$ be the unit vector in the direction of $\l$, $\hat{\l}^{\bot}$ be the unit vector that belongs to the plane of $\m, \l$ and is perpendicular to $\l$, and let $\{ \vec{v}_1=\hat{\l},\vec{v}_2=\hat{\l}^{\bot},\vec{v}_3,...,\vec{v}_d \}$ be a basis of $\mathbb{R}^d$. We have:
\begin{equation}
      \langle \vec{ v_i }, \l' \rangle = \Exp_{\vec{x} \sim \mathcal{N}(\vec{0}, I)}\left[ \tanh( \langle \l, \vec{x} \rangle + \langle \l, \m \rangle ) (\langle \vec{ v_i }, \vec{x} \rangle + \langle \vec{ v_i }, \m \rangle ) \right]
\end{equation}
Since the Normal distribution is rotation invariant we can equivalently write:
$$
      \langle \vec{ v_i }, \l' \rangle = \Exp_{\alpha_1,...,\alpha_d \sim \mathcal{N}(0, 1)}\left[ \tanh( \langle \l, \sum_j \alpha_j \vec{v}_j \rangle + \langle \l, \m \rangle ) (\langle \vec{ v_i }, \sum_j \alpha_j \vec{v}_j \rangle + \langle \vec{ v_i }, \m \rangle ) \right]
$$
which simplifies to
$$
\langle \vec{ v_i }, \l' \rangle = \Exp_{\alpha_1,...,\alpha_d \sim \mathcal{N}(0, 1)}\left[ \tanh( \alpha_1 \norm{\l} + \langle \l, \m \rangle ) (a_i + \langle \vec{ v_i }, \m \rangle  ) \right] =
$$
\begin{equation}
    \label{eq:EMprojection}
    \Exp_{\alpha_1 \sim \mathcal{N}(0, 1)}\left[ \tanh( \alpha_1 \norm{\l} + \langle \l, \m \rangle ) \cdot  (\Exp_{\alpha_2,...,\alpha_d \sim \mathcal{N}(0, 1)}\left[a_i \right] + \langle \vec{ v_i }, \m \rangle )  \right]
\end{equation}

\noindent We now consider different cases for $i$ to further simplify Equation~\eqref{eq:EMprojection}.
\begin{itemize}
  \item[--] When $i = 1$, we have that $ \langle \hat{\l}, \l' \rangle = \Exp_{y \sim \mathcal{N}(0, 1)}\left[ \tanh( \norm{\l} ( y + \langle \hat{\l}, \m \rangle ) ) \left( y + \langle \hat{\l}, \m \rangle \right) \right]$. This is equivalent with an iteration of EM in one dimension and thus from Theorem \ref{thm:singled} we get that
\begin{equation}
  \label{eq:firstCoordinateConv}
  |\langle \hat{\l}, \m \rangle - \langle \hat{\l}, \l' \rangle| \le \kappa |\langle \hat{\l}, \m \rangle - \langle \hat{\l}, \l \rangle|
\end{equation}
  where 
\[ \kappa = \exp\left( - \frac{\min(\langle \hat{\l}, \l \rangle, \langle \hat{\l}, \m \rangle)^2}{2} \right) = \exp\left( - \frac{\min(\langle {\l}, \l \rangle, \langle {\l}, \m \rangle)^2}{2 \langle {\l}, \l \rangle} \right) \]
  \item[--] When $i = 2$, $\Exp_{\alpha_2,...,\alpha_d \sim \mathcal{N}(0, 1)}\left[a_i \right] + \langle \vec{ v_i }, \m \rangle = \langle \hat{\l}^{\bot}, \m \rangle$ and thus 
  $$\langle \hat{\l}^{\bot}, \l' \rangle = \langle \hat{\l}^{\bot}, \m \rangle \Exp_{y \sim \mathcal{N}(0, 1)}\left[ \tanh( \norm{\l} ( y  + \langle \hat{\l}, \m \rangle) ) \right]$$
  Let $\kappa$ as defined before and using Lemma \ref{lem:tanhlemma} we get that
  \begin{equation}
    \label{eq:secondCoordinateConv}
    \langle \hat{\l}^{\bot}, \m \rangle \ge \langle \hat{\l}^{\bot}, \l' \rangle \ge ( 1 - \kappa ) \langle \hat{\l}^{\bot}, \m \rangle
  \end{equation}
  \item[--] When $i \ge 3$, $\Exp_{\alpha_2,...,\alpha_d \sim \mathcal{N}(0, 1)}\left[a_i \right] + \langle \vec{ v_i }, \m \rangle = 0$ and thus $\langle \vec{ v_i }, \l' \rangle = 0$.
\end{itemize}

\noindent  We can now bound the distance of $\l'$ from $\m$:
\begin{align*} 
  \norm{\l' - \m} &= \sqrt{ \sum_i \langle {\vec v_i}, \l' - \m \rangle^2 } = \sqrt{ \langle \hat{\l}, \l' - \m \rangle^2 + \langle \hat{\l}^{\bot}, \l' - \m \rangle^2 } \\
   &\stackrel{\text{(\ref{eq:firstCoordinateConv}), (\ref{eq:secondCoordinateConv})}}{\le} \sqrt{ \kappa^2 \langle \hat{\l}, \l - \m \rangle^2 + \kappa^2 \langle \hat{\l}^{\bot}, \l - \m \rangle^2 } \le \kappa \norm{\l - \m}
\end{align*}

We now have to prove that this convergence rate $\kappa$ decreases as the iterations increase. This is implied by the following lemmas which show that
$\min(\langle \hat{\l}, \l \rangle, \langle \hat{\l}, \m \rangle) \le \min(\langle \hat{\l}', \l' \rangle, \langle \hat{\l}', \m \rangle)$ 
\begin{lemma} \label{lem:multidProgress1}
  If $\norm{\l} \ge \langle \hat{\l}, \m \rangle$ then $\langle \hat{\l}, \m \rangle \le \norm{\l'}$ and $\langle \hat{\l}, \m \rangle \le \langle \hat{\l'}, \m \rangle$.
\end{lemma}

\begin{proof}
The analysis above implies that $\l'$ can be written in the form $\l' = \alpha \cdot \hat{\l} + \beta \cdot \hat{\l}^{\bot}$, where $ \langle \hat{\l}, \m \rangle \le \alpha \le \norm{\l}$ and $0 \le \beta \le \langle \hat{\l}^{\bot}, \m \rangle$. 
It is easy to see that the first inequality holds since $\norm{\l'} \ge \alpha \ge \langle \hat{\l}, \m \rangle$.
For the second, we write $\langle \hat{\l'}, \m \rangle$ as:
\[ \langle \hat{\l'}, \m \rangle = 
\frac{\langle \l', \m \rangle}{\norm{\l'}} = 
\frac{ \alpha \langle \hat{\l}, \m \rangle + \beta \langle \hat{\l}^{\bot}, \m \rangle}{\sqrt{\alpha^2 + \beta^2}} = 
\langle \hat{\l}, \m \rangle \frac{ 1  + \frac{ \langle \hat{\l}^{\bot}, \m \rangle } { \langle \hat{\l}, \m \rangle } \frac {\beta} {\alpha} }{\sqrt{1 + \left( \frac{\beta}{\alpha} \right)^2}} \ge
\langle \hat{\l}, \m \rangle \frac{ 1  + \left( \frac{\beta}{\alpha} \right)^2 }{\sqrt{1 + \left( \frac{\beta}{\alpha} \right)^2}} \ge \langle \hat{\l}, \m \rangle
\]
where we used the fact that $\frac{ \langle \hat{\l}^{\bot}, \m \rangle } { \langle \hat{\l}, \m \rangle } \ge  \frac{\beta} {\alpha}$ which follows by the bounds on $\alpha$ and $\beta$.
\end{proof}

\begin{lemma} \label{lem:multidProgress2}
  If $\norm{\l} \le \langle \hat{\l}, \m \rangle$ then $\norm{\l} \le \norm{\l'} \le \langle \hat{\l'}, \m \rangle$.
\end{lemma}
\begin{proof}
We have that $\l' = \alpha \cdot \hat{\l} + \beta \cdot \hat{\l}^{\bot}$, where $\norm{\l} \le \alpha \le \langle \hat{\l}, \m \rangle$ and $0 \le \beta \le \langle \hat{\l}^{\bot}, \m \rangle$.
We also have $\langle \l', \m \rangle = \alpha \langle \hat{\l}, \m \rangle + \beta \langle \hat{\l}^{\bot}, \m \rangle \ge \alpha^2+\beta^2 = \norm{\l'}^2 \ge \alpha^2 \ge \norm{\l}^2$ so the lemma follows.
\end{proof}

Finally substituting back in the basis that we started before changing coordinates to make the covariance matrix identity we get the result as stated at the theorem.
\end{proof}
 
  \section{An Illustration of the Speed of Convergence}
\label{sec:quan}

Using our results in the previous sections we can calculate explicit speeds of convergence of EM to its fixed points. In this section, we present some results with this flavor. For simplicity, we start with single dimensional case, and discuss the multi-dimensional case in the end of this section.

Let us consider a mixture of two single-dimensional Gaussians whose signal-to-noise ratio $\eta = \mu/\sigma$ is equal to $1$. There is nothing special about the value of $1$, except that it is a difficult case to consider since the Gaussian components are not separated, as shown in Figure~\ref{fig:SNR1}.
\begin{figure}[h!]
\centering
\includegraphics[width=7cm]{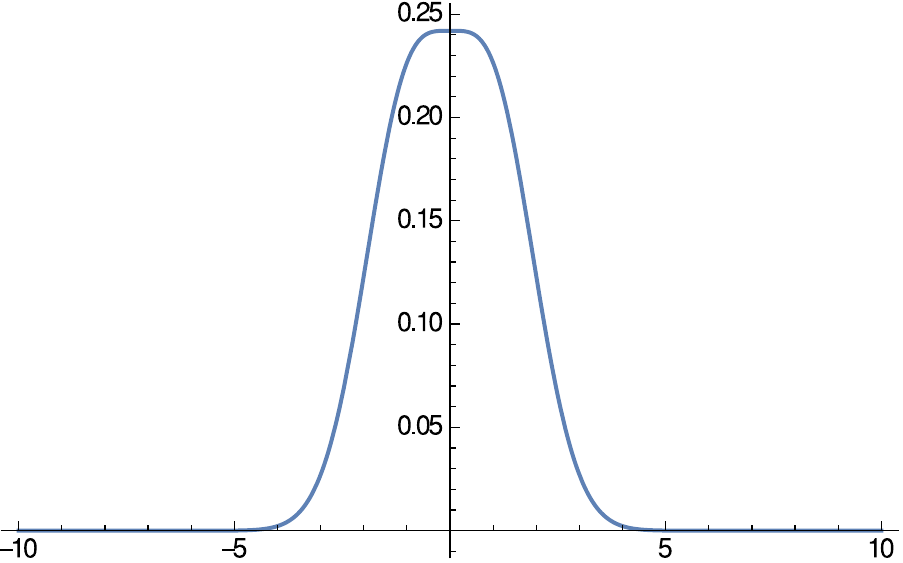}
\caption{The density of ${1 \over 2} {\cal N}(x; 1,1)+{1 \over 2} {\cal N}(x; -1,1)$.}
\label{fig:SNR1}
\end{figure}
When the SNR is larger, the numbers presented below still hold and in reality the convergence is even faster. When the SNR is even smaller than one, the numbers change, but gracefully, and they can be calculated in a similar fashion. 

We will also assume a completely agnostic initialization of EM, setting $\lambda^{(0)} \rightarrow +\infty$.\footnote{In the multi-dimensional setting, this would corrrespond to a very large magnitude $\l^{(0)}$ chosen in a random direction.} To analyze the speed of convergence of EM to its fixed point $\mu$, we first make the observation that in one step we already get to 
	$\lambda^{(1)} \le \mu + \sigma \sqrt{2 \over \pi}$. To see this we can plug $\lambda^{(0)} \rightarrow \infty$ into equation (\ref{eq:EMiteration1D}) to get:
\[ \lambda^{(1)} = \Exp_{x \sim \mathcal{N}(\mu, \sigma^2)}\left[ \text{sign}(x) x \right] = \Exp_{x \sim \mathcal{N}(\mu, \sigma^2)} \left[ \abs{x} \right], \]
  which equals the mean of the Folded Normal Distribution. A well-known bound for this mean is $\mu + \sqrt{2 \over \pi} \sigma$. Therefore the distance from the true mean after one step is $\abs{\lambda^{(1)} - \mu} \le \sqrt{2 \over \pi} \sigma $.

Now, using Theorem \ref{thm:singled}, we conclude that in all subsequent steps the distance to $\mu$ shrinks by a factor of at least $e^{+1/2}$. This means that, if we want to estimate $\mu$ to within additive error $1 \% \sigma$, then we need to run EM for at most $\lceil 2 \cdot \ln 100 + \ln {2 \over \pi}\rceil = 9$ additional steps. Accounting for the first step, $10$ iterations of the EM algorithm in total suffice to get to within error $1\%$, even when our initial guess of the mean is infinitely away from the true value! 

In Figure~\ref{fig:multidconvrates} we illustrate the speed of convergence of EM as implied by Theorem~\ref{thm:multid} in multiple dimensions. The plot was generated for a Gaussian mixture with $\m=(2~2)$ and $\Sigma=I$, but the behavior illustrated in this figure is generic (up to a transformation of the space by $\Sigma^{-{1 \over 2}}$). As implied by Theorem~\ref{thm:multid}, the rate of convergence depends on the distance of $\l^{(t)}$ from the origin $\vec{0}$ and the angle $\langle \l^{(t)}, \m \rangle$. The figure shows the directions of the EM updates for every point, and the factor by which the distance to the fixed point decays, with deeper colors corresponding to faster decays. There are three fixed points. Any point that is equidistant from $\m$ and $-\m$ is updated to $\vec{0}$ in one step and stays there thereafter. Points that are closer to $\m$ are pushed towards $\m$, while points that are closer to $-\m$ are pushed towards $-\m$.

\begin{figure}[h!]
\centering
\includegraphics[width=7cm]{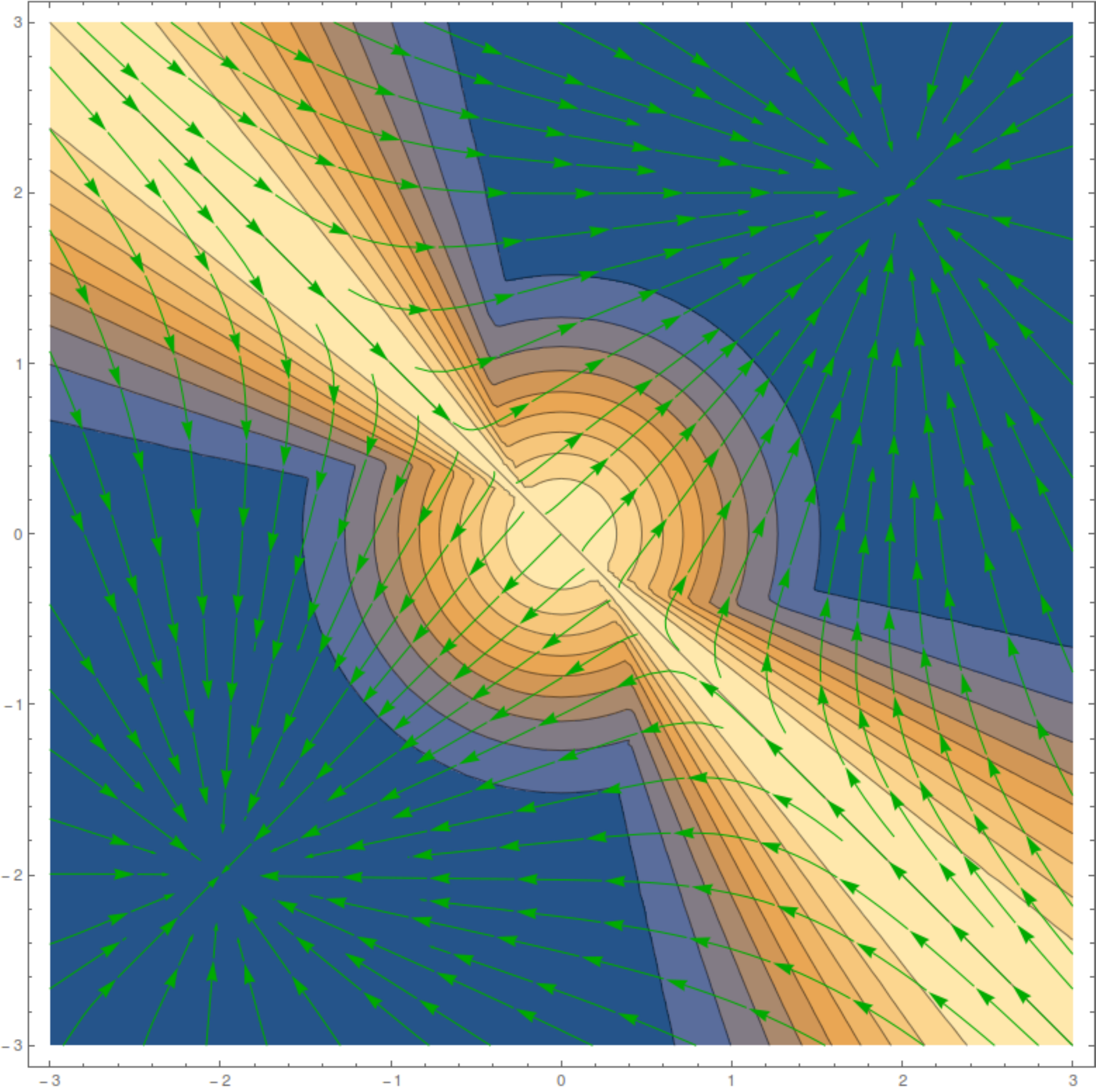}
\caption{Illustration of the Speed of Convergence of EM in Multiple Dimensions as Implied by Theorem~\ref{thm:multid}.}
\label{fig:multidconvrates}
\end{figure}

\begin{remark}[General Speed of Convergence] \label{remark:general speed of convergence}The analysis given above for $SNR=1$, generalizes to arbitrary $SNR$s, at the cost of a factor of $O(1/SNR^2)$ in the number of iterations. It also generalizes to obtain an arbitrary approximation $\epsilon$, at a cost of a factor of $O(\log 1/\epsilon)$. In multiple dimensions, we could run EM from a random initialization. The number of iterations for an approximation of $\epsilon$ in Mahalanobis distance would depend on the angle of the initial iterate with $\m$. Under a random initialization, the cosine of that angle is expected to be $\Theta(1/\sqrt{d})$, resulting in a total $O(d/SNR^2 \cdot \log(1/\epsilon))$ number of iterations. We show that we can boostrap EM to obtain a better initialization, starting from a random one, improving the angle to $\Omega(1)$, after $O(\log d/SNR^2)$ iterations. With a constant angle, EM takes $O(1/SNR^2 \cdot \log(1/\epsilon))$ iterations to give $\epsilon$ error, as in the single dimension, overall improving exponentially the dependence on $d$. We describe our bootstrapping operation in the context of our analysis of the finite sample EM in Section~\ref{sec:initialization}.
\end{remark}

  \section{Sample Based Model}
\label{sec:sample}

The main goal of this section is to prove convergence guarantees for the EM algorithm, when we have a finite sample. Similarly to the Section \ref{sec:multid} we willquantify our approximation guarantees using the Mahalanobis distance $\norm{\cdot}_{\Sigma}$ between two vectors with respect to matrix $\Sigma$, which we remind is defined as follows:
  \[ \norm{\vec{x} - \vec{y}}_{\Sigma} = \sqrt{(\vec{x} - \vec{y})^T \Sigma^{-1} (\vec{x} - \vec{y})}. \]
\noindent Also for the simplicity of the notation, it is useful to define the 
\textit{Mahalanobis inner product} between two vectors with respect to matric $\Sigma$ as follows:
	\[ \langle \vec{x}, \vec{y} \rangle_{\Sigma} = \vec{x}^T \Sigma^{-1} \vec{y}. \]
\noindent Towards our goal, we encounter two challenges. 

The first is that we cannot assume that we exactly know the mean of the mixture distribution $(\m_1 + \m_2)/2$. Our only access to this mean is via samples. We therefore use samples to estimate it. Then, we translate the origin to our estimate, and write the EM iteration for finding the mean of one of the two mixture components with respect to this origin. Given the error incurred in the approximation of $(\vec{\mu}_1 + \vec{\mu}_2)/2$, we propose to stabilize the sample-based EM iteration by including in the sample for each sampled point $\vx_i$ its symmetric point $-\vx_i$. This is the sample based version that we analyze, although our analysis goes through without this stabilization.

The other challenge has to do with the speed of convergence, as discussed in Remark~\ref{remark:general speed of convergence}.  Recall that the convergence of EM is a function of the angle $\langle \hat{\l}^{(0)}, \hat{\m} \rangle_{\Sigma}$, where $\hat{\l}^{(0)}$ is the unit vector in the direction of ${\l}^{(0)}$. In high dimensions
choosing a random  starting point $\l^{(0)}$ leads to an inner product that has value approximately $1/\sqrt{d}$. Thus the first step is to find a starting
point $\l^{(0)}$ such that the mean vector $\m$ (after the translation by the estimation of $(\m_1 + \m_2)/2$) has enough projection in the direction of $\l^{(0)}$. How can we do this? We actually bootstrap the EM algorithm to get such a good initialization. We show that, if we run EM starting from a random vector with small $\ell_2$ norm, then the EM algorithm will output a vector $\l$ with
$\langle \hat{\l}, \hat{\m} \rangle_{\Sigma} \ge 1/2$ after $O(\log d/SNR^2)$ iterations, where $SNR$ is defined as $SNR = \norm{\m}_{\Sigma}$.

  At this point we multiply $\l$ with a large positive constant $M$ and we continue the EM iteration with $\l^{(0)} = M\l$ using fresh samples for any iteration from
now on. This stage needs only logarithmic number of steps with respect to $1/\eps$, $d$ and polynomial in $1/SNR$. 
At each of these steps we prove that the sample based iteration is very well concentrated around its expectation and thus after few steps and $\tilde{O}((d / \eps^2) \poly(1/SNR) \log 1/\eta)$ 
samples we will find an estimation $\l$ such that
\[ \Prob\left( \norm{\l - \m}_{\Sigma} \ge \eps \right) \le \eta. \]

Our goal is to prove that $\norm{\l - \m}_{\Sigma} \le \eps$ holds with \textit{high probability}, which means 
$\eta = \poly\left( \frac{\eps^2}{d} \right)$. Also for any vector $\v \in \reals^d$ we use $\hat{\v}$ to refer
to the unit vector in the direction of $\v$. We present our results in the following order
\begin{enumerate}
  \item {\bf Centering:} Find an estimation $\vc$ of $(\m_1 + \m_2)/2$ using $\tilde{O}((d / \eps^2)) \cdot \log 1/\eta$ samples such that
        \[ \Prob\left(\norm{\vc - \frac{\m_1 + \m_2}{2}}_{\Sigma} \ge \eps \right) \le \eta / 3. \]
	We use $\vd$ to refer to the error in our estimation $\vc - \frac{\m_1 + \m_2}{2}$.
  \item {\bf Initialization:} We bootstrap EM to find a good initialization. In particular, starting from a randomly chosen vector, we run EM for $O(\log d/SNR^2)$ iterations using $\tilde{O}((d / \eps^2)) \cdot \log 1/\eta$ samples to get a unit vector $\hat{\l}$ such that
        \[ \Prob\left(\langle \hat{\l}, \hat{\m} \rangle_{\Sigma} \ge 1/2 \right) \le \eta / 3. \]
  \item {\bf Main Execution:} Setting $\l^{(0)} = M \hat{\l}$ for some large constant $M$ and running EM for $t = O((1/SNR^2) \log(1/\eps))$ iterations, using  $\tilde{O}((d / \eps^2 SNR^4)) \cdot \log 1/\eta$ fresh
        samples at each iteration, we get a vector $\l^{(t)}$ such that
        \[ \Prob\left(\norm{\l^{(t)} - \m_1}_{\Sigma} \ge \eps \right) \le \eta / 3. \]
\end{enumerate}

Combining the above steps all together we get our main theorem for this section.
\begin{theorem} \label{thm:sampleBased}
    If $\eps \le SNR$ and using $\tilde{O}(d/\eps^2) \cdot \poly(1 / SNR) \cdot \log 1/\eta$ samples we get an estimation $\l^{(t)}$ such that there is a constant $\eta$
that satisfies
  \[ \Prob \left[ \norm{\l^{(t)} - \m_1}_{\Sigma} \ge \eps \right] \le \eta. \]
\end{theorem}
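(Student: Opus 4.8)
The plan is to prove Theorem~\ref{thm:sampleBased} by establishing the three building blocks listed above---\textbf{Centering}, \textbf{Initialization}, and \textbf{Main Execution}---as separate high-probability guarantees and then stitching them together with a union bound. Each step is engineered to fail with probability at most $\eta/3$, so all three succeed simultaneously with probability at least $1 - \eta$; on this good event the final iterate obeys $\norm{\l^{(t)} - \m_1}_{\Sigma} \le \eps$. Since the sample complexity of each step is $\tilde{O}(d/\eps^2)\cdot\poly(1/SNR)\cdot\log(1/\eta)$, the three budgets add up to a bound of the same order, giving the stated total. The only genuinely new work is therefore inside the three steps; the outer argument is just accounting plus a union bound.

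For \textbf{Centering}, I would take $\vc$ to be the empirical mean of the samples. The mixture $p_{\m_1,\m_2}$ has mean exactly $(\m_1+\m_2)/2$ and subgaussian tails (each component is Gaussian with covariance $\Sigma$), so standard vector concentration in Mahalanobis distance yields $\norm{\vc - (\m_1+\m_2)/2}_{\Sigma}\le\eps$ with probability $\ge 1-\eta/3$ once $n=\tilde{O}(d/\eps^2)\cdot\log(1/\eta)$. Writing $\vd=\vc-(\m_1+\m_2)/2$ for the residual centering error, every later iteration is run with the origin translated to $\vc$, so the updates become ``incorrectly centered'' by $\vd$; the influence of $\vd$ is later absorbed by a coupling argument. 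I expect \textbf{Initialization} to be the main obstacle. The goal is to amplify the projection of a random starting direction onto $\m$ from its typical value $\Theta(1/\sqrt d)$ to a constant $\ge 1/2$. The mechanism is the linearization of the update near $\vec 0$: after whitening to $\Sigma=I$, the expansion $\tanh(\langle\l,\vx\rangle)\approx\langle\l,\vx\rangle$ gives $M(\l,\m)\approx\Exp_{\vx\sim\mathcal N(\m,I)}[\vx\vx^{T}]\,\l=(I+\m\m^{T})\l$, whose top eigenvector is $\m$ with eigenvalue $1+SNR^2$ versus eigenvalue $1$ in every orthogonal direction. Thus each population iteration multiplies the ratio of the $\m$-component to the orthogonal component by about $(1+SNR^2)$, so $O(\log d/SNR^2)$ iterations push this ratio from $1/\sqrt d$ to $\Omega(1)$. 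Two subtleties arise: (i) staying in the linear regime, which forces the initial $\ell_2$ norm $\rho_0$ to be small ($\rho_0=O(1/\sqrt d)$, since the norm itself grows by a factor $\approx(1+SNR^2)$ per step and hence by $\approx\sqrt d$ overall); and (ii) replacing the population update by the finite-sample one, which needs a per-step concentration bound, using fresh samples each iteration, certifying that the empirical EM map tracks its population counterpart. Together these give $\langle\hat\l,\hat\m\rangle_{\Sigma}\ge 1/2$ except with probability $\eta/3$.

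For \textbf{Main Execution} I would set $\l^{(0)}=M\hat\l$ with $M$ a large constant and invoke Theorem~\ref{thm:multid}: because the angle $\langle\hat\l,\hat\m\rangle_{\Sigma}$ is now bounded below by a constant, the correctly-centered population EM contracts geometrically with rate $\kappa$ bounded away from $1$, so $t=O((1/SNR^2)\log(1/\eps))$ iterations drive the population error below $\eps$. To transfer this to the finite-sample, incorrectly-centered iteration I would chain the two coupling arguments sketched in the introduction (Figure~\ref{fig:graph2}): one bounding the gap between the correctly- and incorrectly-centered \emph{population} updates in terms of $\norm{\vd}_{\Sigma}$, and one bounding the gap between the incorrectly-centered population update and the finite-sample update through concentration with $\tilde{O}(d/\eps^2\,SNR^4)\cdot\log(1/\eta)$ fresh samples per step. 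Crucially, since the contraction factor is bounded below $1$, these per-step perturbations accumulate into a convergent geometric series rather than compounding, so they only inflate the final error by a constant factor; this yields $\norm{\l^{(t)}-\m_1}_{\Sigma}\le\eps$ with probability $\ge 1-\eta/3$. A union bound over the three failure events then completes the proof. The hard part throughout is the interplay between the linearized amplification analysis and the fresh-sample concentration in the bootstrap, together with the careful propagation of the centering error $\vd$ through the couplings in the final phase.
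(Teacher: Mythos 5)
Your overall architecture is the same as the paper's: the same three-phase decomposition (centering, bootstrapped initialization via the linearized/power-method view of the EM update, and a main phase that chains the correctly-centered-versus-incorrectly-centered population coupling with the population-versus-finite-sample coupling), each phase budgeted to fail with probability $\eta/3$, and the per-step perturbations summed as a convergent geometric series because the contraction factor is bounded away from $1$. Your initialization and main-execution sketches track Sections~\ref{sec:initialization} and~\ref{sec:body of EM with samples} faithfully, including the eigengap $1+SNR^2$ of $I+\m\m^T$, the $O(\log d/SNR^2)$ iteration count, and the need to keep the iterate small enough to stay in the linear regime of $\tanh$.

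The one genuine gap is in your Centering step. You propose the empirical mean and justify its concentration by saying the mixture has ``subgaussian tails (each component is Gaussian with covariance $\Sigma$).'' But the mixture's covariance is $\Sigma+\m\m^T$: after centering, a sample is $z\m+\vy$ with $z$ Rademacher and $\vy\sim\mathcal N(\vec 0,\Sigma)$, so the empirical mean carries an extra error term $\m\cdot\frac1n\sum_i z_i$ of Mahalanobis norm roughly $SNR/\sqrt n$. Driving this below $\eps$ requires $n=\Omega(SNR^2/\eps^2)$ samples, which is \emph{not} of the form $\tilde O(d/\eps^2)\cdot\poly(1/SNR)$ once $SNR^2\gg d$ --- a regime the theorem permits, since it only assumes $\eps\le SNR$. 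The paper avoids this by estimating the center coordinate-wise as the average of the empirical first and third quartiles (Lemma~\ref{lem:meanEst}), using the DKW inequality together with the fact that the mixture's density at its quartiles is bounded below by a universal constant independent of $\mu$; that estimator's accuracy is therefore free of any SNR dependence. Replacing your empirical mean with such a quantile-based (or otherwise separation-insensitive) estimator closes the gap; the rest of your argument goes through essentially as in the paper.
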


\noindent We start now proving the lemmas for each of the steps described above.

\subsection{Centering} 
\label{sec:estimating the mean}

\begin{lemma} \label{lem:meanEst}
  For any $\eta < 1$, using $\tilde O(d / \eps^2 \log(1/\eta))$ samples there exists an estimator $\vc$ of the mean $(\m_1 + \m_2)/2$ such that if $\vd = \vc - (\m_1 + \m_2)/2$ then

  \[ \Prob\left[ \norm{\vd}_{\Sigma} \ge \eps \right] \le \eta \]
\end{lemma}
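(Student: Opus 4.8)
The plan is to take $\vc$ to be the empirical mean $\vc = \frac{1}{n}\sum_{i=1}^n \vx_i$ of $n$ i.i.d.\ samples from the mixture $p_{\m_1,\m_2}$. Since the mixture is balanced, $\Exp[\vx] = \tfrac12\m_1 + \tfrac12\m_2 = (\m_1+\m_2)/2$, so $\vc$ is unbiased and $\vd = \vc - (\m_1+\m_2)/2$ is exactly the centered empirical mean. To control $\normm{\vd}$ it is cleanest to whiten: setting $\vy_i = \Sigma^{-1/2}\vx_i$, the Mahalanobis norm becomes Euclidean, $\normm{\vd} = \norm{\tfrac1n\sum_i \vy_i - \Exp[\vy]}_2$, and each $\vy_i$ is drawn from the whitened mixture $\tfrac12\mathcal{N}(\vec{m},I)+\tfrac12\mathcal{N}(-\vec{m},I)$ centered at $\Sigma^{-1/2}(\m_1+\m_2)/2$, where $\vec{m}=\Sigma^{-1/2}(\m_1-\m_2)/2$ has $\norm{\vec{m}}_2 = SNR$. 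The whitened covariance is $I + \tfrac14\vec{u}\vec{u}^{\top}$ with $\vec{u}=\Sigma^{-1/2}(\m_1-\m_2)$ and $\norm{\vec{u}}_2 = 2\,SNR$, so its trace is $d + SNR^2$; the point is that the mean separation perturbs the covariance only in the single direction $\vec{m}$.

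I would then subtract the mean and rotate so that $\vec{m}$ points along the first coordinate axis. The centered whitened sample $\vec{z} = \vy-\Exp[\vy]$ then decomposes into independent coordinates: $z_1$ is the one-dimensional symmetric mixture $\tfrac12\mathcal{N}(SNR,1)+\tfrac12\mathcal{N}(-SNR,1)$, while $z_2,\dots,z_d$ are i.i.d.\ $\mathcal{N}(0,1)$ and independent of $z_1$ (the last $d-1$ coordinates have the same law in both mixture components). Hence $\normm{\vd}^2 = \bar z_1^2 + \sum_{j=2}^d \bar z_j^2$, with $\bar z_j = \tfrac1n\sum_i z_j^{(i)}$. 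The first term is a sub-Gaussian average: writing $z_1 = B\cdot SNR + g$ with $B$ uniform on $\{\pm1\}$ and $g\sim\mathcal{N}(0,1)$, the identity $\cosh(t)\le e^{t^2/2}$ gives variance proxy $1+SNR^2$, so $\Prob[\,|\bar z_1|\ge \eps/\sqrt2\,]\le 2\exp\!\big(-n\eps^2/(4(1+SNR^2))\big)$. The second term satisfies $n\sum_{j\ge2}\bar z_j^2 \sim \chi^2_{d-1}$, and the Laurent--Massart bound $\Prob[\chi^2_k - k \ge 2\sqrt{kx}+2x]\le e^{-x}$ with $x=\log(2/\eta)$ forces $\sum_{j\ge2}\bar z_j^2 \le \eps^2/2$ with probability $1-\eta/2$ as soon as $n = O((d+\log(1/\eta))/\eps^2)$.

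Combining the two events by a union bound, choosing $n = \tilde{O}\big((d+\log(1/\eta))/\eps^2\big)$ (absorbing the $1+SNR^2$ factor from the single mixture direction into the $\tilde{O}$) makes each event have probability at most $\eta/2$, giving $\Prob[\normm{\vd}\ge\eps]\le\eta$. The clean $d/\eps^2$ dependence comes from the $d-1$ unit-variance Gaussian directions, while the mean separation touches only one coordinate, explaining why the rank-one covariance term does not inflate the dimension-dependent cost. I expect the main obstacle to be obtaining the exponential $\log(1/\eta)$ tail rather than the polynomial $1/\eta$ that Markov's inequality would yield from $\Exp\normm{\vd}^2 = (d+SNR^2)/n$; the coordinate decomposition into one sub-Gaussian scalar plus a chi-squared makes this transparent, and an equivalent route (if one prefers to avoid the explicit rotation) is a vector Bernstein or $\eps$-net argument over the sphere applied directly to the sub-Gaussian whitened samples, which yields the same bound.
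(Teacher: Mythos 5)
Your route is genuinely different from the paper's. You take $\vc$ to be the empirical mean and control $\normm{\vd}$ by whitening, rotating the separation direction onto the first coordinate axis, and splitting the error into one sub-Gaussian mixture coordinate plus a $\chi^2_{d-1}$ contribution from the remaining $d-1$ pure Gaussian directions. The paper instead estimates each coordinate by the average of the empirical first and third quartiles and bounds the error via the DKW inequality together with a mean value theorem argument on the mixture CDF. Your decomposition is valid, both concentration steps are correctly executed, and your dependence on the failure probability, $n = O((d+\log(1/\eta))/\eps^2)$, is in fact slightly sharper than the multiplicative $\log(1/\eta)$ in the statement.

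There is, however, a genuine gap at the point where you ``absorb the $1+SNR^2$ factor into the $\tilde O$.'' The $\tilde O$ in this lemma hides only polylogarithmic factors in $d$ and $1/\eps$ (the paper writes $\poly(1/SNR)$ explicitly wherever the signal-to-noise ratio enters), and the $SNR^2$ term is not an artifact of a loose tail bound: writing $\bar z_1 = SNR\cdot \bar B + \bar g$ with $\bar B$ the average of $n$ independent Rademacher signs, $|\bar z_1|$ is typically of order $SNR/\sqrt{n}$, so the empirical mean genuinely requires $n = \Omega(SNR^2/\eps^2)$ samples to achieve accuracy $\eps$ in that direction. When $SNR \gg \sqrt{d}$ this exceeds the claimed $\tilde O(d/\eps^2\,\log(1/\eta))$, and a centering error of this size would also spoil the downstream optimality claim for well-separated components. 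This is exactly what the paper's quantile-based estimator is built to avoid: the error of an empirical quartile is $O(\eps/\sqrt{d})$ as long as the mixture density near that quartile is bounded below by an absolute constant, and the paper checks that $F'(\xi) \ge \tfrac12 \Phi'(\xi+\mu) \ge 1/10$ there \emph{independently of} $\mu$. To close the gap you would either need to restrict to the regime $SNR = O(\sqrt{d})$ (where your proof is complete as written), or replace the empirical mean, at least in the separation direction, by a quantile- or median-style estimate as the paper does.
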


\begin{proof}
  We start by making the transformation $x \mapsto \Sigma^{-1/2} x$ to the space, so that the covariance matrix is the identity $I$. Finally we will take back this transformation and the Euclidean norm 
	becomes
the corresponding Mahalanobis.

  We will get an estimate $\vc$ of the mean $(\m_1 + \m_2)/2$ by drawing $O(d / \eps^2)$ samples from the mixture and working in each axis direction separately. We will compute the estimate $\hat{c}_i$ in axis direction
$\vec{e}_i$ as the average of the first and third quartile of the empirical distribution given by the samples. It suffices to show that with high probability every quartile is at most $\frac{\eps}{\sqrt{d}}$ away from the true
quartile of the distribution $p_{\m_1, \m_2} \cdot \vec{e}_i$.

  Let $p_{-\mu,\mu}$ be the mixture of Gaussian distributions obtained by centering $p_{\m_1,\m_2} \cdot \vec{e}_i$ around $0$.

  The cumulative distribution $F$ of $p_{-\mu,\mu}$ is given by $F(x) = \frac 1 2 \Phi\left( x + \mu \right) + \frac 1 2 \Phi\left( x - \mu \right)$. By the DKW inequality~\cite{devroye2012combinatorial} with $n$ samples we have
that the empirical distribution $F_n$ satisfies:

  $$\Prob {\Bigl (}\sup _{{x\in {\mathbb  R}}}{\bigl |}F_{n}(x)-F(x){\bigr |}>\eps/\sqrt{d} {\Bigr )}\leq 2 e^{{-2n\eps^{2}/d}}$$

In particular, for $x$ such that $F_n(x) = 1/4$, with high probability
$F(x) = 1/4 \pm \eps/\sqrt{d}$. Moreover, for $x^*$ such that $F(x^*) = 1/4$, we have that $|x-x^*| \le \frac {\eps/\sqrt{d}} {\min_{\xi \in [x,x^*]} F'(\xi) }$ by the mean value theorem.

Since for $\xi \in [x,x^*]$, $F(\xi) \in [1/4 - \eps/\sqrt{d}, 1/4 + \eps/\sqrt{d}]$, this implies that $\Phi\left( \xi + \mu \right) \in [1/4 - \eps/\sqrt{d}, 1/2 + 2 \eps/\sqrt{d}]$ and thus
$F'(\xi) \ge \frac 1  {2 } \Phi' \left( \xi + \mu \right) $. But $\Phi'(z) \ge \frac 1 5$ when $\Phi(z) \in [1/5, 4/5]$ which shows that $|x - x^*| = O( \eps/\sqrt{d})$. Similarly, this holds for the 3rd quartile as well
and thus the same bound holds for the mean as well of the two quartiles with probability $1- 2 e^{{-2n\eps^{2}/d}}$. Setting $n = O(d \log d / \eps^2)$, we get that the bound is violated with probability $O(1/d)$. Taking a union bound
for all $d$ axis directions, we get that for all $i$, $|\vd_i| = O(\eps/\sqrt{d})$ with constant probability. This implies that $\norm{\vd} \le \eps$ with constant probability. By using a factor of $\log 1/\eta$ more samples it is easy to see that the error probability reduces to $\eta$.
\end{proof}

\subsection{Sample Based EM Iteration}

  Using the estimation of the center $\vc$ that we found in the previous section, we translate all our data and 
parameters so that $\vc \mapsto \vec{0}$. After this centering, the parameters $\m_1$, $\m_2$ become 
$\m + \vd$, $-\m + \vd$ and the covariance matrix remains the same.

  We will use $\tilde{\l}^{(t)}$ to refer to the estimation of $\m$ after $t$ steps of finite sample stabilized 
EM Iteration. By stabilized we mean that for any sample $\vx_i$ that we get, we include in our data set the
vector $-\vx_i$ too. This way one EM iteration is simpler and easier to analyze but the results hold even 
without this stabilization. Following exactly the same steps as in Section \ref{sec:model} we can see that
\begin{align} \label{eq:SBsampleEMiteration}
    \tilde{\l}^{(t + 1)} = \frac{1}{n} \sum_{i = 1}^n \tanh\left(\langle \tilde{\l}^{(t)}, \vx_i \rangle_{\Sigma}\right) \vx_i.
\end{align}

\subsection{Initialization of EM} \label{sec:initialization}

  We rewrite the sample based EM iteration is the following form
\begin{align*}
    \tilde{\l}^{(t + 1)} = \frac{1}{n} \sum_{i = 1}^n \tanh\left(\langle \tilde{\l}^{(t)}, \vx_i \rangle_{\Sigma}\right) \vx_i = \sum_{i = 1}^n \tanh\left(\tilde{\lambda}^{(t)} \langle \hat{\tilde{\l}}^{(t)}, \vx_i \rangle_{\Sigma}\right) \vx_i
\end{align*}

\noindent where $\tilde{\lambda}^{(t)} = \norm{\tilde{\l}^{(t)}}_{\Sigma}$ and $\hat{\tilde{\l}}^{(t)} = \frac{\tilde{\l}^{(t)}}{\norm{\tilde{\l}^{(t)}}_{\Sigma}} $ is the unit vector in the direction of $\tilde{\l}^{(t)}$.

  The basic idea of bootstraping that we describe in this section, is that if $\tilde{\lambda}^{(t)}$ is very 
small then the $\tanh$ function is very close to be linear. This linear approximation of $\tanh$ gives the 
following approximate form of the EM update
\begin{align} \label{eq:SBEMPowerCalc1}
    \tilde{\l}^{(t + 1)} \approx \frac{1}{n} \sum_{i = 1}^n \langle \tilde{\l}^{(t)}, \vx_i \rangle_{\Sigma} \vx_i = \left( \frac{1}{n} \sum_{i = 1}^n \vx_i \vx_i^T \right) \Sigma^{-1} \tilde{\l}^{(t)} = \hat{\Sigma}_p \Sigma^{-1} \tilde{\l}^{(t)}
\end{align}

\noindent where $\hat{\Sigma}_p$ is the empirical covariance matrix of the mixture $p_{\m + \vd, - \m + \vd }$. 
Now the intuition suggests that the direction of the maximum eigenvector of the matrix 
$\hat{\Sigma}_p \Sigma^{-1}$ is a direction that is spanned by $\m + \vd$, $-\m + \vd$ and hence a direction 
with small angle to the direction of $\m$. Also observe that this approximate EM iteration is actually an 
iteration of the power method for the matrix $\hat{\Sigma}_p \Sigma^{-1}$! Because of the efficiency of the 
power method, we expect that after a few steps this iteration will find a direction $\hat{\l}$ such that the 
inner product $\langle \hat{\m}, \hat{\l} \rangle$ is large enough. This direction is a good initialization for 
the EM algorithm as we will see in the next section. We now formaly demonstrate the intuition we described for
the approximate EM update.

\bigskip
  We start by bounding the error we introduce by replacing the $\tanh$ function with its linear approximation. 
It is very easy to see that $\abs{\tanh''(x)} \le 1$. Now by Taylor expansion of $\tanh$ around $0$ we get that
  \[ \abs{\tanh(x) - x} \le \max_{\xi} \left( \tanh''(\xi) \right) \frac{x^2}{2} \le \frac{x^2}{2}. \]

  We want $\tilde{\lambda}^{(t)}$ to be small enough such that the linear approximation of $\tanh$ is a good approximation. For this reason we pick for $\epsilon < 1$
	\begin{equation} \label{eq:Initial Conditions}
  	\tilde{\lambda}^{(0)} = \sqrt{\frac{2}{\mathcal{S}}} \cdot \epsilon \text{~~~~ where ~~~~} \mathcal{S} = \sum_{i = 1}^n \norm{\vx_i}^3_{\Sigma}.
	\end{equation}

  \noindent Also we choose the direction $\hat{\tilde{\l}}^{(0)}$ uniformly from the unit sphere. Also let's 
assume that after every step of EM we normalize the $\tilde{\l}^{(t)}$ to ensure that $\tilde{\lambda}^{(t)}$ 
satisfies \eqref{eq:Initial Conditions}. This renormalization is non-necessary and it could be easily dropped 
by choosing $\tilde{\lambda}^{(0)}$ to be so small that after $O(\log d / SNR^2)$ steps 
\eqref{eq:Initial Conditions} is still satisfied. Given \eqref{eq:Initial Conditions} we have that

\begin{align*}
    \norm{ \tilde{\l}^{(t + 1)} - \frac{1}{n} \sum_{i = 1}^n \langle \tilde{\l}^{(t)}, \vx_i \rangle_{\Sigma} \vx_i}_{\Sigma} & \le \frac{1}{n} \sum_{i = 1}^n \abs{\tanh\left(\tilde{\lambda}^{(t)} \langle \hat{\tilde{\l}}^{(t)}, \vx_i \rangle_{\Sigma}\right) - \langle \tilde{\l}^{(t)}, \vx_i \rangle_{\Sigma}} \norm{\vx_i}_{\Sigma} \\
    & \le \frac{1}{2n} \sum_{i = 1}^n \langle \tilde{\l}^{(t)}, \vx_i \rangle_{\Sigma}^2 \norm{\vx_i}_{\Sigma} \\
    & \le \frac{1}{2n} \left( \tilde{\lambda}^{(t)} \right)^2 \sum_{i = 1}^n \norm{\vx_i}_{\Sigma}^3 \implies
\end{align*}

\begin{align} \label{eq:SBEMPowerApprox}
  \norm{ \tilde{\l}^{(t + 1)} - \frac{1}{n} \sum_{i = 1}^n \langle \tilde{\l}^{(t)}, \vx_i \rangle_{\Sigma} \vx_i}_{\Sigma} & \le \frac{1}{n} \epsilon^2
\end{align}

At this point the calculations become much easier if we assume that we have already done the mapping $x \mapsto \Sigma^{- 1/2}x$ and when we are done we will take the inverse mapping and get the result in Mahalanobis distance. Equation \eqref{eq:SBEMPowerApprox} suggests that it suffices to analyze the convergence of the power method given by the following equation.
\begin{align} \label{eq:SBEMPowerIteration}
    \tilde{\t}^{(t + 1)} = \frac{1}{n} \sum_{i = 1}^n \langle \tilde{\t}^{(t)}, \vx_i \rangle \vx_i \overset{\eqref{eq:SBEMPowerCalc1}}{=} \hat{\Sigma}_p \tilde{\t}^{(t)}
\end{align}

\noindent where $\hat{\Sigma}_p$ is the empirical covariance matrix of the mixture $p_{\m + \vd, - \m + \vd }$. Before analyzing \eqref{eq:SBEMPowerIteration} lets see what happens if instead of the empirical covariance $\hat{\Sigma}_p$
we had the actual covariance $\Sigma_p$ of the mixture distribution $p_{\m + \vd, -\m + \vd}$. Then the iteration would be $\t^{(t + 1)} = \Sigma_p \t^{(t)}$. The covariance matrix of the mixture is
\begin{align} \label{eq:SBEMCovMixtureIdeal}
    \Sigma_p = I + \m \m^T.
\end{align}

\noindent Therefore the principal eigenvector of $\Sigma_p$ is $\hat{\m} = \frac{\m}{\norm{\m}}$ with eigenvalue $1 + \mu^2$, where $\mu = \norm{\m}$. All the other eigenvectors have eigenvalue $1$ and therefore the ratio of the
largest to the lowest eigenvalue is $\rho = 1 + \mu^2$.

To get the corresponding properties of $\hat{\Sigma}_p$ we observe that each $\vx_i$ can be written as
\[ \vx_i = \vy_i + z_i \vd + \m \]
\noindent where $\vy_i$ is distributed as $\mathcal{N}(0, I)$ and $z_i$ is a Rademacher indicator variable that
shows whether $\vx_i$ is coming from the distribution $\mathcal{N}(\m + \vd, I)$ or $\mathcal{N}(\m - \vd, I)$. 
We notice that $\vy_i$ and $z_i$ are independent. We can now rewrite $\hat{\Sigma}_p$ in terms of $\vy_i$ and 
$z_i$.
\[ \hat{\Sigma}_p = \frac{1}{n} \sum_{i = 1}^n \vy_i \vy_i^T + \frac{1}{n} \sum_{i = 1}^n z_i \left( \vy_i \vd^T + \vd \vy_i^T \right) + \frac{1}{n} \sum_{i = 1}^n \left( \vy_i \m^T + \m \vy_i^T \right) + \frac{1}{n} \sum_{i = 1}^n \left( z_i \vd + \m \right) \left( z_i \vd + \m \right)^T \]

\noindent For simplicity we define
\begin{align*}
  \hat{\Sigma}_1 & = \frac{1}{n} \sum_{i = 1}^n \vy_i \vy_i^T \\
  \hat{\Sigma}_2 & = \frac{1}{n} \sum_{i = 1}^n z_i \left( \vy_i \vd^T + \vd \vy_i^T \right) \\
  \hat{\Sigma}_3 & = \frac{1}{n} \sum_{i = 1}^n \left( \vy_i \m^T + \m \vy_i^T \right) \\
  \hat{\Sigma}_4 & = \frac{1}{n} \sum_{i = 1}^n \left( z_i \vd + \m \right) \left( z_i \vd + \m \right)^T
\end{align*}

It is easy to see that using $n = \tilde{O}(d / \eps^2)$ samples, $\hat{\Sigma}_1$ satisfies the following
lemma.
\begin{lemma} \label{lem:sigma1Concentration}
    For $n = \tilde{O}(d / \eps^2)$ let $\hat{\Sigma}_1 = \frac{1}{n} \sum_{i = 1}^n \vy_i \vy_i^T$, where $\vy_i$ is drawn from $\mathcal{N}(0, I)$. For any direction $\hat{\v}$ in $\reals^d$. Then
  \[ \Prob\left( \abs{\hat{\v}^T \hat{\Sigma}_1 \hat{\v} - 1} > \eps^2 \right) \le \poly\left(\frac{\eps^2}{d}\right). \]
\end{lemma}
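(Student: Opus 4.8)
The plan is to reduce the quadratic form to a normalized sum of independent squared standard Gaussians and then invoke a standard $\chi^2$ concentration inequality. First I would write $g_i = \langle \hat{\v}, \vy_i \rangle$ and observe that $\hat{\v}^T \hat{\Sigma}_1 \hat{\v} = \frac{1}{n}\sum_{i=1}^n \langle \hat{\v}, \vy_i \rangle^2 = \frac{1}{n}\sum_{i=1}^n g_i^2$. Since each $\vy_i \sim \mathcal{N}(0, I)$ and $\hat{\v}$ is a fixed unit vector, each $g_i \sim \mathcal{N}(0,1)$ and the $g_i$ are mutually independent; hence $\Exp[g_i^2] = 1$, which already shows $\Exp[\hat{\v}^T \hat{\Sigma}_1 \hat{\v}] = 1$, and $n \cdot \hat{\v}^T \hat{\Sigma}_1 \hat{\v} = \sum_{i=1}^n g_i^2$ is distributed as a $\chi^2_n$ random variable. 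The claim is therefore equivalent to showing that the normalized $\chi^2_n$ variable concentrates within $\eps^2$ of its mean.

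Next I would apply a standard tail bound for the $\chi^2$ distribution, either the Laurent--Massart bounds or, equivalently, a Bernstein inequality for sums of independent sub-exponential random variables, using that $g_i^2 - 1$ is sub-exponential with absolute constant parameters. Either route yields, for the deviation level $s = \eps^2 < 1$, a bound of the form $\Prob\left(\abs{\frac{1}{n}\sum_{i=1}^n g_i^2 - 1} > \eps^2\right) \le 2\exp\left(-c\, n\, \eps^4\right)$ for an absolute constant $c > 0$.

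Finally I would substitute $n = \tilde{O}(d/\eps^2)$ and verify that the exponent $c\, n\, \eps^4$ is at least a constant multiple of $\log(d/\eps^2)$, so the failure probability is at most $\poly(\eps^2/d)$ as required; the logarithmic factor hidden inside $\tilde{O}$ is exactly what supplies this slack. The main point to be careful about is that the target deviation $\eps^2$ is small, so the relevant tail is the sub-Gaussian (quadratic-in-$s$) branch of the sub-exponential bound, producing $\eps^4$ rather than $\eps^2$ in the exponent; this is the step that dictates how many samples, and how large a logarithmic factor, are needed. I would also emphasize that the statement is pointwise in $\hat{\v}$, meaning the direction is fixed and does not depend on the samples, so no union bound or $\eps$-net over directions is required; this is precisely what keeps the sample requirement at $\tilde{O}(d/\eps^2)$ rather than forcing an extra dimensional factor.
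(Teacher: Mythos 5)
Your reduction to $\chi^2_n$ concentration is the natural argument for a \emph{fixed} direction, and under that reading of the statement it is essentially sound. But it is not what the paper does, and the difference is substantive: the paper proves the lemma by establishing concentration of the quadratic form only along the $O(d^2)$ fixed directions spanned by pairs of eigenvectors of the (identity) covariance, taking a union bound over those finitely many directions, and then invoking the quoted Lemma 25 of Daskalakis et al.\ to upgrade this to a bound of the form $\abs{\vec{z}^T(\Sigma-\hat{\Sigma}_1)\vec{z}} \le 3d\eps'\,\vec{z}^T\Sigma\vec{z}$ holding \emph{simultaneously for all} $\vec{z}$. That uniformity is not optional: downstream, the bound is applied with $\hat{\v}$ equal to the principal eigenvector of the empirical covariance $\hat{\Sigma}_p$ (see Proposition~\ref{prop:principalVectorSp} and the eigenvalue-gap argument that follows it), and that direction is a function of the very samples $\vy_1,\dots,\vy_n$ defining $\hat{\Sigma}_1$. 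Your closing remark --- that the direction is fixed, so ``no union bound or $\eps$-net over directions is required'' --- is therefore precisely where your argument fails to deliver what the lemma is used for: pointwise concentration for each fixed $\hat{\v}$ says nothing about a sample-dependent $\hat{\v}$.

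There is also a quantitative soft spot even in the pointwise claim: at deviation level $\eps^2$ the sub-Gaussian branch gives exponent $c\,n\,\eps^4$, which for $n=\tilde{O}(d/\eps^2)$ is of order $d\,\eps^2$ times logarithmic factors; this dominates $\log(d/\eps^2)$ only when $\eps^2 \gtrsim 1/d$, so your final verification does not go through for all $\eps$. (The paper's own sketch is terse about constants, but its bookkeeping is routed through the $3d\eps'$ loss of Lemma 25 rather than through a single-direction tail bound.) To repair your approach you would either have to prove the uniform statement directly --- a net over the sphere costs an additive $\Theta(d)$ in the log of the failure probability, which these sample sizes cannot absorb at deviation $\eps^2$ --- or follow the paper: control the quadratic form on the $O(d^2)$ eigen-directions of $\Sigma$ and apply the quoted lemma to extend to arbitrary directions.
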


\noindent We describe now a sketch of the proof of Lemma \ref{lem:sigma1Concentration}, using the following 
 Lemma 25 from \cite{DaskalakisKT15}. 
\begin{lemma25} \label{lem:lemma25}
	 Let $\Sigma, \hat{\Sigma} \in \reals^{d \times d}$ be two symmetric, positive semi-definite matrices, and 
let $(\lambda_1, \v_1), \dots, (\lambda_d, \v_d)$ be the eigenvalue-eigenvector pairs of $\Sigma$. Suppose that 
\begin{itemize}
	\item For all $i \in \{1, \dots d\}$, $\abs{\left( \frac{\v_i}{\sqrt{\lambda_i}} \right)^T \left(\Sigma - \hat{\Sigma} \right) \left( \frac{\v_i}{\sqrt{\lambda_i}} \right)} \le \epsilon$,
	\item For all $i, j \in \{1, \dots d\}$, $\abs{\left( \frac{\v_i}{\sqrt{\lambda_i}} + \frac{\v_j}{\sqrt{\lambda_j}} \right)^T \left(\Sigma - \hat{\Sigma} \right) \left( \frac{\v_i}{\sqrt{\lambda_i}} + \frac{\v_j}{\sqrt{\lambda_j}} \right)} \le 4 \epsilon$.
\end{itemize}
Then for all $\vec{z} \in \reals^d$, 
$\abs{\vec{z}^T \left( \Sigma - \hat{\Sigma} \right) \vec{z}} \le 3 d \epsilon \vec{z}^T \Sigma \vec{z}$.
\end{lemma25}

The projection of $\vec{y}_i$ in each combination of two eigenvectors of $\Sigma$ is an one dimensional 
gaussian with variance $1$. Therefore using $\tilde{O}(d / \eps^2)$ samples we can estimate $\hat{\Sigma}_1$
in this direction with error at most $\eps/\sqrt{d}$ with high probability. Therefore by a union bound on these
fixed direction that depend only to $\Sigma_1$ we can satisfy the conditions of Lemma 25 of 
\cite{DaskalakisKT15}. Then from the implication of Lemma 25, Lemma \ref{lem:sigma1Concentration} follows.

\bigskip
	We continue with $\hat{\Sigma_3}$.
	
\begin{lemma} \label{lem:sigma3Concentration}
    For $n = \tilde{O}(d / \eps^2)$ let $\hat{\Sigma}_3 = \frac{1}{n} \sum_{i = 1}^n \left( \vy_i \m^T + \m \vy_i^T \right)$, where $\vy_i$ is drawn from $\mathcal{N}(0, I)$, $z_i$ is a uniform Rademacher random variable and. For any
  direction $\hat{\v}$ in $\reals^d$. Then
  \[ \Prob\left( \abs{\hat{\v}^T \hat{\Sigma}_3 \hat{\v}} > 2 \eps \abs{\hat{\v}^T \m} \right) \le \poly\left(\frac{\eps^2}{d}\right) \]
\end{lemma}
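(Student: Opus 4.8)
The plan is to expand the quadratic form $\hat{\v}^T \hat{\Sigma}_3 \hat{\v}$ directly and observe that it collapses to a scaled average of i.i.d.\ standard Gaussians, after which a one-dimensional tail bound finishes the argument. Working (as the preceding discussion does) in the transformed space where the covariance is the identity, I would first use that $\hat{\v}^T \vy_i$ and $\hat{\v}^T \m$ are scalars and that $\hat{\v}^T(\vy_i \m^T)\hat{\v} = \hat{\v}^T(\m \vy_i^T)\hat{\v} = (\hat{\v}^T \vy_i)(\hat{\v}^T \m)$, to obtain
\[ \hat{\v}^T \hat{\Sigma}_3 \hat{\v} = \frac{2}{n}\,(\hat{\v}^T \m) \sum_{i=1}^n \hat{\v}^T \vy_i. \]
Note that the Rademacher variables $z_i$ mentioned in the statement do not appear in $\hat{\Sigma}_3$, so they play no role here.

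Next, setting $g_i := \hat{\v}^T \vy_i$ and using that each $\vy_i \sim \mathcal{N}(0,I)$ with $\norm{\hat{\v}} = 1$, the $g_i$ are i.i.d.\ standard normals, so $\frac{1}{n}\sum_i g_i \sim \mathcal{N}(0, 1/n)$. The target event then factors cleanly: assuming $\hat{\v}^T \m \neq 0$ (otherwise both sides of the defining inequality vanish and the probability is $0$), dividing through by the positive quantity $2\abs{\hat{\v}^T \m}$ gives
\[ \Prob\left( \abs{\hat{\v}^T \hat{\Sigma}_3 \hat{\v}} > 2\eps\,\abs{\hat{\v}^T \m} \right) = \Prob\left( \Bigl| \tfrac{1}{n}\textstyle\sum_{i=1}^n g_i \Bigr| > \eps \right). \]
The crucial point is that the factor $\abs{\hat{\v}^T \m}$ cancels exactly, which is precisely why the statement is phrased with a bound relative to $\abs{\hat{\v}^T \m}$ rather than an absolute one.

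Then I would apply a standard Gaussian tail bound (equivalently, Hoeffding's inequality for the sub-Gaussian $g_i$) to get $\Prob\left( \abs{\frac{1}{n}\sum_i g_i} > \eps \right) \le 2\exp(-n\eps^2/2)$, and substitute $n = \tilde{O}(d/\eps^2)$. Choosing the hidden logarithmic factor in $n$ so that $n\eps^2 \gtrsim \log(d/\eps^2)$ drives the right-hand side down to $\poly(\eps^2/d)$, as claimed.

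I do not anticipate a genuine obstacle; the only steps requiring care are algebraic, namely verifying the cancellation of $\abs{\hat{\v}^T \m}$ and the reduction to a single scalar average. The conceptual point worth flagging is that, unlike Lemma~\ref{lem:sigma1Concentration}, which needed Lemma~25 of~\cite{DaskalakisKT15} to control all directions simultaneously via a net-type argument, the present statement concerns a single \emph{fixed} direction $\hat{\v}$, so no $\eps$-net or union bound over the sphere is required. This is exactly why the concentration here is so direct, reducing to a one-dimensional Gaussian deviation.
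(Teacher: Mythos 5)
Your proof is correct for the statement as literally written, and its first step coincides with the paper's: both expand the quadratic form to $\hat{\v}^T\hat{\Sigma}_3\hat{\v} = \frac{2}{n}(\hat{\v}^T\m)\sum_{i}\hat{\v}^T\vy_i$ (and you are right that the $z_i$ in the statement are vestigial). From there the routes diverge. The paper applies Cauchy--Schwarz to get $\abs{\hat{\v}^T\bar{\vy}}\le\norm{\bar{\vy}}$ with $\bar{\vy}=\frac{1}{n}\sum_i\vy_i$, concentrates each coordinate of $\bar{\vy}$ to within $\eps/\sqrt{d}$, and union-bounds over the $d$ coordinates to conclude $\norm{\bar{\vy}}\le\eps$ with high probability; you instead observe that $\hat{\v}^T\vy_i\sim\mathcal{N}(0,1)$ and concentrate the scalar average directly. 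Your version is sharper for a fixed direction (it needs only $n=\tilde{O}(1/\eps^2)$ rather than $\tilde{O}(d/\eps^2)$), but it buys this by making the high-probability event depend on $\hat{\v}$. The paper's detour through $\norm{\bar{\vy}}$ is deliberate: it yields a single event on which the conclusion holds for \emph{all} directions $\hat{\v}$ simultaneously, and this is what is actually used downstream, where the lemma is invoked for $\hat{\v}$ equal to the principal eigenvector of $\hat{\Sigma}_p$ (Proposition~\ref{prop:principalVectorSp}) --- a direction that depends on the very samples $\vy_1,\dots,\vy_n$ being concentrated. So your closing remark that ``no $\eps$-net or union bound over the sphere is required'' is true of the literal statement but not of its intended application; to handle the data-dependent direction you would need either a net argument or exactly the paper's uniform norm bound.
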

\begin{proof}
  Let $\hat{\v}$ an arbitrary direction in $\reals^d$. Then we have that
  \[ \abs{\hat{\v}^T \hat{\Sigma}_3 \hat{\v}} = \frac{2}{n} \abs{\sum_{i = 1}^n (\hat{\v}^T \vy_i) (\m^T \hat{\v})} \le 2 \abs{\hat{\v}^T \m} \abs{\hat{\v}^T \left( \frac{1}{n} \sum_{i = 1}^n \vy_i \right)} \le 2 \abs{\hat{\v}^T \m} \norm{\frac{1}{n} \sum_{i = 1}^n \vy_i} \]

  \noindent Now we consider the quantity $\vec{e}_j^T \left( \frac{1}{n} \sum_{i = 1}^n \vy_i \right)$, where
$\vec{e}_j$ is the unit $j$th vector. This is equivalent with having $\frac{1}{n} \sum_{i = 1}^n y_i$ where $y_i$ is drawn
from $\mathcal{N}(0, 1)$. So we have that
  \[ \Prob\left[ \abs{\vec{e}_j^T \left( \frac{1}{n} \sum_{i = 1}^n \vy_i \right)} \ge \frac{\eps}{\sqrt{d}} \right] \le \poly\left( \frac{\eps^2}{d} \right) \]

  \noindent Now by doing a union bound over all $\vec{e}_j$ we get that
  \[ \Prob\left[ \norm{\frac{1}{n} \sum_{i = 1}^n \vy_i} \ge \eps \right] \le \poly\left( \frac{\eps^2}{d} \right) \]

  \noindent Using these we can conclude that
  \[ \Prob\left( \abs{\hat{\v}^T \hat{\Sigma}_3 \hat{\v}} > 2 \eps \abs{\hat{\v}^T \m} \right) \le \poly\left(\frac{\eps^2}{d}\right) \]
\end{proof}

  For $\Sigma_2$, it is easy to observe that in 
$\hat{\Sigma}_2 = \frac{1}{n} \sum_{i = 1}^n z_i \left( \vy_i \vd^T + \vd \vy_i^T \right)$, $z_i$ are 
independent from $\vy_i$ and so the product $z_i \vy_i$ is a sample from standard multinormal distribution 
$\mathcal{N}(0, I)$. Therefore we can substitute $z_i \vy_i$ with just $\vy_i$. Now using exactly the same 
analysis as in Lemma \ref{lem:sigma3Concentration} we can prove the following lemma.
\begin{lemma} \label{lem:sigma2Concentration}
    For $n = \tilde{O}(d / \eps^2)$ let $\hat{\Sigma}_2 = \frac{1}{n} \sum_{i = 1}^n \frac{1}{n} \sum_{i = 1}^n z_i \left( \vy_i \vd^T + \vd \vy_i^T \right)$, where $\vy_i$ is drawn from $\mathcal{N}(0, I)$, $z_i$ is a uniform
Rademacher random variable and. For any direction $\hat{\v}$ in $\reals^d$ and any $\vd$ such that $\norm{\vd} \le \eps$. Then
  \[ \Prob\left( \abs{\hat{\v}^T \hat{\Sigma}_2 \hat{\v}} > 2 \eps^2 \right) \le \poly\left(\frac{\eps^2}{d}\right) \]
\end{lemma}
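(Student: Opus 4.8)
The plan is to reduce Lemma~\ref{lem:sigma2Concentration} to the already-established Lemma~\ref{lem:sigma3Concentration} via a symmetrization trick. The crucial observation is the one flagged just before the statement: since each $z_i$ is a Rademacher variable independent of $\vy_i$, and the standard Gaussian $\mathcal{N}(0,I)$ is symmetric under negation, the product $z_i \vy_i$ is itself distributed as $\mathcal{N}(0,I)$. Hence, for the purpose of analyzing the quadratic form $\hat{\v}^T \hat{\Sigma}_2 \hat{\v}$, I would replace $z_i \vy_i$ by a fresh standard Gaussian $\vy_i$ throughout, after which $\hat{\Sigma}_2$ takes exactly the same algebraic form as $\hat{\Sigma}_3$ but with $\vd$ playing the role of $\m$.

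First I would expand the quadratic form and repeat the chain of inequalities from the proof of Lemma~\ref{lem:sigma3Concentration}:
\[ \abs{\hat{\v}^T \hat{\Sigma}_2 \hat{\v}} = \frac{2}{n} \abs{\sum_{i=1}^n (\hat{\v}^T \vy_i)(\vd^T \hat{\v})} \le 2 \abs{\hat{\v}^T \vd} \, \norm{\frac{1}{n}\sum_{i=1}^n \vy_i}. \]
Next I would control each factor separately. For the first factor, Cauchy--Schwarz together with the hypothesis $\norm{\vd} \le \eps$ and $\norm{\hat{\v}} = 1$ gives $\abs{\hat{\v}^T \vd} \le \norm{\vd} \le \eps$; this is precisely where the second factor of $\eps$ is gained relative to Lemma~\ref{lem:sigma3Concentration}, whose bound instead carries the term $\abs{\hat{\v}^T \m}$. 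For the second factor, the coordinate-wise argument of Lemma~\ref{lem:sigma3Concentration} applies verbatim: each coordinate $\vec{e}_j^T\bigl(\frac{1}{n}\sum_i \vy_i\bigr)$ is an average of $n$ i.i.d.\ $\mathcal{N}(0,1)$ variables, so it stays below $\eps/\sqrt{d}$ except with probability $\poly(\eps^2/d)$, and a union bound over the $d$ coordinates yields $\norm{\frac{1}{n}\sum_i \vy_i} \le \eps$ with probability $1 - \poly(\eps^2/d)$.

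Combining the two bounds gives $\abs{\hat{\v}^T \hat{\Sigma}_2 \hat{\v}} \le 2\eps \cdot \eps = 2\eps^2$ on the same high-probability event, which is exactly the claimed statement. I do not anticipate a genuine obstacle, as the lemma is essentially a corollary of Lemma~\ref{lem:sigma3Concentration}. The only points requiring care are verifying that the Rademacher symmetrization is legitimate—that replacing $z_i \vy_i$ by $\vy_i$ does not alter the distribution of the relevant quadratic form—and remembering to bound $\abs{\hat{\v}^T \vd}$ using $\norm{\vd} \le \eps$ rather than treating $\vd$ as a fixed constant-norm vector, since it is this extra factor of $\eps$ that upgrades the $2\eps$ bound of Lemma~\ref{lem:sigma3Concentration} to the $2\eps^2$ bound demanded here.
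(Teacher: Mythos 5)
Your proposal is correct and follows essentially the same route as the paper: the paper likewise observes that $z_i \vy_i \sim \mathcal{N}(0, I)$ by independence and symmetry, substitutes $z_i\vy_i$ with $\vy_i$, and then invokes the analysis of Lemma~\ref{lem:sigma3Concentration} with $\vd$ in the role of $\m$, where $\norm{\vd} \le \eps$ supplies the extra factor of $\eps$.
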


  For $\hat{\Sigma}_4$ we do straight forward calculations. For simplicity we use $\mu = \norm{\m}_{\Sigma}$. 
Let $\hat{\v}$ be an arbitrary direction in $\reals^d$.
\begin{align*}
  \hat{\v}^T \hat{\Sigma}_4 \hat{\v} = \frac{1}{n} \sum_{i = 1}^n \left( z_i \vd^T \hat{\v} + \m^T \hat{\v} \right)^2 \implies
\end{align*}

\begin{align} \label{eq:sigma4Concentration}
  (\abs{\m^T \hat{\v}} - \eps)^2 \le \hat{\v}^T \hat{\Sigma}_4 \hat{\v} \le (\abs{\m^T \hat{\v}} + \eps)^2
\end{align}

  \noindent Now we calculate the variance of the samples in the direction of $\m$. We have
  \[ \hat{\m}^T \hat{\Sigma}_p \hat{\m} = \hat{\m}^T \hat{\Sigma}_1 \hat{\m} + \hat{\m}^T \hat{\Sigma}_2 \hat{\m} + \hat{\m}^T \hat{\Sigma}_3 \hat{\m} + \hat{\m}^T \hat{\Sigma}_4 \hat{\m} \]
  \noindent Using Lemmas \ref{lem:sigma1Concentration}, \ref{lem:sigma2Concentration}, \ref{lem:sigma3Concentration} and \eqref{eq:sigma4Concentration} we have that with probability at least 
	$1 - \poly(\eps^2 / d)$
  \[ \hat{\m}^T \hat{\Sigma}_1 \hat{\m} \ge 1 - \eps^2 \]
  \[ \hat{\m}^T \hat{\Sigma}_2 \hat{\m} \ge - 2 \eps^2 \]
  \[ \hat{\m}^T \hat{\Sigma}_3 \hat{\m} \ge - 2 \eps \mu \]
  \[ \hat{\m}^T \hat{\Sigma}_4 \hat{\m} \ge (\mu - \eps)^2 \]
  \noindent and so
  \[ \hat{\m}^T \hat{\Sigma}_p \hat{\m} \ge 1 - 3 \eps^2 - 2 \eps \mu + (\mu - \eps)^2. \]

  \noindent Now let any other direction $\hat{\v}$ with $\hat{\v}^T \m = a \mu$, with $a \ge 0$. Again using 
Lemmas \ref{lem:sigma1Concentration}, \ref{lem:sigma2Concentration}, \ref{lem:sigma3Concentration} and
\eqref{eq:sigma4Concentration} we have that with probability at least $1 - \poly(\eps^2 / d)$
  \[ \hat{\v}^T \hat{\Sigma}_1 \hat{\v} \le 1 + \eps^2 \]
  \[ \hat{\v}^T \hat{\Sigma}_2 \hat{\v} \le 2 \eps^2 \]
  \[ \hat{\v}^T \hat{\Sigma}_3 \hat{\v} \le 2 a \eps \mu \]
  \[ \hat{\v}^T \hat{\Sigma}_4 \hat{\v} \le (a \mu + \eps)^2 \]
  \noindent and so
  \[ \hat{\v}^T \hat{\Sigma}_p \hat{\v} \le 1 + 3 \eps^2 + 2 a \eps \mu + (a \mu + \eps)^2. \]

 \noindent The principal eigenvector $\hat{\v}$ of $\hat{\Sigma}_p$ has to satisfy
  \[ \hat{\v}^T \hat{\Sigma}_p \hat{\v} \ge \hat{\m}^T \hat{\Sigma}_p \hat{\m} \implies \]
  \[ 6 \eps^2 + 4 a \eps \mu + (a^2 - 1) \mu^2 \ge 0 \]

 \noindent Now using an $\eps$ such that $\eps \le \mu / 10$ we have that the above implies $a \ge 3/4$. This
proves the following Proposition that we use in the analysis of the initialization step.
\begin{proposition} \label{prop:principalVectorSp}
  Let $\hat{\Sigma}_p$ be the empirical covariance matrix computed from $n = \tilde{O}(d / \eps^2)$ samples from the distribution $p_{\m + \vd, \m - \vd}$. Then the principal eigenvector $\hat{\v}$ of $\hat{\Sigma}_p$ satisfies
  \[ \Prob \left( \langle \hat{\v}, \hat{\m} \rangle < \frac{3}{4} \right) \le \poly\left( \frac{\eps^2}{d} \right). \]
\end{proposition}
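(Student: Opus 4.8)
The plan is to combine the variational characterization of the principal eigenvector with the per-direction concentration estimates already assembled above. First I would invoke the extremal property of the principal eigenvector $\hat{\v}$ of the symmetric matrix $\hat{\Sigma}_p$: since $\hat{\v}$ maximizes the Rayleigh quotient $\vec{z}^T \hat{\Sigma}_p \vec{z}$ over unit vectors, in particular $\hat{\v}^T \hat{\Sigma}_p \hat{\v} \ge \hat{\m}^T \hat{\Sigma}_p \hat{\m}$. This single comparison, between the quadratic form of $\hat{\Sigma}_p$ in the data-optimal direction and in the ``ground truth'' direction $\hat{\m}$, is what drives the whole argument, and the point is that the left side can only be larger while the right side is known to be large.

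Next I would substitute the two-sided estimates. Writing $a = \langle \hat{\v}, \hat{\m} \rangle \ge 0$ (choosing the sign of $\hat{\v}$ so that $a \ge 0$), the decomposition $\hat{\Sigma}_p = \hat{\Sigma}_1 + \hat{\Sigma}_2 + \hat{\Sigma}_3 + \hat{\Sigma}_4$ together with Lemmas~\ref{lem:sigma1Concentration}, \ref{lem:sigma2Concentration}, \ref{lem:sigma3Concentration} and the deterministic bound~\eqref{eq:sigma4Concentration} yields, on a single event of probability $1 - \poly(\eps^2/d)$, the lower bound $\hat{\m}^T \hat{\Sigma}_p \hat{\m} \ge 1 - 3\eps^2 - 2\eps\mu + (\mu - \eps)^2$ and the upper bound $\hat{\v}^T \hat{\Sigma}_p \hat{\v} \le 1 + 3\eps^2 + 2a\eps\mu + (a\mu + \eps)^2$. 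Chaining these through the eigenvector inequality and cancelling the common constant collapses the matrix statement to a scalar quadratic inequality in $a$ of the form $(a^2 - 1)\mu^2 + O(\eps\mu)\,a + O(\eps^2 + \eps\mu) \ge 0$.

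Finally I would solve this quadratic. Because the coefficient of $a^2$ is $\mu^2 > 0$ and we have restricted to $a \ge 0$, the inequality forces $a$ to lie above the larger root; substituting the standing assumption $\eps \le \mu/10$ and bounding that root from below gives $a \ge 3/4$, which is exactly $\langle \hat{\v}, \hat{\m} \rangle \ge 3/4$ as claimed.

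The main obstacle is legitimizing the use of the concentration bounds for the direction $\hat{\v}$, which is itself a function of the samples (it is the top eigenvector of the empirical matrix), so the fixed-direction statements cannot be invoked for it naively. I would resolve this by observing that each of the four building blocks in fact holds uniformly over all unit directions: the $\hat{\Sigma}_1$ estimate of Lemma~\ref{lem:sigma1Concentration} is obtained through Lemma 25 of Daskalakis et al., whose conclusion is a spectral bound valid simultaneously for every direction, so $\abs{\hat{\v}^T \hat{\Sigma}_1 \hat{\v} - 1} \le \eps^2$ may be applied to the random $\hat{\v}$; the $\hat{\Sigma}_2$ and $\hat{\Sigma}_3$ estimates reduce to controlling the single random vector $\tfrac1n\sum_i \vy_i$, whose norm is at most $\eps$ with the stated probability simultaneously for all directions; and the bound~\eqref{eq:sigma4Concentration} on $\hat{\Sigma}_4$ is deterministic once $\norm{\vd}_{\Sigma} \le \eps$. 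Hence all four estimates may be evaluated at the data-dependent $\hat{\v}$ on one high-probability event, and a union bound over this event together with the centering guarantee of Lemma~\ref{lem:meanEst} (which supplies $\norm{\vd}_{\Sigma} \le \eps$) keeps the total failure probability at $\poly(\eps^2/d)$.
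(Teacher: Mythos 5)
Your proposal follows essentially the same route as the paper's own argument: the Rayleigh-quotient comparison $\hat{\v}^T\hat{\Sigma}_p\hat{\v} \ge \hat{\m}^T\hat{\Sigma}_p\hat{\m}$, the four-term decomposition controlled by Lemmas~\ref{lem:sigma1Concentration}, \ref{lem:sigma2Concentration}, \ref{lem:sigma3Concentration} and~\eqref{eq:sigma4Concentration}, and the reduction to a scalar quadratic inequality in $a=\langle\hat{\v},\hat{\m}\rangle$ solved under $\eps\le\mu/10$. Your additional care in justifying why the per-direction concentration bounds may be evaluated at the data-dependent direction $\hat{\v}$ (because each bound in fact holds uniformly over directions on a single high-probability event) addresses a point the paper leaves implicit, but it is a refinement of the same argument rather than a different route.
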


  The last thing we need to prove to complete the analysis of the initialization is the gap between the first 
and the second eigenvalue of $\hat{\Sigma}_p$. We want to use this gap for the analysis of the convergence rate 
of the power method iteration that takes place in the first steps. Given Proposition 
ref{prop:principalVectorSp}, we have that any direction $\hat{v}$ except from the principal one, has 
$\langle \hat{\v}, \hat{\m} \rangle \le \frac{1}{4}$ with high probability. Let $\hat{\v}'$ be the eigenvector
that corresponds to the second maximum eigenvalue of $\hat{\Sigma}_p$. Let $a = \hat{\v}'^T \m$, using Lemmas
\ref{lem:sigma1Concentration}, \ref{lem:sigma2Concentration}, \ref{lem:sigma3Concentration} and 
\eqref{eq:sigma4Concentration} we have that
  \[ \hat{\v}'^T \hat{\Sigma}_1 \hat{\v}' \le 1 + \eps^2 \]
  \[ \hat{\v}'^T \hat{\Sigma}_2 \hat{\v}' \le 2 \eps^2 \]
  \[ \hat{\v}'^T \hat{\Sigma}_3 \hat{\v}' \le 2 a \eps \mu \le \frac{\eps \mu}{2} \]
  \[ \hat{\v}'^T \hat{\Sigma}_4 \hat{\v}' \le (a \mu + \eps)^2 \le \left( \frac{\mu}{4} + \eps \right)^2 \implies \]
  \[ \hat{\v}'^T \hat{\Sigma}_p \hat{\v}' \le 1 + 3 \eps^2 + \frac{1}{2} \eps \mu + \left(\frac{1}{4} \mu + \eps\right)^2. \]

  \noindent On the other hand based on the fact that $\hat{\v}^T \hat{\Sigma}_p \hat{\v} \ge \hat{\m}^T \hat{\Sigma}_p \hat{\m}$ we conclude that
  \[ \hat{\v}^T \hat{\Sigma}_p \hat{\v} \ge 1 - 3 \eps^2 - 2 \eps \mu + (\mu - \eps)^2. \]

  \noindent Using also the hypothesis that $0 \le \eps \le \mu / 10$ we get that
  \[ \frac{\hat{\v}^T \hat{\Sigma}_p \hat{\v}}{\hat{\v}'^T \hat{\Sigma}_p \hat{\v}'} \ge \frac{1 - 3 \eps^2 - 2 \eps \mu + (\mu - \eps)^2}{1 + 3 \eps^2 + \frac{1}{2} \eps \mu + \left(\frac{1}{4} \mu + \eps\right)^2} \ge \frac{1 + \frac{232}{400} \mu^2}{1 + \frac{101}{400} \mu^2} \ge \min\left\{1 + \frac{1}{4} \mu^2, 2\right\}. \]

Hence we get the following proposition.

\begin{proposition} \label{prop:firstSecondGap}	
	Let $\hat{\Sigma}_p$ be the empirical covariance matrix computed from $n = \tilde{O}(d / \eps^2)$ samples 
from the distribution $p_{\m + \vd, \m - \vd}$. Let also $\hat{\rho}$ be the ratio of the magnitude of the 
first two eigenvalues of $\hat{\Sigma}_p$ then
 \[ \Prob \left( \hat{\rho} < \min\left\{1 + \frac{1}{4} \mu^2, 2\right\} \right) \le \poly\left( \frac{\eps^2}{d} \right). \]
\end{proposition}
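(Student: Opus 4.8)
The plan is to control the ratio $\hat{\rho} = \lambda_1/\lambda_2$ of the top two eigenvalues of $\hat{\Sigma}_p$ by squeezing it between a lower bound on the largest eigenvalue $\lambda_1$ and an upper bound on the second eigenvalue $\lambda_2$, both obtained by evaluating the Rayleigh quotient $\hat{\v}^T \hat{\Sigma}_p \hat{\v}$ in a well-chosen direction and then invoking the already-established concentration of the four pieces $\hat{\Sigma}_p = \hat{\Sigma}_1 + \hat{\Sigma}_2 + \hat{\Sigma}_3 + \hat{\Sigma}_4$. All the randomness is absorbed into a single event on which Lemmas \ref{lem:sigma1Concentration}, \ref{lem:sigma2Concentration}, \ref{lem:sigma3Concentration} and the deterministic bound \eqref{eq:sigma4Concentration} simultaneously hold; by a union bound this event fails with probability only $\poly(\eps^2/d)$, which yields the claimed failure probability.

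For the lower bound on $\lambda_1$ I would use $\lambda_1 = \max_{\hat{\v}} \hat{\v}^T \hat{\Sigma}_p \hat{\v} \ge \hat{\m}^T \hat{\Sigma}_p \hat{\m}$, so it suffices to evaluate the quadratic form in the signal direction $\hat{\m}$. Bounding each of the four terms from below via Lemmas \ref{lem:sigma1Concentration}--\ref{lem:sigma3Concentration} and \eqref{eq:sigma4Concentration} gives $\hat{\m}^T \hat{\Sigma}_1 \hat{\m} \ge 1 - \eps^2$, $\hat{\m}^T \hat{\Sigma}_2 \hat{\m} \ge -2\eps^2$, $\hat{\m}^T \hat{\Sigma}_3 \hat{\m} \ge -2\eps\mu$ and $\hat{\m}^T \hat{\Sigma}_4 \hat{\m} \ge (\mu-\eps)^2$, whence $\lambda_1 \ge 1 - 3\eps^2 - 2\eps\mu + (\mu-\eps)^2$.

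For the upper bound on $\lambda_2$, let $\hat{\v}'$ be the second eigenvector and $a = \langle \hat{\v}', \hat{\m}\rangle$. The crucial input is Proposition \ref{prop:principalVectorSp}: the top eigenvector is tightly aligned with $\hat{\m}$, so its orthogonal companion $\hat{\v}'$ can only have a small projection onto the signal, and I would use this to justify $a \le \tfrac14$. Feeding $a \le \tfrac14$ into the same four concentration bounds gives $\hat{\v}'^T\hat{\Sigma}_3\hat{\v}' \le 2a\eps\mu \le \tfrac{\eps\mu}{2}$ and $\hat{\v}'^T\hat{\Sigma}_4\hat{\v}' \le (a\mu+\eps)^2 \le (\tfrac{\mu}{4}+\eps)^2$, together with $\hat{\v}'^T\hat{\Sigma}_1\hat{\v}' \le 1+\eps^2$ and $\hat{\v}'^T\hat{\Sigma}_2\hat{\v}' \le 2\eps^2$, hence $\lambda_2 \le 1 + 3\eps^2 + \tfrac12\eps\mu + (\tfrac14\mu+\eps)^2$.

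Dividing the two bounds and simplifying under the standing assumption $\eps \le \mu/10$ collapses the messy $\eps$-dependent expression into a clean ratio bounded below by $\min\{1 + \tfrac14\mu^2, 2\}$, where the minimum with $2$ is a harmless slackening that avoids tracking the exact value once the gap is already large. I expect the genuinely delicate step to be the alignment argument for the second eigenvector, namely turning the quantitative guarantee $\langle \hat{\v}, \hat{\m}\rangle \ge \tfrac34$ for the principal eigenvector coming from Proposition \ref{prop:principalVectorSp} into the bound $a \le \tfrac14$: everything else is a routine term-by-term application of the previously proved concentration lemmas, whereas this step is where the spectral structure of $\hat{\Sigma}_p$ (one signal direction, the rest isotropic up to error) must genuinely be exploited.
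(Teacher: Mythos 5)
Your proposal follows essentially the same route as the paper: lower-bound the top eigenvalue by the Rayleigh quotient at $\hat{\m}$, upper-bound the second eigenvalue at the second eigenvector $\hat{\v}'$ using $\langle \hat{\v}', \hat{\m}\rangle \le \tfrac14$ derived from Proposition~\ref{prop:principalVectorSp}, apply the same four term-by-term concentration bounds, and divide under $\eps \le \mu/10$. The step you flag as delicate (passing from the $\tfrac34$ alignment of the principal eigenvector to $a \le \tfrac14$ for the second) is asserted in the paper with exactly the same brevity, so your attempt matches the paper's argument in both structure and level of detail.
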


  It is well known and easy to prove that if we choose a random vector $\l^{(0)}$ uniformly from the half unit 
sphere, defined by $\hat{\m}$, then we will have that $\langle \l^{(0)}, \hat{\m} \rangle \ge 1 / \sqrt{d}$ 
with high probability.

  By standard analysis of the power method, Chapter 21.3 \cite{ShalevS14}, we know that the number of 
iterations we need to get within a constant angle from the principal eigenvector, starting from angle 
$1/\sqrt{d}$ is $O(\log d / \log \hat{\rho})$ where $\hat{\rho}$ is, as we have said, the ratio of the first 
two eigenvalues of $\hat{\Sigma}_p$. Therefore after $O(\log d / \log \hat{\rho}) = O(\log d / \min(\mu^2, 1))$ 
steps the iteration of $\tilde{\t}$ will find a vector $\t$ such that $\langle \hat{\tau}, \hat{\m} \rangle$ is 
at least $2/3$.

  Now we are ready to analyze the performance of $\tilde{\l}^{(t)}$ as be described in the beginning of the 
section. Applying \eqref{eq:SBEMPowerApprox} repeatedly at every iteration we get that after 
$t = O(\log d / \log \hat{\rho})$ iterations it holds that
\[ \norm{\frac{1}{n}\tilde{\l}^{(t)} - \tilde{\tau}^{(t)}} \le \frac{1}{n} \epsilon^2 \left( 1 + \hat{\rho} + \hat{\rho}^2 + \cdots + \hat{\rho}^{k} \right) = \frac{1}{n} \epsilon^2 \frac{\hat{\rho}^{k + 1} - 1}{\hat{\rho} - 1}. \]
\noindent But $k + 1$ is $O(\log d / \log \hat{\rho})$ and therefore
\[ \norm{\frac{1}{n}\tilde{\l}^{(t)} - \tilde{\tau}^{(t)}} \le \frac{1}{n} \epsilon^2 \frac{\poly(d) - 1}{\hat{\rho} - 1}. \]

\noindent Finally since $\hat{\rho}$ is $\min\left(1 + \frac{\mu^2}{4}, 2\right)$ and also $\eps \le \mu / 10$ 
we only need to set $\epsilon$ polynomially with respect to $\eps^2 / d$ and we will get that after the first
$O(\log d / \min(\mu^2, 1))$ iterations it holds that
\[ \norm{\frac{1}{n}\tilde{\l}^{(t)} - \tilde{\tau}^{(t)}} \le \eta \]
\noindent for some $\eta \le \eps / 6$. Which implies that
\[ \langle \hat{\tilde{\l}}, \hat{\m} \rangle \ge \frac{2}{3} - \frac{\eta}{\mu} \ge \frac{1}{2}. \]

\noindent This proves the following lemma and completes the proof of the initialization.

\begin{lemma}\label{lem:initialEM}
  Starting for a guess $\lambda^{(0)}$ such that 
$\lambda^{(0)} \le \sqrt{\frac{1}{18 \mathcal{S}}} \poly\left( \frac{\eps^2}{d} \right)$, with 
$\mathcal{S} = \sum_{i = 1}^n \norm{\vx_i}^3_{\Sigma}$ and after $O(\frac{\log d}{\eps^2})$ iterations of EM we
get a vector $\l$ such that
  \[ \langle \hat{\l}, \hat{\m} \rangle_{\Sigma} \ge \frac{1}{2}. \]
The probability of failure is at most $\poly \left( \frac{\eps^2}{d} \right)$.
\end{lemma}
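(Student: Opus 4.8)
The plan is to reduce this bootstrapping step to a standard power-method analysis of the empirical covariance matrix $\hat{\Sigma}_p$ and then transfer that analysis back to the true EM iteration by controlling the linearization error. First I would recall that for a small-norm iterate the $\tanh$ nonlinearity is nearly linear, so that each genuine EM step agrees with one step of the pure power iteration $\tilde{\t}^{(t+1)} = \hat{\Sigma}_p \tilde{\t}^{(t)}$ of \eqref{eq:SBEMPowerIteration} up to the error bound \eqref{eq:SBEMPowerApprox}, provided we maintain the initial-condition scaling \eqref{eq:Initial Conditions}. This reduces the problem to two independent pieces: understanding the spectrum of $\hat{\Sigma}_p$, and bounding how the per-step linearization error accumulates over the run.

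For the spectral picture I would invoke the two propositions already established. Proposition~\ref{prop:principalVectorSp} guarantees that the principal eigenvector $\hat{\v}$ of $\hat{\Sigma}_p$ satisfies $\langle \hat{\v}, \hat{\m}\rangle \ge 3/4$ with probability $1 - \poly(\eps^2/d)$, and Proposition~\ref{prop:firstSecondGap} supplies an eigenvalue ratio $\hat{\rho} \ge \min\{1 + \mu^2/4,\, 2\}$, where $\mu = \norm{\m}_{\Sigma} = SNR$. Starting from a uniformly random unit vector, which has $\langle \l^{(0)}, \hat{\m}\rangle \ge 1/\sqrt{d}$ with high probability, the standard power-method convergence rate (Chapter~21.3 of~\cite{ShalevS14}) yields that after $O(\log d / \log \hat{\rho}) = O(\log d / \min(\mu^2,1))$ iterations the pure power iterate satisfies $\langle \hat{\tau}, \hat{\m}\rangle \ge 2/3$, which matches the claimed iteration count.

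Finally I would bound the discrepancy between the genuine EM iterate $\tilde{\l}^{(t)}$ and the power iterate $\tilde{\t}^{(t)}$. Applying \eqref{eq:SBEMPowerApprox} at every step and accounting for the amplification of each error term by $\hat{\Sigma}_p$ in all subsequent steps produces the geometric sum $\tfrac{1}{n}\epsilon^2 (\hat{\rho}^{k+1}-1)/(\hat{\rho}-1)$; since $k = O(\log d / \log \hat{\rho})$ forces $\hat{\rho}^{k+1} = \poly(d)$, choosing $\epsilon$ polynomially small in $\eps^2/d$ drives the total below any desired $\eta \le \eps/6$. Converting this Euclidean error into an angle via $\langle \hat{\tilde{\l}}, \hat{\m}\rangle \ge 2/3 - \eta/\mu$ gives the claimed bound $\ge 1/2$, and mapping back through $x \mapsto \Sigma^{-1/2}x$ restores the Mahalanobis inner product. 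The main obstacle is precisely this error-accumulation step: because the linearization error is amplified geometrically by the power-method matrix, I must verify that the required number of steps keeps $\hat{\rho}^{k+1}$ only polynomial in $d$, so that the polynomially small initial scaling of \eqref{eq:Initial Conditions} genuinely suffices to keep the final deviation negligible. A union bound over the three failure events (initial correlation, Proposition~\ref{prop:principalVectorSp}, Proposition~\ref{prop:firstSecondGap}) then gives the stated $\poly(\eps^2/d)$ failure probability.
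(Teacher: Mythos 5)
Your proposal follows essentially the same route as the paper: linearize $\tanh$ to reduce the EM iterate to a power iteration on $\hat{\Sigma}_p$, invoke Propositions~\ref{prop:principalVectorSp} and~\ref{prop:firstSecondGap} for the eigenvector correlation and eigenvalue gap, run the standard power-method bound from a $1/\sqrt{d}$ random start, and control the geometrically amplified linearization error by the sum $\frac{1}{n}\epsilon^2(\hat{\rho}^{k+1}-1)/(\hat{\rho}-1)$ with $\hat{\rho}^{k+1}=\poly(d)$, finishing with $\langle \hat{\tilde{\l}}, \hat{\m}\rangle \ge 2/3 - \eta/\mu \ge 1/2$. This matches the paper's argument step for step, including the per-step renormalization needed to keep \eqref{eq:Initial Conditions} valid, so there is nothing further to add.
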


\subsection{Finite-Sample EM Analysis} \label{sec:body of EM with samples}

We initialize EM at $\l^{(0)} = M \l$, where $\l$ is the point from Lemma~\ref{lem:initialEM},  for some large constant $M$.\footnote{It is easy to find such constant by getting a small number of samples and keeping the one that has maximum magnitude.} With this initialization, we run EM for $t = O((1/\mu^2) \log(d/\eps))$ steps using $O((d / \eps^2 \mu^4) \log 1/\delta)$ samples at each step, where for ease of notation we have set $\mu = SNR \equiv \norm{\m}_{\Sigma}$.

To study our sample-based EM iteration~\eqref{eq:SBsampleEMiteration} we will relate its progress to an appropriate population EM iteration. Note that this iteration differs from the population EM iteration that we discussed in  Section \ref{sec:model} and analyzed in Section~\ref{sec:multid}. The reason is that we have incurred an error $\vd$ in the estimation of the mean of the distribution in Section~\ref{sec:estimating the mean}. With respect to our estimated mean centering, the true means of the two Gaussian components are $\m + \vd$, $-\m + \vd$ rather than $\m$ and $-\m$. Another source of discrepancy comes from the fact that we included for each point $\vx_i$ in our sample its symmetric point $-\vx_i$.  
This implies that each
$\vx_i$ is coming with probability $1/2$ from the mixture $p_{\m + \vd, - \m - \vd}$ and with probability $1/2$ from the mixture $p_{\m - \vd, -\m + \vd}$. Given this, using again the same operations as in Section \ref{sec:model}, we
have that the corresponding population iteration, denoted by $\l^{(t)}$, is
\begin{align} \label{eq:SBpopulationEMiteration}
    \l^{(t + 1)} 
                 & = \frac{1}{2} \Exp_{\vx \sim \mathcal{N}(\m + \vd, \Sigma)}\left[ \tanh\left(\langle \l^{(t)}, \vx \rangle_{\Sigma}\right) \vx \right] + \frac{1}{2} \Exp_{\vx \sim \mathcal{N}(\m - \vd, \Sigma)}\left[ \tanh\left(\langle \l^{(t)}, \vx \rangle_{\Sigma}\right) \vx \right].
\end{align}
\noindent Our proof follows two steps illustrated in Figures~\ref{fig:graph2} and~\ref{fig:graph3}:
\begin{itemize}
\item {\bf Step 1:} First, we relate the population EM iteration defined by~\eqref{eq:SBpopulationEMiteration} to the vanilla population EM iteration defined by~\eqref{eq:EMiteration}; 
\item {\bf Step 2:} Then, we related the population EM iteration defined by~\eqref{eq:SBpopulationEMiteration} to the sample-based iteration.
\end{itemize}

\paragraph{Step 1:} To analyze the convergence of \eqref{eq:SBpopulationEMiteration}, we use Theorem \ref{thm:multid} for every component of the mixture. More precisely, let $\l_1^{(t)}$ and $\l_2^{(t)}$ be
\begin{align*}
  \l_1^{(t + 1)} & = \Exp_{\vx \sim \mathcal{N}(\m + \vd, \Sigma)}\left[ \tanh(\langle \l^{(t)}, \vx \rangle_{\Sigma}) \vx \right] \\
  \l_2^{(t + 1)} & = \Exp_{\vx \sim \mathcal{N}(\m - \vd, \Sigma)}\left[ \tanh(\langle \l^{(t)}, \vx \rangle_{\Sigma}) \vx \right].
\end{align*}

\noindent We know from Theorem \ref{thm:multid} that
\begin{align*}
  \norm{\l_1^{(t + 1)} - \m - \vd}_{\Sigma} & \le \kappa_1^{(t)} \norm{\l^{(t)} - \m - \vd}_{\Sigma} \\
  \norm{\l_2^{(t + 1)} - \m + \vd}_{\Sigma} & \le \kappa_2^{(t)} \norm{\l^{(t)} - \m + \vd}_{\Sigma}
\end{align*}

\noindent where
\begin{align*}
  \kappa_1^{(t)} & = \exp\left( - \frac { \min\left\{ \norm{\l^{(t)}}^2_{\Sigma}, \langle \m + \vd, \l^{(t)} \rangle_{\Sigma} \right\}^2 } { 2 \norm{\l^{(t)}}^2_{\Sigma} } \right) \\
  \kappa_2^{(t)} & = \exp\left( - \frac { \min\left\{ \norm{\l^{(t)}}^2_{\Sigma}, \langle \m - \vd, \l^{(t)} \rangle_{\Sigma} \right\}^2 } { 2 \norm{\l^{(t)}}^2_{\Sigma} } \right).
\end{align*}

\noindent But we have that
\begin{align*}
    \kappa_1^{(t)}, \kappa_2^{(t)} & \le \kappa'^{(t)} \triangleq \exp\left( - \frac { \min\left\{ \norm{\l^{(t)}}^2_{\Sigma}, \langle \m, \l^{(t)} \rangle_{\Sigma} \right\}^2 } { 2 \norm{\l^{(t)}}^2_{\Sigma} } + \frac{\eps^2}{2} \right)
\end{align*}

\noindent which implies that
\begin{align}
  \norm{\l_1^{(t + 1)} - \m - \vd}_{\Sigma} & \le \kappa'^{(t)} \norm{\l^{(t)} - \m - \vd}_{\Sigma} \label{eq:SBEMl1Contr} \\
  \norm{\l_2^{(t + 1)} - \m + \vd}_{\Sigma} & \le \kappa'^{(t)} \norm{\l^{(t)} - \m + \vd}_{\Sigma}. \label{eq:SBEMl2Contr}
\end{align}

\noindent We are ready now to bound the convergence of $\l^{(t)} = (\l_1^{(t)} + \l_2^{(t)}) / 2$
\begin{align*}
    \norm{\l^{(t + 1)} - \m}_{\Sigma} = & \norm{\frac{\l_1^{(t + 1)} + \l_2^{(t + 1)}}{2} - \m + \frac{\vd}{2} - \frac{\vd}{2}}_{\Sigma} \\
                                    \le & \frac{1}{2} \norm{\l_1^{(t + 1)} - \m - \vd}_{\Sigma} + \frac{1}{2} \norm{\l_2^{(t + 1)} - \m + \vd}_{\Sigma} \\
    \overset{\eqref{eq:SBEMl1Contr}, \eqref{eq:SBEMl2Contr}}{\le} & \frac{1}{2} \kappa'^{(t)} \left( \norm{\l^{(t)} - \m - \vd}_{\Sigma} + \norm{\l^{(t)} - \m + \vd}_{\Sigma} \right) \\
                                                                  & \le \kappa'^{(t)} \norm{\l^{(t)} - \m}_{\Sigma} + \kappa'^{(t)} \norm{\vd}_{\Sigma} \implies
\end{align*}

\begin{align} \label{eq:SBEMpopulContr}
    \norm{\l^{(t + 1)} - \m}_{\Sigma} \le \kappa'^{(t)} \norm{\l^{(t)} - \m}_{\Sigma} + \kappa'^{(t)} \norm{\vd}_{\Sigma}
\end{align}

  We can use the analysis of Section \ref{sec:multid} to see that $\kappa'^{(t)} \le \kappa'^{(0)}$. Therefore we also know that the population iteration satisfies:
\begin{align} \label{eq:SBEMpopulContr2}
    \norm{\l^{(t + 1)} - \m}_{\Sigma} \le \kappa \norm{\l^{(t)} - \m}_{\Sigma} + \kappa \norm{\vd}_{\Sigma}
\end{align}

\noindent where for simplicity we let $\kappa = \kappa'^{(0)}$. Now we have $\l^{(0)} = M \l$. Also $M$ is larger than $\mu$ and because of Lemma \ref{lem:initialEM} $\langle \hat{\l}, \hat{\m} \rangle_{\Sigma} \ge 1/2$ with high
probability, where $\mu = \norm{\m}_{\Sigma}$. Therefore
\[ \kappa \le \exp \left( - \frac{\mu^2}{4} + \frac{\eps^2}{2} \right) \]
\noindent and since $\eps \le \mu / 10$ we have that
\[ \kappa \le \exp \left( - \frac{\mu^2}{6} \right). \]
\noindent Also we have that $\norm{\vd}_{\Sigma} \le \eps$ and therefore \eqref{eq:SBEMpopulContr} becomes
\begin{align} \label{eq:SBEMpopulContr3}
    \norm{\l^{(t + 1)} - \m}_{\Sigma} \le e^{-\frac{\mu^2}{6}} \norm{\l^{(t)} - \m}_{\Sigma} + e^{-\frac{\mu^2}{6}} \eps.
\end{align}

\paragraph{Step 2:}  Our next goal is to show that the sample based iteration~\eqref{eq:SBsampleEMiteration} satisfies an equation similar
to \eqref{eq:SBEMpopulContr3}. We prove so by proving the concentration of  $\tilde{\l}^{(t)}$ around
its mean $\l^{(t)}$.

\begin{lemma} \label{lem:basicConcentration}
  Let $\tilde{\l}^{(t)} = \l^{(t)}$ then if we use 
$n = \tilde{O}\left(\frac{d}{\eps^2} \right)$ fresh samples at time step $t + 1$
we have that
\[ \Prob\left( \norm{\tilde{\l}^{(t + 1)} - \l}_{\Sigma} > \eps + \eps \cdot  \min\left\{ 1, \norm{\m}_{\Sigma} \right\} \cdot \norm{\l^{(t)} - \m}_{\Sigma} \right) \le \poly\left( \frac{\eps^2}{d} \right). \]
\end{lemma}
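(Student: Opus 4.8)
The plan is to read this as a pure concentration-of-measure statement. Because we use fresh samples at step $t+1$ and condition on $\tilde{\l}^{(t)} = \l^{(t)}$, the right-hand side of \eqref{eq:SBsampleEMiteration} is an empirical average of i.i.d.\ terms whose expectation is exactly the population update \eqref{eq:SBpopulationEMiteration}; that is, $\Exp[\tilde{\l}^{(t+1)}] = \l^{(t+1)}$, which is the vector denoted $\l$ in the statement. So it suffices to bound $\norm{\tilde{\l}^{(t+1)} - \Exp[\tilde{\l}^{(t+1)}]}_{\Sigma}$. As throughout, I would first apply $\vx \mapsto \Sigma^{-1/2}\vx$ to reduce to $\Sigma = I$, so the Mahalanobis norm becomes Euclidean and each stabilized sample is $\vx_i = \vy_i + s_i(\m + z_i \vd)$ with $\vy_i \sim \mathcal{N}(\vec{0}, I)$ and $s_i, z_i$ independent Rademacher signs.

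The crucial idea is to center the summands at the fixed point rather than bound them crudely. Writing $g(\vx) = \tanh(\langle \l^{(t)}, \vx \rangle)\vx$, I would split
\[ g(\vx) = \tanh(\langle \m, \vx \rangle)\vx + \big[\tanh(\langle \l^{(t)}, \vx \rangle) - \tanh(\langle \m, \vx \rangle)\big]\vx \]
and prove concentration of the empirical average of each piece separately. The first piece is the sample map evaluated exactly at the true parameter, and its fluctuation is the source of the unavoidable additive $\eps$. The second, ``remainder,'' piece is where the factor $\min\{1, \norm{\m}_{\Sigma}\}\,\norm{\l^{(t)} - \m}_{\Sigma}$ is generated: since $\tanh$ is $1$-Lipschitz, $|\tanh(\langle \l^{(t)}, \vx \rangle) - \tanh(\langle \m, \vx \rangle)| \le \min\{2, |\langle \l^{(t)} - \m, \vx \rangle|\}$, which contributes the linear dependence on $\norm{\l^{(t)} - \m}_{\Sigma}$, while the saturation of $\tanh$ (its derivative is exponentially small once $|\langle \m, \vx \rangle|$ is large) is what replaces a generic constant by $\min\{1, \norm{\m}_{\Sigma}\}$.

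For each piece I would control the Euclidean norm of the deviation by first bounding its inner product against an arbitrary fixed unit direction $\hat{\v}$ and then assembling directions. For fixed $\hat{\v}$, the scalar summand $\tanh(\cdot)\langle \hat{\v}, \vx_i \rangle$ (resp.\ its remainder analogue) is a bounded function times a sub-Gaussian linear form, hence sub-Gaussian/sub-exponential, so a Bernstein bound yields a tail $\exp(-\Omega(n s^2 / V))$ with $V$ the per-direction variance. The key quantitative step is the variance estimate: decomposing $\vx$ into its component along $\l^{(t)}$ (on which $\tanh$ depends) and the orthogonal component (an independent Gaussian), one sees that in the $(d-2)$-dimensional subspace orthogonal to $\mathrm{span}(\l^{(t)}, \m)$ the variance is $O(1)$, while the at-most-two ``signal'' directions carry larger variance but form only a low-dimensional contribution. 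To turn the per-direction tails into a bound on the full vector norm with failure probability $\poly(\eps^2/d)$, I would reuse the eigenvector-pair discretization already invoked for $\hat{\Sigma}_1$ (Lemma 25 of \cite{DaskalakisKT15}) rather than a full net, so that only $O(d^2)$ directions are union-bounded. With $V = O(1)$ in the bulk, $s = \Theta(\eps)$, and $n = \tilde{O}(d/\eps^2)$, the exponent $n s^2 / V = \Omega(d)$ dominates the $O(d^2)$ union bound and drives the total failure probability to $\poly(\eps^2/d)$.

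The main obstacle I anticipate is precisely the variance bound for the remainder term: one must simultaneously extract the linear factor $\norm{\l^{(t)} - \m}_{\Sigma}$ from the Lipschitzness of $\tanh$ and the gain $\min\{1, \norm{\m}_{\Sigma}\}$ from its saturation, while keeping the bulk ($d$-dimensional) variance at $O(1)$ so that $\tilde{O}(d/\eps^2)$ samples suffice. Everything else (the sub-Gaussian tail bounds, the eigenvector-pair union bound, and undoing the $\Sigma^{-1/2}$ transformation) is routine given the concentration machinery already developed for $\hat{\Sigma}_1, \hat{\Sigma}_2, \hat{\Sigma}_3$.
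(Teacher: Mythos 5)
Your high-level reading of the lemma is correct, and your plan is genuinely different from the paper's, but it has a gap at exactly the point that matters: the origin of the factor $\min\{1,\normm{\m}\}$. You attribute it to the saturation of $\tanh$, but saturation only helps when $\normm{\m}\gtrsim 1$; in the regime $\normm{\m}<1$ the argument of $\tanh$ is typically of order $\normm{\m}$ and the function is in its linear, unsaturated range, yet this is precisely the regime where the extra factor is needed downstream (in the proof of Theorem~\ref{thm:sampleBased} the population contraction is only $e^{-\mu^2/6}\approx 1-\mu^2/6$ and the theorem assumes only $\eps\le\mu/10$, so a multiplicative sampling error of $\eps\,\norm{\l^{(t)}-\m}_{\Sigma}$ without the extra factor of $\mu$ would destroy the contraction). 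Your Lipschitz bound on the remainder, $|\tanh(\langle\l^{(t)},\vx\rangle)-\tanh(\langle\m,\vx\rangle)|\le|\langle\l^{(t)}-\m,\vx\rangle|$, yields a per-sample second moment of order $\norm{\l^{(t)}-\m}^2(1+\normm{\m}^2)$ and hence a fluctuation of order $\eps\,\norm{\l^{(t)}-\m}_{\Sigma}$ for small $\normm{\m}$ --- off by the factor $\normm{\m}$ you need. The paper gets this factor by a different, geometric mechanism: it works coordinate-wise in a basis with $\v_1$ parallel to $\l^{(t)}$ and $\v_2$ perpendicular to $\l^{(t)}$ inside the span of $\l^{(t)}$ and $\m$, so that along $\v_2$ the summand factors as $\tanh(\lambda x_i)\cdot y_i$ with $x_i$ and $y_i$ independent, and the only ``signal'' in that direction is $\mu_2=\langle\m,\v_2\rangle=\langle\m-\l^{(t)},\v_2\rangle\le\norm{\l^{(t)}-\m}$; the factor $\norm{\l^{(t)}-\m}_{\Sigma}$ is this projection, not a Lipschitz estimate in the parameter, and the $\min\{1,\cdot\}$ comes from controlling the fluctuation of $\tanh(\lambda x_i)$ itself. (Your plan is likely salvageable by observing that the remainder is bounded by $2$ pointwise, so its fluctuation is $O(\eps)$ in every regime and can be absorbed into the additive term after rescaling constants --- but that is not the argument you wrote.)

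Two secondary mismatches with the paper's proof. First, the paper's tails are Chernoff/MGF bounds combined with Lemma~2.1 of Wainwright (Gaussian concentration via the gradient coupling), applied per coordinate with accuracy $\eps/\sqrt{d}$ and a union bound over the $d$ fixed basis directions; this, not Bernstein plus a net, is where the $\tilde O$ in the sample size is spent. Second, the eigenvector-pair discretization of Lemma~25 of Daskalakis et al.\ is a device for comparing quadratic forms of PSD matrices and does not control the norm of a vector-valued deviation: with per-direction accuracy $\Theta(\eps)$ and only $O(d^2)$ test directions you certify neither $\norm{\cdot}\le\eps$ (summing over a fixed basis of $d$ directions at accuracy $\eps$ each only gives $\eps\sqrt{d}$) nor a supremum over the sphere (which needs a net of $e^{O(d)}$ points). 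Either use the paper's fixed basis with per-coordinate accuracy $\eps/\sqrt{d}$, or a genuine net.
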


\begin{proof}
	Once again we assume that $\Sigma = I$ and for simplicity we set $\l = \l^{(t)}$, $\l' = \l^{(t + 1)}$ and
$\l = \tilde{\l} = \tilde{\l}^{(t)}$, $\tilde{\l}' = \tilde{\l}^{(t + 1)}$. Also we assume that are working on
a basis $\{\v_1, \v_2, \dots, \v_d\}$ such that all $\v_i$ for $i > 3$ are perpedicular to bot $\m$, $\l$ and
also $\v_2$ is perpedicular to $\l$ and $\v_1$ parallel to $\l$.

	\bigskip
	We first consider $i = 1$. In this case
	\[ \tilde{\lambda}'_1 = \frac{1}{n} \sum_{i = 1}^n \tanh(\langle \l, \vx_i\rangle) \langle \vx_i, \v_1 \rangle, \]
	\[ \lambda'_1 = \Exp[\tilde{\lambda}'_1] = \Exp_{\vx \sim p_{\m + \vd, \m - \vd}} \left[ \tanh(\langle \l, \vx\rangle) \langle \vx, \v_1 \rangle \right]. \]
	
\noindent Now we define $\mu_1 = \langle \m, \v_1 \rangle$, $\delta_1 = \langle \vd, \v_1 \rangle$ and we 
have
\[ \tilde{\lambda}'_1 = \frac{1}{n} \sum_{i = 1}^n \tanh( \lambda x_i ) x_i, \]
\[ \lambda'_1 = \Exp[\tilde{\lambda}'_1] = \Exp_{x \sim p_{\mu_1 + \delta_1, \mu_1 - \delta_1}} \left[ \tanh( \lambda x ) x \right] \]
\noindent where $x_i$ is distributed as $p_{\mu_1 + \delta_1, \mu_1 - \delta_1}$ and $\lambda = \norm{\l}$. For
simplicity we refer to $p_{\mu_1 + \delta_1, \mu_1 - \delta_1}$ as $\mathcal{D}_1$. Our goal is to bound the 
following probability
\begin{align} \label{eq:concLemmaToBound1}
	\Prob \left( \abs{\frac{1}{n} \sum_{i = 1}^n \tanh( \lambda x_i ) x_i - \Exp_{x \sim \mathcal{D}_1}[ \tanh( \lambda x) x ]} > \kappa \right)
\end{align}
\noindent to do so we use the general large deviation technique. Because of symmetry of $\mathcal{D}_1$, we 
have that the above probability is equal twice the probability
\[ \Prob \left( \exp \left( \theta \sum_{i = 1}^n \left( \tanh( \lambda x_i ) x_i - \Exp_{x \sim \mathcal{D}_1}[ \tanh( \lambda x) x ] \right) \right) > \exp \left( \theta n \kappa \right) \right). \]
\noindent Using Markov's inequality we get that
\begin{align} \label{eq:concLemmaToBound1h1}
  \Prob \left( \exp \left( \theta \sum_{i = 1}^n \left( \tanh( \lambda x_i ) x_i - \Exp_{x \sim \mathcal{D}_1}[ \tanh( \lambda x) x ] \right) \right) > \exp \left( \theta n \kappa \right) \right) \le \\
	\le \left( \frac{\Exp_{x \sim \mathcal{D}_1}\left[ \exp\left( \theta \left( \tanh(\lambda x) x - \Exp_{x \sim \mathcal{D}_1}\left[ \tanh(\lambda x) x \right] \right) \right) \right]}{\exp(\theta \kappa)} \right)^n
\end{align}
\noindent We therefore have to bound the quantity
\begin{equation} \label{eq:concLemmaToBound1h2}
\begin{split}
 \Exp_{x \sim \mathcal{D}_1}\left[ \exp\left( \theta\left( \tanh(\lambda x) x - \Exp_{x \sim \mathcal{D}_1}\left[ \tanh(\lambda x) x \right] \right) \right) \right] \le \\
 \le \Exp_{x \sim \mathcal{N}(\mu_1 + \delta_1, 1)}\left[ \exp\left( \theta \left( \tanh(\lambda x) x - \Exp_{x \sim \mathcal{N}(\mu_1 + \delta_1, 1)}\left[ \tanh(\lambda x) x \right] \right) \right) \right] \cdot \\
 	\cdot \exp\left( \frac{\theta}{2} \left( \Exp_{x \sim \mathcal{N}(\mu_1 + \delta_1, 1)}\left[ \tanh(\lambda x) x \right] - \Exp_{x \sim \mathcal{N}(\mu_1 - \delta_1, 1)}\left[ \tanh(\lambda x) x \right] \right) \right)
	\end{split}
\end{equation}

	The first term of \eqref{eq:concLemmaToBound1h2} is equal to
	\[ \Exp_{x \sim \mathcal{N}(0, 1)}\left[ \exp\left( \theta \left( \tanh(\lambda (x + \mu_1 + \delta_1)) (x + \mu_1 + \delta_1) - \Exp_{x \sim \mathcal{N}(0, 1)}\left[ \tanh(\lambda (x + \mu_1 + \delta_1)) (x + \mu_1 + \delta_1) \right] \right) \right) \right]. \]
\noindent Now we use the following Lemma 2.1 of \cite{Wainwright15}.
\begin{lemma2.1}
	Suppose that $f : \reals^d \to \reals^d$ is differentiable. Then for any convex function 
$\phi : \reals \to \reals$, we have
\[ \Exp\left[\phi\left( f(\vx) - \Exp\left[ f(\vx) \right] \right)\right] \le \Exp\left[ \phi\left( \frac{\pi}{2} \langle \nabla f(\vx), \vec{y} \rangle \right) \right]\]
where $\vx, \vec{y} \sim \mathcal{N}(0, I)$ are standard multivariate Gaussian, and independent.
\end{lemma2.1}

\noindent Combining this lemma with the fact that
\[ \frac{ \partial \tanh(\lambda (x + \mu_1 + \delta_1) ) (x + \mu_1 + \delta_1) }{\partial x} =  \tanh'(\lambda (x + \mu_1 + \delta_1) ) \lambda (x + \mu_1 + \delta_1) + \tanh(\lambda (x + \mu_1 + \delta_1) ) \le 2 \]
\noindent we get that
\[ \Exp_{x \sim \mathcal{N}(\mu_1 + \delta_1, 1)}\left[ \exp\left( \theta \left( \tanh(\lambda x) x - \Exp_{x \sim \mathcal{N}(\mu_1 + \delta_1, 1)}\left[ \tanh(\lambda x) x \right] \right) \right) \right] \le \exp\left( 5 \theta^2 \right).\]

\noindent Now for the second term of \eqref{eq:concLemmaToBound1h2} we notice that as we present in Section
\ref{sec:singled}
\[ \frac{\partial}{\partial \mu} \Exp_{x \sim \mathcal{N}(\mu, 1)}\left[ \tanh(\lambda x) x \right] = \Exp_{x \sim \mathcal{N}(\mu, 1)}\left[ \tanh'(\lambda x) \lambda x + \tanh(\lambda x) \right] \le 2 \]
\noindent which imlies using the mean value theorem that
\[ \exp\left( \frac{\theta}{2} \left( \Exp_{x \sim \mathcal{N}(\mu_1 + \delta_1, 1)}\left[ \tanh(\lambda x) x \right] - \Exp_{x \sim \mathcal{N}(\mu_1 - \delta_1, 1)}\left[ \tanh(\lambda x) x \right] \right) \right) \le \exp \left( \theta 2 \abs{\delta_1} \right). \]

\noindent Putting all together to \eqref{eq:concLemmaToBound1h2} we have that 

\[ \Exp_{x \sim \mathcal{D}_1}\left[ \exp\left( \theta\left( \tanh(\lambda x) x - \Exp_{x \sim \mathcal{D}_1}\left[ \tanh(\lambda x) x \right] \right) \right) \right] \le \exp \left( 5 \theta^2 + 2 \theta \abs{\delta_1} \right)\]

\noindent which implies that
\[ \Prob \left( \abs{\frac{1}{n} \sum_{i = 1}^n \tanh( \lambda x_i ) x_i - \Exp_{x \sim \mathcal{D}_1}[ \tanh( \lambda x) x ]} > \kappa \right) \le 2 \exp \left( n \left( 5 \theta^2 + 2 \theta \abs{\delta_1} - \theta \kappa \right) \right) \implies \]
\[ \Prob \left( \abs{\frac{1}{n} \sum_{i = 1}^n \tanh( \lambda x_i ) x_i - \Exp_{x \sim \mathcal{D}_1}[ \tanh( \lambda x) x ]} > \abs{\delta_1} + \tau \right) \le 2 \exp \left( - \frac{n \tau^2}{20} \right) \implies \]
\[ \Prob \left( \abs{\frac{1}{n} \sum_{i = 1}^n \tanh( \lambda x_i ) x_i - \Exp_{x \sim \mathcal{D}_1}[ \tanh( \lambda x) x ]} > \abs{\delta_1} + \frac{\eps}{\sqrt{d}} \right) \le 2 \exp \left( - \frac{n \eps^2}{20 d} \right) \]

\noindent Therefore with 
$n = O( \frac{d}{\eps^2} \log(d / \eps^2)) = \tilde{O}\left( \frac{d}{\eps^2} \right)$ we get
\begin{align} \label{eq:concBoundCoordinate1}
	\Prob \left( \abs{\frac{1}{n} \sum_{i = 1}^n \tanh( \lambda x_i ) x_i - \Exp_{x \sim \mathcal{D}_1}[ \tanh( \lambda x) x ]} > \abs{\delta_1} + \frac{\eps}{\sqrt{d}} \right) \le \poly \left( \frac{\eps^2}{d} \right)
\end{align}

	\bigskip
	We now consider $i = 2$. In this case
	\[ \tilde{\lambda}'_2 = \frac{1}{n} \sum_{i = 1}^n \tanh(\langle \l, \vx_i\rangle) \langle \vx_i, \v_2 \rangle, \]
	\[ \lambda'_2 = \Exp[\tilde{\lambda}'_2] = \Exp_{\vx \sim p_{\m + \vd, \m - \vd}} \left[ \tanh(\langle \l, \vx\rangle) \langle \vx, \v_2 \rangle \right]. \]
	
\noindent As before we define $\mu_2 = \langle \m, \v_2 \rangle$, $\delta_2 = \langle \vd, \v_2 \rangle$ and we 
have
\[ \tilde{\lambda}'_2 = \frac{1}{n} \sum_{i = 1}^n \tanh( \lambda x_i ) y_i, \]
\[ \lambda'_2 = \Exp[\tilde{\lambda}'_2] = \Exp_{x \sim p_{\mu_1 + \delta_1, \mu_1 - \delta_1}} \left[ \tanh( \lambda x ) \right] \Exp_{y \sim p_{\mu_2 + \delta_2, \mu_2 - \delta_2}} \left[ y \right] = \Exp_{x \sim \mathcal{D}_1} \left[ \tanh( \lambda x ) \right] \mu_2 \]
\noindent where $x_i$ is distributed as $p_{\mu_1 + \delta_1, \mu_1 - \delta_1}$, $y_i$ is distributed as 
$p_{\mu_2 + \delta_2, \mu_2 - \delta_2}$ and $\lambda = \norm{\l}$. For
simplicity we refer to $p_{\mu_2 + \delta_2, \mu_2 - \delta_2}$ as $\mathcal{D}_2$. Our goal is to bound the 
following probability
\begin{align} \label{eq:concLemmaToBound2}
	\Prob \left( \abs{\frac{1}{n} \sum_{i = 1}^n \tanh( \lambda x_i ) y_i - \Exp_{x \sim \mathcal{D}_1}[ \tanh( \lambda x) ] \mu_2} > \kappa \right)
\end{align}
\noindent to do so we use the general large deviation technique. Using the symmetry of $\mathcal{D}_1$ and 
$\mathcal{D}_2$ we have that the above probability is equal twice the
\[ \Prob \left( \exp \left( \theta \sum_{i = 1}^n \left( \tanh( \lambda x_i ) y_i - \Exp_{x \sim \mathcal{D}_1}[ \tanh( \lambda x) ] \mu_2 \right) \right) > \exp \left( \theta n \kappa \right) \right). \]
\noindent Using Markov's inequality we get that
\begin{align} \label{eq:concLemmaToBound2h1}
  \Prob \left( \exp \left( \theta \sum_{i = 1}^n \left( \tanh( \lambda x_i ) y_i - \Exp_{x \sim \mathcal{D}_1}[ \tanh( \lambda x) ] \mu_2 \right) \right) > \exp \left( \theta n \kappa \right) \right) \le \\
	\le \left( \frac{\Exp_{x \sim \mathcal{D}_1, y \sim \mathcal{D}_2}\left[ \exp\left( \theta \left( \tanh(\lambda x) y - \Exp_{x \sim \mathcal{D}_1}\left[ \tanh(\lambda x) \right] \mu_2 \right) \right) \right]}{\exp(\theta \kappa)} \right)^n
\end{align}

Using the fact that because of the initialization of EM $\lambda \ge \mu_1$ and by assumption $\delta_1 \le \mu_1$ and also let $\alpha = \min(1, \mu_1)$ we have to bound the quantity
\[ \Exp_{x \sim \mathcal{D}_1, y \sim \mathcal{D}_2}\left[ \exp\left( \theta\left( \tanh(\lambda x) y - \Exp_{x \sim \mathcal{D}_1}\left[ \tanh(\lambda x) \right] \mu_2 \right) \right) \right] \le \]
\[ \le \Exp_{x \sim \mathcal{N}(\mu_1 + \delta_1, 1), y \sim \mathcal{D}_2}\left[ \exp\left( \theta \left( \tanh(\lambda x) y - \Exp_{x \sim \mathcal{N}(\mu_1 + \delta_1, 1)}\left[ \tanh(\lambda x) \right] \mu_2 \right) \right) \right] \cdot \]
\[ \cdot \exp\left( \frac{\theta \mu_2}{2} \left( \Exp_{x \sim \mathcal{N}(\mu_1 + \delta_1, 1)}\left[ \tanh(\lambda x) \right] - \Exp_{x \sim \mathcal{N}(\mu_1 - \delta_1, 1)}\left[ \tanh(\lambda x) \right] \right) \right) \le \]
\[ \le \Exp_{x \sim \mathcal{N}(\mu_1 + \delta_1, 1), y \sim \mathcal{D}_2}\left[ \exp\left( \theta \left( \tanh(\lambda x) y - \Exp_{x \sim \mathcal{N}(\mu_1 + \delta_1, 1)}\left[ \tanh(\lambda x) \right] \mu_2 \right) \right) \right] \cdot \exp\left( \theta \abs{\mu_2} \abs{\delta_1} \right) \le \]
\[ \le \Exp_{x \sim \mathcal{N}(\mu_1 + \delta_1, 1), y \sim \mathcal{N}(\mu_2 + \delta_2, 1)}\left[ \exp\left( \theta \left( \tanh(\lambda x) y - \Exp_{x \sim \mathcal{N}(\mu_1 + \delta_1, 1)}\left[ \tanh(\lambda x) \right] (\mu_2 + \delta_2) \right) \right) \right] \cdot \]
\[ \cdot \exp\left( \theta \abs{\mu_2} \alpha \abs{\delta_1} \right) \cdot \exp\left( \theta \abs{\delta_2} \right) = \]
\[ = \Exp_{x \sim \mathcal{N}(\mu_1 + \delta_1, 1), y \sim \mathcal{N}(0, 1)}\left[ \exp\left( \theta \left( \tanh(\lambda x) (y + \mu_2 + \delta_2) - \Exp_{x \sim \mathcal{N}(\mu_1 + \delta_1, 1)}\left[ \tanh(\lambda x) \right] (\mu_2 + \delta_2) \right) \right) \right] \cdot \]
\[ \cdot \exp\left( \theta \abs{\mu_2} \alpha \abs{\delta_1} \right) \cdot \exp\left( \theta \abs{\delta_2} \right) = \]
\[ = \Exp_{x \sim \mathcal{N}(\mu_1 + \delta_1, 1), y \sim \mathcal{N}(0, 1)}\left[ \exp\left( \theta (\mu_2 + \delta_2) \left( \tanh(\lambda x) - \Exp_{x \sim \mathcal{N}(\mu_1 + \delta_1, 1)}\left[ \tanh(\lambda x) \right] \right) \right) \right] \cdot \]
\begin{equation} \label{eq:concLemmaToBound2h2}
 	\cdot \Exp_{x \sim \mathcal{N}(\mu_1 + \delta_1, 1), y \sim \mathcal{N}(0, 1)}\left[ \exp\left( \theta \tanh(\lambda x) y \right) \right] \cdot \exp\left( \theta \abs{\mu_2} \alpha \abs{\delta_1} \right) \cdot \exp\left( \theta \abs{\delta_2} \right).
\end{equation}

	The first term of \eqref{eq:concLemmaToBound2h2} is equal to
	\[ \Exp_{x \sim \mathcal{N}(\mu_1 + \delta_1, 1), y \sim \mathcal{N}(0, 1)}\left[ \exp\left( \theta (\mu_2 + \delta_2) \left( \tanh(\lambda x) - \Exp_{x \sim \mathcal{N}(\mu_1 + \delta_1, 1)}\left[ \tanh(\lambda x) \right] \right) \right) \right]. \]
	
\noindent Observe now that because of the initial conditions of EM at this step we have that $\mu_1 \ge \mu_2$
and $\lambda \ge \norm{\mu} \ge \mu_1$. Also it is not hard to prove that $\tanh(y^2) \ge 1 - \frac{1}{y}$. 
This means that if $\mu_2 \ge \poly \log(d / \eps^2)$ then with probability at least 
$\exp(-2 \poly \log(d / \eps^2))$ we will have that $\tanh(\lambda x) \ge 1 - \frac{1}{\mu_2}$. Now using the 
convexity of $\exp(\cdot)$ we have that the above term is less than
\[ \Exp_{x \sim \mathcal{N}(\mu_1 + \delta_1, 1)}\left[ \frac{\tanh(\lambda x) - \Exp[\tanh(\lambda x)] - 1 + \frac{1}{\mu_2}}{2} \exp(\theta (\mu_2 + \delta_2)) +\right. \]
\[ \left. + \frac{- \tanh(\lambda x) + \Exp[\tanh(\lambda x)] + 1}{2} \exp(- \theta (\mu_2 + \delta_2)) \right]. \]
\noindent Now using a simple Taylor expansion used in the proof of the Hoeffding bound we get that the first 
term of \eqref{eq:concLemmaToBound2h2} is less than or equal to
\[ \exp\left( \frac{\delta_2^2 \poly \log(d / \eps^2) \theta^2}{2} \right) \le \exp\left( \frac{\poly \log(d / \eps^2) \theta^2}{2} \right) \]
\noindent and this holds with high probability at least $\poly \left( \frac{\eps^2}{d} \right)$.

\noindent Now for the second term of \eqref{eq:concLemmaToBound2h2} we have that
\[ \Exp_{x \sim \mathcal{N}(\mu_1 + \delta_1, 1), y \sim \mathcal{N}(0, 1)}\left[ \exp\left( \theta \Exp_{x \sim \mathcal{N}(\mu_1 + \delta_1, 1)}\left[ \tanh(\lambda x) \right] y \right) \right] = \]
\[ = \exp\left( \frac{\theta^2}{2} \left( \Exp_{x \sim \mathcal{N}(\mu_1 + \delta_1, 1)}\left[ \tanh(\lambda x) \right] \right)^2 \right) \le \exp\left( \frac{\theta^2}{2} \right) \]

\noindent Putting all together to \eqref{eq:concLemmaToBound2h2} we have that 

\[ \Exp_{x \sim \mathcal{D}_1, y \sim \mathcal{D}_2}\left[ \exp\left( \theta\left( \tanh(\lambda x) y - \Exp_{x \sim \mathcal{D}_1}\left[ \tanh(\lambda x) \right] \mu_2 \right) \right) \right] \le \exp \left( \frac{1 + \poly \log(d / \eps^2)}{2} \theta^2 + (\abs{\mu_2} \alpha \abs{\delta_1} + \abs{\delta_2}) \theta \right)\]

\noindent which implies that
\[ \Prob \left( \abs{\frac{1}{n} \sum_{i = 1}^n \tanh( \lambda x_i ) y_i - \Exp_{x \sim \mathcal{D}_1}[ \tanh( \lambda x) ] \mu_2 } > \kappa \right) \le \]
\[ \le 2 \exp \left( n \left( \frac{1 + \poly \log(d / \eps^2)}{2} \theta^2 + (\abs{\mu_2} \alpha \abs{\delta_1} + \abs{\delta_2} - \kappa) \theta \right) \right) \implies \]
\[ \Prob \left( \abs{\frac{1}{n} \sum_{i = 1}^n \tanh( \lambda x_i ) y_i - \Exp_{x \sim \mathcal{D}_1}[ \tanh( \lambda x) ] \mu_2} > \eps \abs{\mu_2} \alpha + \abs{\delta_2} + \tau \right) \le 2 \exp \left( - \frac{n \tau^2}{4 \poly \log (d / \eps^2)} \right) \implies \]
\[ \Prob \left( \abs{\frac{1}{n} \sum_{i = 1}^n \tanh( \lambda x_i ) y_i - \Exp_{x \sim \mathcal{D}_1}[ \tanh( \lambda x) ] \mu_2} > \eps \abs{\mu_2} \alpha + \abs{\delta_2} + \frac{\eps}{\sqrt{d}} \right) \le 2 \exp \left( - \frac{n \eps^2}{4 d \poly \log (d / \eps^2)} \right) \]

\noindent Therefore with 
$n = 4 \frac{d}{\eps^2} \poly \log(d / \eps^2) = \tilde{O}\left( \frac{d}{\eps^2} \right)$ we get
\begin{align} \label{eq:concBoundCoordinate2}
	\Prob \left( \abs{\frac{1}{n} \sum_{i = 1}^n \tanh( \lambda x_i ) y_i - \Exp_{x \sim \mathcal{D}_1, y \sim \mathcal{D}_2}[ \tanh( \lambda x) y ]} > \eps \alpha \abs{\mu_2} + \abs{\delta_2} + \frac{\eps}{\sqrt{d}} \right) \le \poly \left( \frac{\eps^2}{d} \right)
\end{align}

\bigskip
	For any $i \ge 3$ we follow the same analysis as for the bound \eqref{eq:concBoundCoordinate2} but because of
the definition of the basis $\{\v_1, \dots, \v_d\}$ we have that $\mu_i = \langle \m, \v_i \rangle = 0$ and 
therefore 
\begin{align} \label{eq:concBoundCoordinate3}
	\Prob \left( \abs{\frac{1}{n} \sum_{i = 1}^n \tanh( \lambda x_i ) y_i - \Exp_{x \sim \mathcal{D}_1, y \sim \mathcal{D}_3}[ \tanh( \lambda x) y ]} > \abs{\delta_3} + \frac{\eps}{\sqrt{d}} \right) \le \poly \left( \frac{\eps^2}{d} \right)
\end{align}

	Finally if we combine \eqref{eq:concBoundCoordinate1}, \eqref{eq:concBoundCoordinate2} and
	\eqref{eq:concBoundCoordinate3} using the observation that $\norm{\l^{(t + 1)} - \m} \ge \abs{\mu_2}$ we get that

 \[ \Prob\left( \norm{\tilde{\l}^{(t + 1)} - \l} > \eps + \eps \min(\norm{\m}, 1) \norm{\l^{(t)} - \m} \right) \le \poly\left( \frac{\eps^2}{d} \right). \]
\end{proof}

\paragraph{Proof of Theorem~\ref{thm:sampleBased}:} Lemma \ref{lem:basicConcentration} and Equation \eqref{eq:SBEMpopulContr3} imply that, using $\tilde{O}(d/\eps^2 \mu^4)$ samples, we have:
\begin{align*}
    \norm{\tilde{\l}^{(t + 1)} - \m}_{\Sigma} \le \left(e^{-\frac{\mu^2}{6}} + \eps \min(\mu, 1) \right) \norm{\tilde{\l}^{(t)} - \m}_{\Sigma} + 2 \eps \mu^2.
\end{align*}

If $\mu > 1$ then 
\begin{align*}
  \left(e^{-\frac{\mu^2}{6}} + \eps \right) \le \frac{9}{10}.
\end{align*}

If $\mu \le 1$ then 
\begin{align*}
  \left(e^{-\frac{\mu^2}{6}} + \eps \mu \right) \le \left(e^{-\frac{\mu^2}{6}} + \frac{\mu^2}{20} \right) \le e^{-\frac{\mu^2}{10}}.
\end{align*}

These imply that
\begin{align} \label{eq:sampleContraction}
    \norm{\tilde{\l}^{(t + 1)} - \m}_{\Sigma} \le \max\left( e^{-\frac{\mu^2}{10}}, \frac{9}{10} \right) \norm{\tilde{\l}^{(t)} - \m}_{\Sigma} + 2 \eps \mu^2.
\end{align}

Therefore we need $\tilde{O}\left( \max \left( \frac{1}{\mu^2}, 1 \right) \log(1 / \eps)\right)$ steps in order to get error $3 \eps$. Since each step requires $\tilde{O}\left(\frac{d}{\eps^2 \mu^4}\right)$ samples,   Theorem \ref{thm:sampleBased} follows.
 
	\section*{Acknowledgements} We thank Sham Kakade for suggesting the problem to us, and for initial discussions. The authors were supported by NSF Awards CCF-0953960 (CAREER), CCF-1551875, CCF-1617730, and CCF-1650733, ONR Grant N00014-12-1-0999, and a Microsoft Faculty Fellowship.

	\bibliographystyle{alpha}
	\bibliography{ref}

\end{document}